\newenvironment{proofof}[1]{\noindent{\bf Proof of #1:}}{\hfill$\qed$\par}
\tikzset{
	arn/.style = {circle, white, draw=black, fill=gray!30, inner sep = 10.5},
	arn_t/.style = {circle, white, draw=black, very thick, fill=gray!30, inner sep = 11.0},
	arn_l/.style = {circle, white, draw=black, very thick, fill=black, inner sep = 2},
	photon/.style={draw=black, very thick, dashed},
	electron/.style={draw=black, very thick},
	tr/.style={buffer gate US,thick,draw,fill=gray!60,rotate=90,	anchor=east,minimum width=2.25cm},
	br/.style={buffer gate US,thick,draw,fill=gray!60,rotate=90,	anchor=east,minimum width=4.5cm},
	brr/.style={buffer gate US,draw,fill=gray!60,rotate=90,	anchor=east,minimum width=4.5cm, opacity = 0.6},
	trr/.style={buffer gate US,thick,draw,fill=gray!60,rotate=90,	anchor=east,minimum width=2.25cm, opacity = 0.6},
	trrr/.style={buffer gate US,draw,fill=white!60,rotate=90,	anchor=east,minimum width=2.25cm, opacity = 0.5}
}
\newtheorem{theorem}{Theorem}
\newtheorem{lemma}{Lemma}
\theoremstyle{definition}
\newtheorem{definition}{Definition}[section]
\newcommand{\EE}{\mathbb{E}}
\theoremstyle{plain}
\newtheorem{thm}{\protect\theoremname}
\theoremstyle{plain}
\newtheorem{claim}[thm]{\protect\claimname}
\theoremstyle{plain}
\theoremstyle{plain}
\theoremstyle{plain}
\newtheorem{cor}[thm]{\protect\corollaryname}
\theoremstyle{definition}
\theoremstyle{definition}
\theoremstyle{definition}
\theoremstyle{plain}
\providecommand{\claimname}{Claim}
\providecommand{\lemmaname}{Lemma}
\providecommand{\propositionname}{Proposition}
\providecommand{\theoremname}{Theorem}
\providecommand{\corollaryname}{Corollary}
\providecommand{\definitionname}{Definition}
\providecommand{\assumptionname}{Assumption}
\providecommand{\remarkname}{Remark}
\global\long\def\RR{\mathbb{R}}
\newcommand{\NTKS}{{\sc NTKSketch} }
\newcommand{\CNTKS}{{\sc CNTKSketch} }
\newcommand{\TSRHT}{{\sc TensorSRHT} }
\newcommand{\SRHT}{{\sc SRHT} }
\newcommand{\PolyS}{{\sc PolySketch} }
\title{Learning with Neural Tangent Kernels in Near Input Sparsity Time}
\author{Amir Zandieh\\Max-Planck Institute for Informatics\\ azandieh@mpi-inf.mpg.de}
	\gdef\xxxmark{%
		\expandafter\ifx\csname @mpargs\endcsname\relax 
		\expandafter\ifx\csname @captype\endcsname\relax 
		\marginpar{xxx}
		\else
		xxx 
		\fi
		\else
		xxx 
		\fi}
	\gdef\xxx{\@ifnextchar[\xxx@lab\xxx@nolab}
	\long\gdef\xxx@lab[#1]#2{{\bf [\xxxmark #2 ---{\sc #1}]}}
	\long\gdef\xxx@nolab#1{{\bf [\xxxmark #1]}}
\begin{document}
\maketitle

\begin{abstract}
	The Neural Tangent Kernel (NTK) characterizes the behavior of infinitely wide neural nets trained under least squares loss by gradient descent~\cite{jacot2018neural}. 
	However, despite its importance, the super-quadratic runtime of kernel methods limits the use of NTK in large-scale learning tasks.
	To accelerate kernel machines with NTK, we propose a near input sparsity time algorithm that maps the input data to a randomized low-dimensional feature space so that the inner product of the transformed data approximates their NTK evaluation. Our transformation works by sketching the polynomial expansions of arc-cosine kernels.
	Furthermore, we propose a feature map for approximating the convolutional counterpart of the NTK~\cite{arora2019exact}, which can transform any image using a runtime that is only linear in the number of pixels.
	We show that in standard large-scale regression and classification tasks a linear regressor trained on our features outperforms trained Neural Nets and Nystrom approximation of NTK kernel.

\end{abstract}

\section{Introduction}
\label{sec:intro}
The Neural Tangent Kernel (NTK), introduced by \cite{jacot2018neural}, characterizes the behavior of infinitely wide fully connected neural nets trained by gradient descent. In fact, for least-squares loss, an ultra-wide network trained by gradient descent is equivalent to kernel regression with respect to the NTK. This kernel is shown to be central in theoretical understanding of the generalization and optimization properties of neural networks~\cite{cao2019generalization,neyshabur2015search,allen2019convergence,arora2019fine,du2018gradient}.
More recently, \cite{arora2019exact} extended this result by proving an analogous equivalence between convolutional neural nets with an infinite number of channels and Convolutional NTK (CNTK).
Besides the aforementioned theoretical advances, several papers have explored the algorithmic use of this kernel. ~\cite{arora2019harnessing} and ~\cite{geifman2020similarity} showed that NTK-based kernel models can outperform trained nets with finite width. 
Furthermore, \cite{arora2019exact} showed that kernel regression using CNTK sets an impressive performance record on CIFAR-10 for kernel methods without trainable kernels.

While plugging these kernels into kernelized learning methods, such as kernel regression or kernel SVM, can effectively capture the power of a fully-trained deep net by a pure kernel-based method, this approach faces serious scalability challenges as it operate on the kernel matrix (Gram matrix of data), whose size scales quadratically in the number of training samples.
In particular, for a dataset of $n$ images $x_1,x_2, \ldots x_n \in \RR^{d_1 \times d_2}$, only writing down the kernel matrix corresponding to the CNTK requires $\Omega\left(d_1^2 d_2^2 \cdot n^2\right)$ operations \cite{arora2019exact}. Running regression or PCA on the resulting kernel matrix takes additional cubic time in $n$.
We address this issue by designing feature maps with low-dimensional feature spaces so that the inner product between a pair of transformed data points approximates their kernel evaluation for the important NTK and CNTK kernels.

\paragraph{Problem Definition.} For the NTK  $\Theta_{ntk}:\RR^d \times \RR^d \to \RR$, we propose an explicit randomized feature map $\Psi_{ntk} : \RR^{d} \to \RR^s$ with small target dimension $s$ such that, for any pair of points $y,z \in \RR^d$:
\[ \Pr\left[ \left< \Psi_{ntk}(y) , \Psi_{ntk}(z) \right> \in (1\pm \epsilon) \cdot \Theta_{ntk}(y,z) \right] \ge 1-\delta, \]
We construct feature spaces that uniformly approximate the NTK $\Theta_{ntk}(\cdot , \cdot)$ to within $(1\pm \epsilon)$ multiplicative factor with probability $1 - \delta$ with only $s = O\left(\epsilon^{-2} \log \frac{1}{\delta} \right)$ dimensions. Furthermore, our feature map can transform any data point $x\in \RR^d$ using a runtime which is nearly proportional to  the sparsity of that point, i.e., $\widetilde{O}(\text{nnz}(x))$ runtime.
Additionally, we present a mapping that achieves analogous approximation guarantee for the CNTK corresponding to a convolutional deep net that can transform any image $x\in \RR^{d_1\times d_2}$ using a runtime that scales only linearly in the number of pixels of the image, i.e., $\widetilde{O}(d_1d_2)$ runtime.

Consequently, we can simply transform the input dataset with our feature maps, and then apply fast linear learning methods to approximate the answer of the corresponding nonlinear kernel method with NTK kernels. In particular, the kernel regression can be solved approximately in time $O(s^2 \cdot n)$, which is significantly faster than the exact kernel regression when $n$ is large.

\subsection{Overview of Techniques and Contributions}
One of our main results is an efficient data oblivious feature map for uniformly approximating the NTK using a runtime that is linearly proportional to the sparsity of the input dataset.
The key insight behind the design of our sketching methods is the fact that the NTK of a fully connected deep net with ReLU activation is a normalized dot-product kernel which can be fully characterized by a univariate function $K_{relu}:[-1,1] \to \RR$, plotted in Figure~\ref{fig:relu-ntk}. As shown by \cite{bietti2019inductive}, this function is obtained by composition and product of arc-cosine kernels, therefore, the NTK feature map is the tensor product of the features of arc-cosine kernels.

Our methods rely on sketching the feature space of arc-cosine kernels defined by their Taylor expansion. By careful truncation of their Taylor series, we approximate the arc-cosine kernels with bounded-degree polynomial kernels. 
Because the feature space of a polynomial kernel is the tensor product of its input space, its dimensionality is exponential in the degree of the kernel. Fortunately,~\cite{ahle2020oblivious} have developed a linear sketch known as \PolyS that can reduce the dimensionality of high-degree tensor products very efficiently, therefore, we can sketch the resulting polynomial kernels using this technique. We then combine the transformed features from consecutive layers by further sketching their tensor products. Note that na\"ive construction of the features would incur an exponential loss in the depth of the NTK, due to the tensor product of features generated in consecutive layers. Thus, utilizing \PolyS for efficiently sketching the tensor products is crucial. 

Our method also applies to the CNTK and can transform images using a runtime that is linear in the number of pixels of the input images.
The main difference between our sketching methods for NTK and CNTK is that in the latter case we have an extra operation which sketches the direct sum of the features of neighbouring pixels at each layer that precisely corresponds to the convolution operation in CNNs. We carefully analyze the errors introduced by polynomial approximations and various sketching steps in our algorithms and also bound their runtimes in Theorems~\ref{mainthm-ntk}~and~\ref{maintheorem-cntk}.

This is the first data oblivious method that can quickly sketch the NTK and CNTK kernels. In particular, our CNTK sketch maps an entire dataset of $n$ images of size $d_1 \times d_2$ pixels to an $s=\widetilde{O}(\epsilon^{-2})$-dimensional feature space using a total runtime of $O\left( \text{poly}(L, \epsilon^{-1}) \cdot d_1d_2 \cdot n \right)$. Thus, we can approximately solve the CNTK kernel regression problem in total time $O\left( \text{poly}(L, \epsilon^{-1}) \cdot d_1d_2 \cdot n \right)$. This is extremely faster than exact kernel regression, which takes $\Omega(L \cdot (d_1d_2)^2 \cdot n^2 + n^{3})$.

Finally, we investigate our NTK feature maps on standard large-scale regression and classification datasets and show that our method competes favorably with trained MLPs and Nystrom approximation of the NTK. Furthermore, we classify CIFAR-10 dataset at least 150$\times$ faster than exact CNTK.

\subsection{Related Work}
There has been a long line of work on the correspondence between deep or convolutional neural nets and kernel machines \cite{lee2017deep,matthews2018gaussian,novak2018bayesian,garriga2018deep,yang2019scaling}. Furthermore, in recent years there has been a lot of effort in understanding the convergence and generalization of neural networks in the over-parameterized regime \cite{du2019gradient,du2018gradient,li2018learning,du2019width,allen2019learning,allen2019convergence,cao2019generalization,arora2019fine,yang2019scaling,arora2019harnessing, bietti2019inductive}, all of which are related to the notion of neural tangent kernel \cite{jacot2018neural}.
More recently, \cite{arora2019exact} gave an efficient procedure to compute the convolutional extension of neural tangent kernel, which they name CNTK. 

So far, there is little known about how to speed up learning with these kernels. 
\cite{novak2018bayesian} tried accelerating CNTK computations via Monte Carlo methods by taking the gradient of a randomly initialized CNN with respect to its weights. Although they do not theoretically bound the number of required features, the fully-connected version of this method is analyzed in \cite{arora2019exact}. Particularly, for the gradient features to approximate the NTK up to $\varepsilon$, the network width needs to be $\Omega(\frac{L^{6}}{\varepsilon^4} \log \frac{L}{\delta})$, thus, transforming a single vector $x\in \mathbb{R}^{d}$ requires $\Omega(\frac{L^{13}}{\varepsilon^8}  \log^2 \frac{L}{\delta} + \frac{L^{6}}{\varepsilon^4}  \log \frac{L}{\delta} \cdot {\rm nnz}(x))$ operations, which is slower than our Theorem~\ref{mainthm-ntk} by a factor of $L^3/\varepsilon^2$.
Furthermore, \cite{arora2019exact} shows that the performance of these random gradients is worse than exact CNTK by a large margin, in practice. 
More recently, \cite{lee2020generalized} proposed leverage score sampling for the NTK, however, their work is primarily theoretical and suggests no practical way of sampling the features.
Another line of work on NTK approximation is an explicit feature map construction via tensor product proposed by \cite{bietti2019inductive}. These explicit features can have infinite dimension in general and even if one uses a finite-dimensional approximation to the features, the computational gain of random features will be lost due to expensive tensor product operations.

In the literature, much work has focused on scaling up kernel methods by producing low-rank approximations to kernel matrices \cite{alaoui2015fast,avron2017faster,pmlr-v108-zandieh20a}. A popular approach for accelerating kernel methods is based on Nystrom sampling. We refer the reader to the work of~\cite{musco2017recursive} and the references therein. This method can efficiently produce a spectral approximation to the kernel matrix, however, it is not data oblivious and needs multiple passes over the entire dataset in order to select the landmarks. In contrast, instead of approximating the kernel matrix, our work approximates the kernel function directly using a \emph{data oblivious} random mapping, providing nice advantages, such as one-round distributed protocols and single-pass streaming algorithms.

Another popular line of work on kernel approximation problem is based on the Fourier features method~\cite{rahimi2007random}, which works well for shift-invariant kernels and with some modifications can embed the Gaussian kernel near optimally~\cite{avron2017random}. However, it only works for shift-invariant kernels and does not apply to NTK or CNTK.

In linear sketching literature,~\cite{avron2014subspace} proposed a subspace embedding for the polynomial kernel. The runtime of this method, while nearly linear in sparsity of the input dataset, scales exponentially in kernel's degree. Recently,~\cite{ahle2020oblivious} improved this exponential dependence to polynomial which enabled them to sketch high-degree polynomial kernels and led to near-optimal embeddings for Gaussian kernel. In fact, this sketching technology constitutes one of the main ingredients of our proposed methods. 
Additionally, combining sketching with leverage score sampling can improve the runtime of the polynomial kernel embeddings~\cite{woodruff2020near}.

\section{Preliminaries and Notations}

\begin{definition}[Tensor product]
	Given $x \in \RR^m$ and $y \in \RR^n$ we define the twofold tensor product $x \otimes y$ as 

	\small
	\[	x\otimes y = \begin{bmatrix} x_1 y_1 & x_1 y_2 & \cdots & x_1 y_n\\ x_2 y_1 & x_2 y_2 & \cdots & x_2 y_n\\ \vdots & \vdots &  & \vdots \\ x_m y_1 & x_m y_2 & \cdots & x_m y_n \end{bmatrix} \in \RR^{m\times n}.\]
	\normalsize
	Tensor product can be naturally extended to matrices, which results in $4$-dimensional objects, i.e., for $X \in \RR^{m\times n}$ and $Y \in \RR^{m'\times n'}$, the tensor product $X \otimes Y$ is in $\RR^{m\times n \times m' \times n'}$.\\
	Although tensor products are multidimensional objects, it is often convenient to associate them with single-dimensional vectors. In particular, we often associate $x\otimes y$ with a single dimensional vector $(x_1y_1,x_2y_1, \ldots x_my_1, x_1y_2,x_2y_2, \ldots x_my_2, \ldots x_my_n)$.\\
	Given $v_1 \in \RR^{d_1}, v_2 \in \RR^{d_2}, \ldots v_p \in \RR^{d_p}$ we define the $p$-fold tensor product $v_1 \otimes v_2 \otimes \ldots v_p \in \RR^{d_1d_2\ldots d_p}$ in the same fashion. 
	For shorthand, we use the notation $x^{\otimes p}$ to denote $\underbrace{x\otimes x \otimes \ldots x}_{p \text{ terms}}$, the $p$-fold self-tensoring of $x$.
\end{definition}

Another related operation that we use is the \emph{direct sum} of vectors: $x\oplus y:= \begin{bmatrix}
	x\\
	y
\end{bmatrix}$. 
We also use the notation $\odot$ to denote the hadamard product of tensors. For instance, the hadamard product of $3$-dimensional tensors $X , Y \in \RR^{m \times n \times d}$ is a tensor in $\RR^{m \times n \times d}$ defined as $[X\odot Y]_{i,j,l}:=X_{i,j,l}\cdot Y_{i,j,l}$. 
We need notation for \emph{slices} of a tensor. For instance, for a $3$-dimensional tensor $Y \in \RR^{m \times n \times d}$ and every $l \in [d]$, we denote by $y_{(:,:,l)}$ the $m \times n$ matrix that is defined as $\left[Y_{(:,:,l)}\right]_{i,j} := Y_{i,j,l}$ for $i \in[m], j \in [n]$. 
For two univariate functions $f$ and $g$ we denote their twofold composition by $f\circ g$, defined as $f\circ g(\alpha):= f(g(\alpha))$.
Finally, we use $\mathcal{N}(\mu,\sigma^2)$ to denote the normal distribution with mean $\mu$ and variance $\sigma^2$.

\subsection{Sketching Background}
We use the Subsampled Randomized Hadamard Transform (\SRHT)~\cite{ailon2009fast} to reduce the dimensionality of the intermediate vectors that arise in our computations. The \SRHT is a norm-preserving dimensionality reduction that can be computed in near linear time using the FFT algorithm.

\begin{lemma}[\SRHT Sketch]\label{lem:srht}
	For every positive integer $d$ and every $\epsilon, \delta>0$, there exists a distribution on random matrices $S \in \RR^{m \times d}$ with $m = O\left(\frac{1}{\epsilon^2} \cdot \log^2 \frac{1}{\epsilon\delta} \right)$, called \SRHT, such that for any vector $x \in \RR^{d}$,
	$\Pr\left[ \|S x\|_2^2 \in (1\pm \epsilon)\|x\|_2^2 \right] \ge 1 - \delta$.
	Moreover, $S x$ can be computed in time $O\left(\frac{1}{\epsilon^2} \cdot \log^2 \frac{1}{\epsilon\delta} + d\log d \right)$.
\end{lemma}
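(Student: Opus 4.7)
The plan is to construct $S$ as the standard composition $S = \sqrt{d/m}\cdot P\cdot H\cdot D$, where $D\in\RR^{d\times d}$ is a diagonal matrix of independent Rademacher $\pm 1$ variables, $H\in\RR^{d\times d}$ is the normalized Walsh--Hadamard transform (orthogonal, entries $\pm1/\sqrt d$), and $P\in\RR^{m\times d}$ is a row-sampling matrix that picks $m$ coordinates uniformly at random. Because $HD$ is orthogonal, $\|HDx\|_2 = \|x\|_2$ holds deterministically, so the task reduces to showing that subsampling and rescaling preserves this norm up to $(1\pm\epsilon)$ factor with probability $1-\delta$. The two-part strategy is classical: first argue that $HD$ \emph{flattens} $x$, then show that uniform sampling from a flattened vector concentrates around its true squared norm.

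For the flattening step I would fix $y:=HDx$ and observe that each coordinate $y_i = \sum_j H_{ij} D_{jj} x_j$ is a Rademacher sum of variance $\|x\|_2^2/d$ with bounded increments $|H_{ij}x_j|\le \|x\|_2/\sqrt d$. Hoeffding's inequality followed by a union bound over $i\in[d]$ yields
\[
\Pr\!\left[\|y\|_\infty \;\ge\; C\sqrt{\tfrac{\log(d/\delta)}{d}}\,\|x\|_2\right] \;\le\; \delta/2
\]
for a universal $C$. Conditioning on this flattening event, the samples $Z_k := (d/m)\,y_{i_k}^2$ (for $i_k$ uniform on $[d]$) are independent, nonnegative, have mean $\|x\|_2^2/m$, and are each bounded by $(C^2\log(d/\delta)/m)\,\|x\|_2^2$. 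Applying Bernstein's inequality to $\|Sx\|_2^2=\sum_{k=1}^m Z_k$ gives a deviation bound of $2\exp(-\Omega(\epsilon^2 m/\log(d/\delta)))$, so setting $m=\Theta(\epsilon^{-2}\log(d/\delta)\log(1/\delta))$ drives the total failure probability below $\delta$. The improvement from the crude $\log d$ to the stated $\log(1/(\epsilon\delta))$ factor comes from a standard truncation trick: coordinates of $y$ smaller than $\epsilon\|x\|_2/\sqrt{m}$ contribute negligibly to $\|Sx\|_2^2$, so one only needs to control the $O(\epsilon^{-2})$ largest coordinates, replacing $\log d$ by $\log(1/(\epsilon\delta))$.

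The runtime claim is immediate from the construction: $Dx$ costs $O(d)$, applying $H$ to $Dx$ costs $O(d\log d)$ via the recursive Hadamard/FFT butterfly, and extracting $m$ sampled coordinates costs $O(m)=O(\epsilon^{-2}\log^2(1/(\epsilon\delta)))$, for a total of $O(d\log d+\epsilon^{-2}\log^2(1/(\epsilon\delta)))$. The main obstacle is obtaining the sharper $\log(1/(\epsilon\delta))$ factor in place of $\log(d/\delta)$; a naive Hoeffding+Bernstein pipeline leaves a $\log d$ dependence. Tightening it requires either the truncation argument sketched above or an invocation of a matrix Bernstein bound as in Tropp's analysis, both of which are carried out in the references \cite{ailon2009fast} that we cite for the lemma.
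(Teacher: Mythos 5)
The paper never proves Lemma~\ref{lem:srht} itself; it is stated as a black box and attributed directly to \cite{ailon2009fast}, so there is no internal proof to compare against. Your skeleton (take $S=\sqrt{d/m}\,PHD$, show $HD$ flattens $x$, then apply a Chernoff/Bernstein bound to the subsampled coordinates) is the right one and the flattening and Bernstein steps are correctly set up. The gap is in how you try to eliminate the $\log d$ factor.

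The ``truncation trick'' you sketch does not work. Thresholding at $|y_i|<\epsilon\|x\|_2/\sqrt m$ does not isolate $O(\epsilon^{-2})$ heavy coordinates: because $\|y\|_2^2=\|x\|_2^2$, as many as $m/\epsilon^2$ coordinates can sit above that threshold, and after the $\sqrt{d/m}$ rescaling the discarded coordinates do \emph{not} contribute negligibly to $\|Sx\|_2^2$ (each sampled small coordinate contributes $(d/m)y_{i_k}^2$, and the sum of $m$ such terms can still be $\Theta(d\epsilon^2/m)\|x\|_2^2$, not $O(\epsilon^2)\|x\|_2^2$ unless $m\ge d$). Moreover, which coordinates of $y$ end up being large depends on $D$, so you cannot ``only control the large ones'' without still paying a union bound over $[d]$. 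The actual mechanism for removing $\log d$ is simpler and is a reordering of quantifiers: since the sampling matrix $P$ is independent of $D$, first fix the $m$ sampled indices $i_1,\ldots,i_m$ and then apply Hoeffding plus a union bound over \emph{only those $m$ coordinates} of $y$. This gives $\max_k |y_{i_k}|\le C\sqrt{\log(m/\delta)/d}\,\|x\|_2$ with probability $1-\delta/2$, after which your Bernstein step yields a failure probability $\exp\bigl(-\Omega(\epsilon^2 m/\log(m/\delta))\bigr)$. Solving $m\gtrsim \epsilon^{-2}\log(m/\delta)\log(1/\delta)$ self-consistently produces $m=O\bigl(\epsilon^{-2}\log^2(1/(\epsilon\delta))\bigr)$, matching the lemma. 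The runtime analysis in your last paragraph is fine.
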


Our method relies on linear sketches that can be applied to high-degree polynomial kernels. We use the \PolyS introduced in \cite{ahle2020oblivious} which preserves the norm of vectors in $\RR^{d^p}$ and can be applied to tensor product vectors of the form $v_1 \otimes v_2 \otimes \ldots v_p$ very quickly.
The main building block of this sketch is \TSRHT which is a generalization of the \SRHT that can be applied to the tensor product of two vectors in near linear time using FFT algorithm~\cite{ahle2020oblivious}.
The \PolyS extends the idea behind \TSRHT to high-degree tensor products by recursively sketching pairs of vectors in a binary tree structure.
The following Lemma, summarizes Theorem 1.2 of \cite{ahle2020oblivious} and is proved in Appendix~\ref{appendix-sketch-prelims}.

\begin{lemma}[\PolyS]\label{soda-result}
	For every integers $p,d\ge 1$, every $\epsilon, \delta>0$, there exists a distribution on random matrices $Q^p \in \RR^{m \times d^p}$ with $m = O\left(\frac{p}{\epsilon^2}\log^3 \frac{1}{\epsilon\delta} \right)$, called \emph{degree-$p$ \PolyS}, such that the following hold,
	
	\begin{enumerate}[wide, labelwidth=!, labelindent=0pt]
		\item For any $y \in \RR^{d^p}$, $\Pr\left[ \|Q^p y\|_2^2 \in (1\pm \epsilon)\|y\|_2^2 \right] \ge 1 - \delta$.
		\item For any $x \in \RR^d$, if $e_1\in\RR^d$ is the standard basis vector along the first coordinate, the total time to compute $Q^p \left(x^{\otimes p-j} \otimes {e}_1^{\otimes j}\right)$ for all $j=0,1, \cdots p$ is
		$$O\left( {\frac{p^2 \log^2\frac{p}{\epsilon}}{\epsilon^2}} \log^3\frac{1}{\epsilon \delta} + \min\left\{\frac{p^{3/2}}{\epsilon}  \log \frac{1}{\delta} \cdot \text{nnz}(x), pd\log d\right\} \right).$$
		\item For any collection of vectors $v_1, v_2, \ldots v_p \in \RR^d$, the time to compute $Q^p \left(v_1 \otimes v_2 \otimes \ldots \otimes v_p\right)$ is bounded by $O\left( {\frac{p^2 \log\frac{p}{\epsilon}}{\epsilon^2}} \log^3\frac{1}{\epsilon \delta} + \frac{p^{3/2}}{\epsilon} \cdot d \cdot \log \frac{1}{\delta} \right)$.
	\end{enumerate}
\end{lemma}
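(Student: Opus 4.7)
The plan is to realize $Q^p$ as a balanced binary tree of depth $\lceil \log_2 p \rceil$ whose $p$ leaves apply independent base sketches to the $p$ factors of the input and whose $p-1$ internal nodes are independent \TSRHT sketches, following the recursive construction of Theorem 1.2 in \cite{ahle2020oblivious}. For the norm-preservation statement of part 1, I would set each internal node's accuracy parameter to $\epsilon' = \Theta(\epsilon/\log p)$ and its failure probability to $\delta' = \Theta(\delta/p)$; a union bound over the $O(p)$ internal \TSRHT sketches then ensures, with probability at least $1-\delta$, that every node-wise sketch preserves its input norm to within $(1 \pm \epsilon')$. Multiplying these factors along any root-to-leaf path gives $(1 \pm \epsilon')^{O(\log p)} = (1 \pm \epsilon)$, and plugging $\epsilon', \delta'$ into the \TSRHT dimension bound recovers the claimed $m = O(p\,\epsilon^{-2} \log^3(1/\epsilon\delta))$.

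For part 3, applying $Q^p$ to a pure product $v_1 \otimes \cdots \otimes v_p$ decomposes into $p$ independent leaf sketches, each costing roughly $O(d \cdot m_0)$ with per-leaf intermediate dimension $m_0 = O(p^{1/2}\epsilon^{-1}\log(1/\delta))$, plus $p-1$ internal \TSRHT applications at cost $O(m \log m)$ each via the FFT-based \TSRHT construction of \cite{ahle2020oblivious}. Summing these contributions directly yields the stated runtime.

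The more delicate part is claim 2. The key observation is that across the $p+1$ queries $x^{\otimes p-j} \otimes e_1^{\otimes j}$, each leaf receives only one of two possible inputs ($x$ or $e_1$), so sketching every leaf on both possibilities costs only $O(p \cdot m_0 \cdot \text{nnz}(x))$ in the sparse regime or $O(p d \log d)$ via dense FFT, matching the $\min$ term in the stated bound. Internally, a bottom-up dynamic-programming traversal exploits the fact that every internal tree node has at most one distinct non-trivial input per position of the $x$-to-$e_1$ transition within its subtree, so across all $p+1$ queries the total number of \TSRHT invocations is $O(p)$, each costing $O(m \log m)$; combined, this gives the internal cost $O(p^2\epsilon^{-2} \log^2(p/\epsilon) \log^3(1/\epsilon\delta))$.

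The main obstacle is the error-propagation bookkeeping that keeps the intermediate dimension $m$ only linear in $p$: a naive per-level $(1\pm \epsilon')$ composition would force $\epsilon' = O(\epsilon/p)$ and inflate $m$ by an extra factor of $p$. Avoiding this requires the more refined moment-based analysis of the full tree from \cite{ahle2020oblivious} rather than a pointwise union bound. The remaining verification is largely mechanical: checking that the dynamic-programming caching in part 2 really enumerates only $O(p)$ node-level sketches across all queries (not $\Omega(p^2)$), and that the specific polylogarithmic factors in the quoted runtimes match what the construction achieves.
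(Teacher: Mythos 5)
Your high-level approach is the same as the paper's: realize $Q^p$ as the binary-tree sketch of Theorem~1.2 in \cite{ahle2020oblivious} (OSNAP-type sketches at the leaves, \TSRHT at internal nodes), cite that theorem for norm preservation, and traverse the tree for the runtime claims. Part~3 of your argument matches the paper directly. But there are two concrete issues.

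First, your part~1 argument is internally inconsistent and does not deliver the stated dimension bound. You first propose a pointwise union bound with per-node accuracy $\epsilon' = \Theta(\epsilon/\log p)$, and then say that this ``recovers the claimed $m = O(p\,\epsilon^{-2}\log^3(1/\epsilon\delta))$.'' It does not: with $\epsilon' = \epsilon/\log p$ the per-node target dimension from the \TSRHT bound is polylogarithmic in $p$, not linear. You then contradict yourself by remarking a ``naive per-level composition would force $\epsilon' = O(\epsilon/p)$.'' More importantly, a pointwise node-by-node norm-preservation argument does not straightforwardly apply to a general $y\in\RR^{d^p}$ (as opposed to a pure rank-one tensor), which is exactly what part~1 requires. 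The paper simply invokes Theorem~1.2 of \cite{ahle2020oblivious} as a black box here; your closing paragraph correctly concedes that this moment-based analysis is what is actually needed, so the earlier ``proof sketch'' should be discarded rather than presented as recovering the bound.

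Second, your count of $O(p)$ total \TSRHT invocations in part~2 is wrong; it should be $O(p\log p)$, and in fact your own stated final runtime requires $O(p\log p)$. An internal node whose subtree covers $k$ consecutive leaves sees $\Theta(k)$ distinct input configurations as the $x$-to-$e_1$ transition sweeps across all $p+1$ queries, and summing $k$ over the internal nodes of a balanced binary tree with $p$ leaves gives $\Theta(p\log p)$, not $O(p)$. The paper reaches the same $O(p\log p)$ count by a cleaner accounting: compute $Q^p x^{\otimes p}$ once, then for each $j$ update the single leaf that changes from $x$ to $e_1$, which touches only the $O(\log p)$ \TSRHT nodes along that root-to-leaf path. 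With $m\log m = O\bigl(p\epsilon^{-2}\log(p/\epsilon)\log^3(1/\epsilon\delta)\bigr)$ per invocation, the $O(p\log p)$ count is exactly what produces the $\log^2(p/\epsilon)$ factor in the stated bound; your claimed $O(p)$ count would only yield a single $\log(p/\epsilon)$.
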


\section{Fully Connected Neural Tangent Kernel}
The main result of this section is an efficient oblivious sketch for the Neural Tangent Kernel (NTK) corresponding to a fully connected multi-layer network.
We start by considering the expression for the fully connected NTK.
\cite{arora2019exact} showed how to exactly compute the $L$-layered NTK with activation $\sigma:\RR \to \RR$ using the following dynamic program:
\begin{enumerate}[wide, labelwidth=!, labelindent=0pt]
	\item For every $y,z\in \RR^d$, let $\Sigma^{(0)}(y,z) := \langle y , z\rangle$ and for every layer $h = 1,2, \ldots L $, recursively define the covariance $\Sigma^{(h)}: \RR^d \times \RR^d \to \RR$ as:
	\small
	\begin{equation}\label{eq:dp-covar}
		\begin{split}
			&\Lambda^{(h)}(y,z) := \begin{pmatrix}
				\Sigma^{(h-1)}(y,y) & \Sigma^{(h-1)}(y,z)\\
				\Sigma^{(h-1)}(z,y) & \Sigma^{(h-1)}(z,z)
			\end{pmatrix},\\
			&\Sigma^{(h)}(y,z) := \frac{\EE_{(u,v) \sim \mathcal{N}\left( 0, \Lambda^{(h)}(y,z) \right)} \left[ \sigma(u) \cdot \sigma(v) \right]}{\EE_{x\sim \mathcal{N}(0,1)} \left[ |\sigma(x)|^2 \right]}.
		\end{split}
	\end{equation}
	\normalsize
	\item For $h = 1,2, \ldots L$, define the derivative covariance as,
	\small
	\begin{equation}\label{eq:dp-derivative-covar}
		\dot{\Sigma}^{(h)}(y,z) := \frac{\EE_{(u,v) \sim \mathcal{N}\left( 0, \Lambda^{(h)}(y,z) \right)} \left[ \dot{\sigma}(u) \cdot \dot{\sigma}(v) \right]}{\EE_{x\sim \mathcal{N}(0,1)} \left[ |\sigma(x)|^2 \right]}.
	\end{equation}
	\normalsize
	\item Let $\Theta_{ntk}^{(0)}(y,z) := \Sigma^{(0)}(y,z)$ and for every integer $L \ge 1$, the depth-$L$ NTK expression is defined recursively as:
	\begin{equation}\label{eq:dp-ntk}
		\Theta_{ntk}^{(L)}(y,z) := \Theta_{ntk}^{(L-1)}(y,z) \cdot \dot{\Sigma}^{(L)}(y,z) + \Sigma^{(L)}(y,z).
	\end{equation}
\end{enumerate}

While using this DP, one can compute the kernel value $\Theta_{ntk}^{(L)}(y,z)$ for any pair of vectors $y,z \in \RR^{d}$ in $O(d + L)$ operations, it is hard to gain insight into the structure of this kernel using the expression above. 
In particular, the NTK expression involves recursive applications of nontrivial expectations, so it is unclear whether there exist efficient sketching solutions for this kernel in its current form.
However, we show that for the important case of ReLU activation $\sigma(\alpha) = \max(\alpha,0)$, this kernel takes an extremely nice and highly structured form. We prove that the NTK in this case can be characterized by the composition of arc-cosine kernels, which can be approximated effectively. In fact, the NTK can be fully characterized by a \emph{univariate function} $K_{relu}^{(L)}:[-1,1]\to \RR$, and exploiting this special structure is the key to designing efficient sketching methods for this kernel.

\subsection{ReLU-NTK}
The $L$-layered NTK corresponding to ReLU activation $\sigma(\alpha) = \max(\alpha,0)$ can be fully characterized by a univariate function $K_{relu}^{(L)}:[-1,1] \to \RR$ that we refer to as \emph{ReLU-NTK}. This function, which is closely related to the arc-cosine kernel functions~\cite{cho2009kernel} and was recently derived in \cite{bietti2019inductive} is defined as follows: 
\begin{definition}[ReLU-NTK function]\label{def:relu-ntk}
	For every integer $L>0$, the $L$-layered {\bf ReLU-NTK} function $K_{relu}^{(L)}:[-1,1] \to \RR$ is defined via the following procedure, for every $\alpha \in [-1,1]$:
	\begin{enumerate}[wide, labelwidth=!, labelindent=0pt]
		\item Let functions $\kappa_0(\cdot)$ and $\kappa_1(\cdot)$ be $0^{th}$ and $1^{st}$ order arc-cosine kernels~\cite{cho2009kernel} defined as follows,
		\begin{equation}\label{relu-activ-cov}
			\begin{split}
				\kappa_0(\alpha) := \frac{1}{\pi} \cdot \left( \pi - \arccos\left( \alpha \right) \right),~~ \kappa_1(\alpha) := \frac{1}{\pi} \left(\sqrt{1 - \alpha^2} + \alpha \cdot \left( \pi - \arccos\left( \alpha \right) \right) \right) .
			\end{split}
		\end{equation}
		
		\item Let $\Sigma_{relu}^{(0)}(\alpha) := \alpha$ and for every $h = 1,2, \ldots L$, define $\Sigma_{relu}^{(h)}(\alpha)$ as follows:
		\begin{equation}\label{eq:dp-covar-relu}
			\Sigma_{relu}^{(h)}(\alpha) := \underbrace{\kappa_1 \circ \kappa_1 \circ \dots \circ \kappa_1}_{h \text{-fold self composition}}  (\alpha).
		\end{equation}
	
		\item For every $h = 1,2, \ldots L $, define $\dot{\Sigma}_{relu}^{(h)}(\alpha)$ as follows:
		\begin{equation}\label{eq:dp-derivative-covar-relu}
			\dot{\Sigma}_{relu}^{(h)}(\alpha) := \kappa_0 \left( \Sigma_{relu}^{(h-1)}(\alpha) \right).
		\end{equation}
	
		\item Let $K_{relu}^{(0)}(\alpha) := \Sigma_{relu}^{(0)}(\alpha) = \alpha$ and for $h=1,2, \ldots L$, define $K_{relu}^{(h)}(\alpha)$ recursively as follows:
		\begin{equation}\label{eq:dp-ntk-relu}
			K_{relu}^{(h)}(\alpha) := K_{relu}^{(h-1)}(\alpha)\cdot \dot{\Sigma}_{relu}^{(h)}(\alpha) + \Sigma_{relu}^{(h)}(\alpha).
		\end{equation}
	\end{enumerate}
\end{definition}

The connection between ReLU-NTK function $K_{relu}^{(L)}$ and the kernel function $\Theta_{ntk}^{(L)}$ is formalized in the following theorem, which is proved in Appendix~\ref{appendix-relu-ntk}.
\begin{theorem}\label{thm:ntk-relu}
	For every integer $L\ge 1$, if we let $K_{relu}^{(L)}: [-1,1] \to \RR$ be the ReLU-NTK function as in Definition~\ref{def:relu-ntk}, then
	the \emph{Neural Tangent Kernel} (NTK) of fully-connected $L$-layered network with ReLU activation, $\sigma(\alpha) = \max(\alpha,0)$, satisfies the following for any $y,z \in \RR^d$,
	\[	\Theta_{ntk}^{(L)}(y,z) \equiv \|y\|_2 \|z\|_2 \cdot K_{relu}^{(L)}\left( \frac{\langle y, z \rangle}{\|y\|_2 \|z\|_2} \right).\]
\end{theorem}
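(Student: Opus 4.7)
The plan is to prove the identity by induction on the layer index, comparing the general DP in equations \eqref{eq:dp-covar}--\eqref{eq:dp-ntk} with the ReLU-specific DP in equations \eqref{eq:dp-covar-relu}--\eqref{eq:dp-ntk-relu}. The backbone of the argument is the two classical closed-form identities for Gaussian expectations of ReLU and its derivative: for $(u,v)$ jointly zero-mean Gaussian with $\mathbb{E}[u^2]=a$, $\mathbb{E}[v^2]=b$, and correlation $\rho = \mathbb{E}[uv]/\sqrt{ab}$, one has $\mathbb{E}[\sigma(u)\sigma(v)] = \tfrac{\sqrt{ab}}{2}\,\kappa_1(\rho)$ and $\mathbb{E}[\dot\sigma(u)\dot\sigma(v)] = \tfrac{1}{2}\kappa_0(\rho)$, together with $\mathbb{E}_{x\sim\mathcal{N}(0,1)}[\sigma(x)^2] = 1/2$. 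These identities turn the abstract Gaussian expectations in the DP into explicit functions of $\rho$.

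First I would prove, by induction on $h$, the joint invariant
\[
\Sigma^{(h)}(y,z) \;=\; \|y\|_2\|z\|_2\cdot \Sigma_{relu}^{(h)}\!\left(\tfrac{\langle y,z\rangle}{\|y\|_2\|z\|_2}\right), \qquad \Sigma^{(h)}(y,y) \;=\; \|y\|_2^2,
\]
where the second equation relies on the fixed-point fact $\kappa_1(1)=1$, which is immediate from \eqref{relu-activ-cov} and shows by induction that $\Sigma_{relu}^{(h)}(1)=1$. The base case $h=0$ is $\Sigma^{(0)}(y,z)=\langle y,z\rangle$, which matches $\Sigma_{relu}^{(0)}(\alpha)=\alpha$. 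For the step, the inductive hypothesis gives that $\Lambda^{(h)}(y,z)$ has diagonal entries $\|y\|_2^2,\|z\|_2^2$ and correlation exactly $\rho_h := \Sigma_{relu}^{(h-1)}(\langle y,z\rangle/(\|y\|_2\|z\|_2))$. Plugging into the Gaussian identity and dividing by the normalizer $1/2$ yields $\Sigma^{(h)}(y,z)=\|y\|_2\|z\|_2\,\kappa_1(\rho_h)=\|y\|_2\|z\|_2\,\Sigma_{relu}^{(h)}(\tfrac{\langle y,z\rangle}{\|y\|_2\|z\|_2})$, while $\Sigma^{(h)}(y,y)=\|y\|_2^2\kappa_1(1)=\|y\|_2^2$. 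An immediate corollary is that the derivative covariance satisfies $\dot\Sigma^{(h)}(y,z)=\kappa_0(\rho_h)=\dot\Sigma_{relu}^{(h)}(\tfrac{\langle y,z\rangle}{\|y\|_2\|z\|_2})$, because $\dot\sigma(u)\dot\sigma(v)$ depends only on the correlation of $(u,v)$.

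With these two normalized identities in hand, the NTK recursion collapses. I would run a second induction on $L$, with base case $\Theta_{ntk}^{(0)}(y,z)=\langle y,z\rangle = \|y\|_2\|z\|_2\,K_{relu}^{(0)}(\tfrac{\langle y,z\rangle}{\|y\|_2\|z\|_2})$. For the step, writing $\rho_0 := \langle y,z\rangle/(\|y\|_2\|z\|_2)$ and using \eqref{eq:dp-ntk} together with the inductive hypothesis $\Theta_{ntk}^{(L-1)}(y,z)=\|y\|_2\|z\|_2\,K_{relu}^{(L-1)}(\rho_0)$, both terms factor cleanly as $\|y\|_2\|z\|_2$ times a function of $\rho_0$, and the recursion \eqref{eq:dp-ntk-relu} gives exactly $K_{relu}^{(L)}(\rho_0)$.

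The only subtle point, and what I would treat as the main obstacle, is justifying the two Gaussian identities for $\kappa_0$ and $\kappa_1$ in the slightly nonstandard setting of different variances $a\neq b$ (rather than the usual unit-variance case). The cleanest route is a homogeneity/rescaling argument: write $u=\sqrt{a}\,\tilde u$, $v=\sqrt{b}\,\tilde v$ with $(\tilde u,\tilde v)$ unit-variance Gaussians of correlation $\rho$, use positive homogeneity of $\sigma$ (so $\sigma(u)=\sqrt{a}\,\sigma(\tilde u)$) and scale-invariance of $\dot\sigma=\mathbf{1}_{\{\cdot>0\}}$, and then invoke the standard unit-variance arc-cosine identities of \cite{cho2009kernel}. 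Once this reduction is spelled out, everything else is bookkeeping, and the theorem follows.
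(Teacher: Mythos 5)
Your proposal matches the paper's proof in essentially every respect: both reduce the abstract Gaussian expectations in \eqref{eq:dp-covar}--\eqref{eq:dp-derivative-covar} to the arc-cosine identities $\EE[\sigma(u)\sigma(v)] = \tfrac{\sqrt{ab}}{2}\kappa_1(\rho)$ and $\EE[\dot\sigma(u)\dot\sigma(v)]=\tfrac12\kappa_0(\rho)$ of \cite{cho2009kernel} (stated in the paper as Claim~\ref{relu-covariance}), then run an inner induction on $h$ to establish $\Sigma^{(h)}(y,z)=\|y\|_2\|z\|_2\,\Sigma_{relu}^{(h)}(\rho_0)$ and $\dot\Sigma^{(h)}(y,z)=\dot\Sigma_{relu}^{(h)}(\rho_0)$, and an outer induction on $L$ to collapse the NTK recursion. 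The only cosmetic difference is that you carry $\Sigma^{(h)}(y,y)=\|y\|_2^2$ as an explicit joint invariant, while the paper re-derives it inside the inductive step from $\Sigma_{relu}^{(h-1)}(1)=1$ via the $\Lambda^{(h)}=\begin{pmatrix}f^\top\\ g^\top\end{pmatrix}(f\ g)$ decomposition; the rescaling argument you flag as the main obstacle is precisely what that decomposition plus Claim~\ref{relu-covariance} accomplishes.
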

Theorem~\ref{thm:ntk-relu} shows that the NTK is a \emph{normalized dot-product kernel} which can be fully characterized by $K_{relu}^{(L)}:[-1,1] \to \RR$. This function is smooth and can be efficiently computed using $O(L)$ operations at any desired point $\alpha \in [-1,1]$. We plot the family of ReLU-NTK functions for a range of values of $L \in\{ 2, 4, 8, 16, 32\}$ in Figure~\ref{fig:relu-ntk}. 
It is evident that for relatively larger values of $L$, the function $K_{relu}^{(L)}(\cdot)$ converges to a \emph{knee shape}. More precisely, it has a nearly constant value of roughly $0.3\cdot (L+1)$ on the interval $\alpha \in [-1,1 - O(1/L)]$, and on the interval $\alpha \in [1 - O(1/L), 1]$ its value sharply increases to $L+1$ at point $\alpha=1$.

\begin{figure}[!t]
	\begin{center}
		\begin{subfigure}{0.48\textwidth}
		\includegraphics[width=\textwidth]{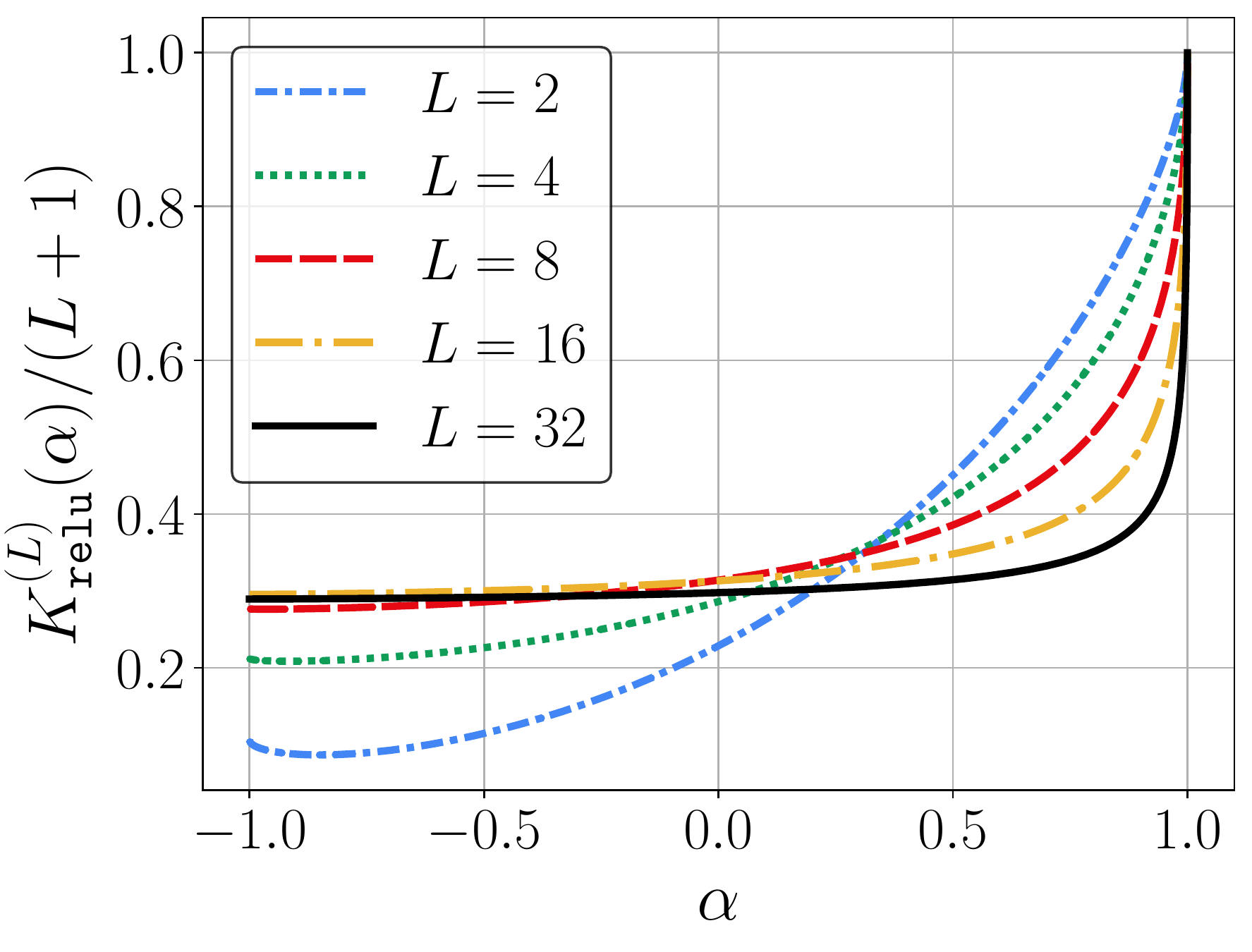}
		\end{subfigure}
\hskip 10pt
		\begin{subfigure}{0.48\textwidth}
		\includegraphics[width=\textwidth]{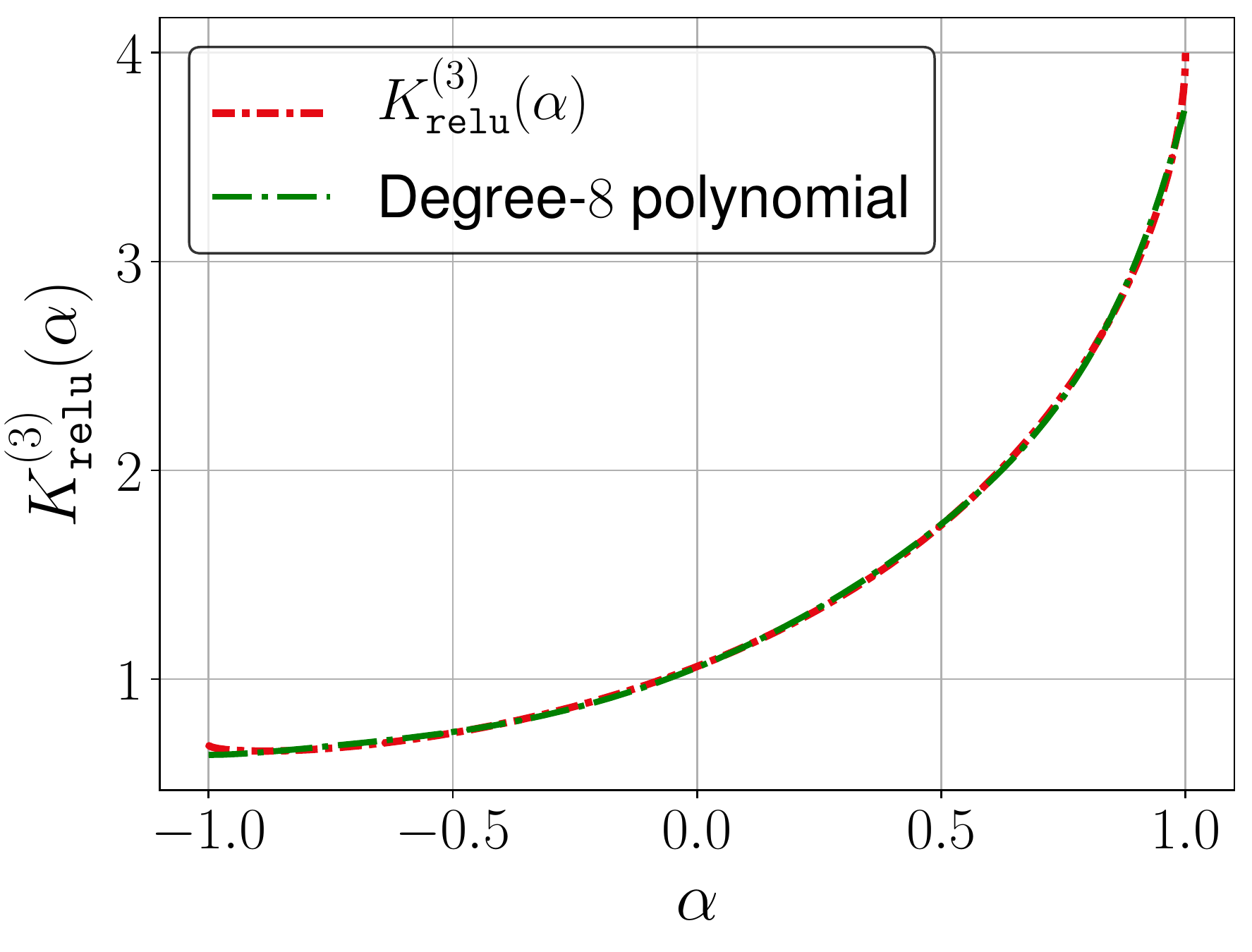}
		\end{subfigure}
		\vskip -0.1in
		\captionsetup{justification=centering,margin=0.1cm}
		\caption{(Left) Normalized ReLU-NTK function $\frac{1}{L+1} \cdot K_{relu}^{(L)}$ for various depths $L \in \{2,4,8,16,32\}$ and (Right) a degree-$8$ polynomial approximation of the depth-$3$ ReLU-NTK ($L=3$).} \label{fig:relu-ntk}
	\end{center}
\end{figure}

\subsection{NTK Sketch}
Given the tools we have introduced so far, we can design our oblivious sketch for the depth-$L$ NTK with ReLU activation in this section.
As shown in Definition~\ref{def:relu-ntk} and Theorem~\ref{thm:ntk-relu}, the NTK $\Theta_{ntk}^{(L)}$, is constructed by recursive composition of arc-cosine kernel functions $\kappa_1(\cdot)$ and $\kappa_0(\cdot)$. Thus, we crucially need efficient methods for approximating these functions in order to design efficient sketches for the NTK. 
We design our algorithms by applying fast sketching methods to the truncated Taylor series expansion of these functions.
Specifically, our main tool is approximating the arc-cosine kernels with low-degree polynomials, and then applying \PolyS to the resulting polynomial kernels. The features for multi-layered NTK are the recursive tensor product of arc-cosine sketches at consecutive layers, which in turn can be sketched efficiently using {\sc PolySketch}. We present our oblivious sketch in the following definition,

\begin{definition}[NTK Sketch Algorithm] \label{alg-def-ntk-sketch}
	For every input vector $x \in \RR^d$, every network depth $L$, and every error and failure parameters $\epsilon, \delta>0$, the {\bf NTK Sketch} $\Psi_{ntk}^{(L)}(x)$ is computed as follows,

	\begin{enumerate}[wide, labelwidth=!, labelindent=0pt]
		\item[$\bullet$] Choose integers $s = O\left(\frac{L^2}{\epsilon^2}  \log^2 \frac{L}{\epsilon\delta}\right)$, $n=O\left(\frac{L^6}{\epsilon^4} \log^3 \frac{L}{\epsilon\delta}\right)$, $n_1=O\left(\frac{L^4}{\epsilon^4} \log^3 \frac{L}{\epsilon\delta}\right)$, $r = O\left(\frac{L^6}{\epsilon^4} \log^2 \frac{L}{\epsilon\delta}\right)$, $m=O\left( \frac{L^8}{\epsilon^{16/3}} \log^3 \frac{L}{\epsilon\delta}\right)$, $m_2=O\left(\frac{L^2}{\epsilon^2} \log^3 \frac{L}{\epsilon\delta}\right)$, $s^*=O\left( \frac{1}{\epsilon^2} \log\frac{1}{\delta}\right)$ appropriately.

		\item[$\bullet$] For $p = \left\lceil 2L^2/\varepsilon^{{4}/{3}} \right\rceil$ and $p' = \left\lceil 9L^2/\varepsilon^{2} \right\rceil$, polynomials $P_{relu}^{(p)}(\cdot)$ and $\dot{P}_{relu}^{(p')}(\cdot)$ are defined as,
		\begin{equation}\label{eq:poly-approx-krelu}
			\begin{split}
				P_{relu}^{(p)}(\alpha) \equiv \sum_{j=0}^{2p+2} c_j \cdot \alpha^j &: = \frac{1}{\pi} + \frac{\alpha}{2}+ \frac{1}{\pi} \sum_{i=0}^p \frac{(2i)! \cdot \alpha^{2i+2}}{2^{2i}  (i!)^2  (2i+1) (2i+2)},\\
				\dot{P}_{relu}^{(p')}(\alpha) \equiv \sum_{j=0}^{2p'+1} b_j \cdot \alpha^j &: = \frac{1}{2} + \frac{1}{\pi} \sum_{i=0}^{p'} \frac{(2i)!}{2^{2i}  (i!)^2  (2i+1)} \cdot \alpha^{2i+1}.
			\end{split}
		\end{equation}

		\item Let $Q^{1} \in \RR^{n \times d}$ be an instance of the degree-$1$ \PolyS as per Lemma~\ref{soda-result}. Additionally, let $S \in \RR^{r \times n}$ be an instance of the SRHT as in Lemma~\ref{lem:srht}. Compute $\phi^{(0)}(x) \in \RR^r$ as follows,
		\begin{equation}\label{eq:map-covar-zero}
			\phi^{(0)}(x) \gets \frac{1}{\|x\|_2} \cdot S \cdot Q^{1} \cdot  x .
		\end{equation}
		\item 
		Let $Q^{2p+2} \in \RR^{m \times r^{2p+2}}$ be an instance of the degree-$(2p+2)$ \PolyS. 
		Also, let $T \in \RR^{r \times ((2p+3)\cdot m)}$ be an instance of the SRHT.
		For every $h = 1,2, \ldots L$ and $l=0,1,2, \ldots 2p+2$, compute the mappings $Z^{(h)}_{l}(x) \in \RR^{m}$ and $\phi^{(h)}(x) \in \RR^r$ as,
		\begin{equation}\label{eq:map-covar}
			\begin{split}
				&Z^{(h)}_{l}(x) \gets Q^{2p+2} \left(\left[ \phi^{(h-1)}(x) \right]^{\otimes l} \otimes  e_1^{\otimes 2p+2-l}\right) \\
				&\phi^{(h)}(x) \gets T \cdot \left( \bigoplus_{l=0}^{2p+2} \sqrt{c_l}  Z^{(h)}_l(x) \right).
			\end{split}
		\end{equation}

		\item 
		Let $Q^{2p'+1} \in \RR^{n_1 \times r^{2p'+1}}$ be an instance of the degree-$(2p'+1)$ \PolyS. 
		Also, let $W \in \RR^{s \times ((2p'+2)\cdot n_1)}$ be an instance of the SRHT.
		For every $h = 1,2, \ldots L $ and $l=0,1,2, \ldots 2p'+1$, compute the mappings $Y^{(h)}_{l}(x) \in \RR^{n_1}$ and $\dot{\phi}^{(h)}(x) \in \RR^s$ as,
		\begin{equation}\label{eq:map-derivative-covar}
			\begin{split}
				&Y^{(h)}_{l}(x) \gets Q^{2p'+1} \left(\left[ \phi^{(h-1)}(x) \right]^{\otimes l} \otimes  e_1^{\otimes 2p'+1-l}\right) \\
				&\dot{\phi}^{(h)}(x) \gets W \cdot \left( \bigoplus_{l=0}^{2p'+1} \sqrt{b_l}  Y^{(h)}_l(x) \right).
			\end{split}
		\end{equation}

		\item Let $Q^{2} \in \RR^{m_2 \times s^2}$ be an instance of the degree-$2$ \PolyS. Additionally, let $R \in \RR^{s \times (m_2+r)}$ and $V \in \RR^{s \times r}$ be independent instances of the SRHT sketch.
		For every integer $h = 1,2, \ldots L$, the mapping $\psi^{(h)}(x) \in \RR^s$ is computed recursively as:
		\begin{equation}\label{eq:map-relu}
			\begin{split}
				&\psi^{(0)}(x) \gets V \cdot \phi^{(0)}(x),\\
				&\psi^{(h)}(x) \gets R \cdot \left(Q^2 \left(\psi^{(h-1)}(x) \otimes \dot{\phi}^{(h)}(x)\right) \oplus \phi^{(h)}(x)\right).
			\end{split}
		\end{equation}

		\item Let $G \in \RR^{s^* \times s}$ be a matrix of i.i.d. normal entries with distribution $\mathcal{N}(0,1/s^*)$. Compute the mapping $\Psi_{ntk}^{(L)}(x) \in \RR^{s^*}$ as,
		\begin{equation}\label{Psi-ntk-def}
			\Psi_{ntk}^{(L)}(x) \gets {\|x\|_2} \cdot G \cdot \psi^{(L)}(x).
		\end{equation}
	\end{enumerate}
\end{definition}

Now we present our main theorem on NTK Sketch as follows,
\begin{theorem}\label{mainthm-ntk}
	For every integers $d\ge 1$ and $L\ge 2$, and any $\epsilon, \delta>0$, if we let $\Theta_{ntk}^{(L)}:\RR^d \times \RR^d \to \RR$ be the $L$-layered NTK with ReLU activation defined in \eqref{eq:dp-covar}, \eqref{eq:dp-derivative-covar}, and \eqref{eq:dp-ntk},
	then there exists a randomized map $\Psi_{ntk}^{(L)}: \RR^d \to \RR^{s^*}$ for some $s^* = O\left( \frac{1}{\epsilon^2}  \log \frac{1}{\delta} \right)$ such that the following invariants holds,
	\begin{enumerate}[wide, labelwidth=!, labelindent=0pt]
		\item For any vectors $y,z \in \RR^d$: 
		\[\Pr \left[ \left| \left< \Psi_{ntk}^{(L)}(y) , \Psi_{ntk}^{(L)}(z) \right> - \Theta_{ntk}^{(L)}(y,z) \right| > \epsilon\cdot \Theta_{ntk}^{(L)}(y,z) \right] \le \delta.\]

		\item For every vecor $x \in \RR^d$, the time to compute $\Psi_{ntk}^{(L)}(x)$ is $O\left( \frac{L^{11}}{\epsilon^{6.7}} \cdot \log^3 \frac{L}{\epsilon\delta} + \frac{L^3}{\epsilon^2} \cdot \log \frac{L}{\epsilon\delta} \cdot \text{nnz}(x) \right)$.
	\end{enumerate}
\end{theorem}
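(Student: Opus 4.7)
The plan is to leverage Theorem~\ref{thm:ntk-relu}, which reduces approximating $\Theta_{ntk}^{(L)}(y,z)$ to approximating the univariate function $K_{relu}^{(L)}$ at the cosine similarity $\alpha = \langle y,z\rangle/(\|y\|_2\|z\|_2)$, scaled by $\|y\|_2\|z\|_2$. I would prove by induction on $h$ that the sketches $\phi^{(h)}(x), \dot\phi^{(h)}(x), \psi^{(h)}(x)$ evaluated on normalized vectors $y/\|y\|_2, z/\|z\|_2$ satisfy, with high probability,
\begin{align*}
\langle \phi^{(h)}(y), \phi^{(h)}(z)\rangle &\approx \Sigma_{relu}^{(h)}(\alpha), \\
\langle \dot\phi^{(h)}(y), \dot\phi^{(h)}(z)\rangle &\approx \dot\Sigma_{relu}^{(h)}(\alpha), \\
\langle \psi^{(h)}(y), \psi^{(h)}(z)\rangle &\approx K_{relu}^{(h)}(\alpha),
\end{align*}
each up to an additive error of order $\epsilon/L$. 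Multiplication by $\|y\|_2\|z\|_2$ and the final Johnson--Lindenstrauss-style Gaussian projection $G$ then yields the relative error bound of the theorem, since $K_{relu}^{(L)}(\alpha) = \Omega(1)$ (in fact $\ge 1$ at $\alpha=1$, and bounded below by $\Theta(L)$ at most points for depth $L$, ensuring the $\epsilon/L$-additive error translates to $\epsilon$-relative error).

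\textbf{Polynomial approximation of arc-cosine kernels.} The first building block is to show that $P_{relu}^{(p)}$ and $\dot P_{relu}^{(p')}$ defined in \eqref{eq:poly-approx-krelu} are indeed truncations of the Taylor series of $\kappa_1$ and $\kappa_0$ around the origin, and to bound the truncation error uniformly on $[-1,1]$. Using the closed-form coefficients together with Stirling estimates, the residual of $P_{relu}^{(p)}$ decays as $O(p^{-3/2})$ and that of $\dot P_{relu}^{(p')}$ as $O(p'^{-1/2})$. With the chosen $p = \lceil 2L^2/\epsilon^{4/3}\rceil$ and $p' = \lceil 9L^2/\epsilon^2\rceil$, both errors are $O(\epsilon^2/L^2)$ and $O(\epsilon/L)$ respectively, small enough that composing through $L$ layers accumulates only $O(\epsilon)$ error. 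A key subclaim is Lipschitz-ness of $\kappa_0,\kappa_1$ on $[-1,1]$ so that errors in the \emph{argument} of the polynomial propagate controllably into errors in the output.

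\textbf{Sketch fidelity and inductive argument.} At layer $h$, the map $\phi^{(h)}(x)$ in \eqref{eq:map-covar} is exactly a sketched feature map of the polynomial kernel $P_{relu}^{(p)}$ applied to $\phi^{(h-1)}$. I would argue inner-product preservation of $\phi^{(h)}$ by combining Lemma~\ref{soda-result} (the \PolyS bounds $Q^{2p+2}$) with Lemma~\ref{lem:srht} (the final \SRHT $T$), using the standard polarization identity $\langle u, v\rangle = (\|u+v\|^2 - \|u-v\|^2)/4$ to convert norm-preservation into inner-product preservation. The analogous argument handles $\dot\phi^{(h)}$. For $\psi^{(h)}$ in \eqref{eq:map-relu}, the recursion $K_{relu}^{(h)}=K_{relu}^{(h-1)}\cdot \dot\Sigma_{relu}^{(h)} + \Sigma_{relu}^{(h)}$ is mirrored by sketching the tensor product $\psi^{(h-1)}\otimes \dot\phi^{(h)}$ with $Q^2$ (which reproduces the product $\langle \psi^{(h-1)}(y),\psi^{(h-1)}(z)\rangle\cdot \langle \dot\phi^{(h)}(y),\dot\phi^{(h)}(z)\rangle$ inside an inner product), concatenating with $\phi^{(h)}$ via $\oplus$ (which reproduces the sum), and compressing via $R$. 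The inductive step propagates an additive error of $O(\epsilon/L)$ at each layer.

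\textbf{Main obstacle and final assembly.} The principal difficulty is \emph{error-propagation through $L$ layers}: each sketching step produces both a multiplicative relative error (from \SRHT/\PolyS) and an additive polynomial-truncation error, and these compound because the argument of each $\kappa_1,\kappa_0$ at layer $h$ is itself the sketched approximation from layer $h-1$. Quantifying this requires boundedness of the intermediate quantities (using $|\Sigma_{relu}^{(h)}|\le 1$ and $|K_{relu}^{(h)}|=O(L)$) together with the Lipschitz constants of $P_{relu}^{(p)}$ and $\dot P_{relu}^{(p')}$ on $[-1,1]$, which are polynomial in $p,p'$. The sketch dimensions $s,n,n_1,r,m,m_2$ are then selected to make each per-layer failure probability $O(\delta/L)$ and each per-layer error $O(\epsilon/L^{c})$ for a constant $c$ sufficient to absorb the accumulation; a union bound over $L$ layers yields overall probability $1-\delta/2$. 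Finally, applying the Gaussian matrix $G$ of size $s^*\times s$ preserves inner products of the $\psi^{(L)}(\cdot)$ vectors up to relative error $\epsilon$ with probability $1-\delta/2$ by Johnson--Lindenstrauss, after which scaling by $\|x\|_2$ and invoking Theorem~\ref{thm:ntk-relu} delivers part (1). For the runtime bound (2), I would add the cost of the single degree-$1$ \PolyS and \SRHT at the input layer ($O(\text{nnz}(x)\cdot \epsilon^{-2}\log(L/\epsilon\delta)\cdot L^3)$, the dominant sparsity-dependent term), plus the per-layer cost of $L$ applications of $Q^{2p+2}, Q^{2p'+1}, Q^2$ and the various \SRHT sketches, each bounded by Lemma~\ref{soda-result}(2)--(3) and Lemma~\ref{lem:srht}, giving the stated $O(L^{11}\epsilon^{-6.7}\log^3(L/\epsilon\delta))$ dimension-independent term after multiplying through the parameters.
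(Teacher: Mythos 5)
Your proposal follows essentially the same path as the paper's proof: Taylor truncation of $\kappa_1,\kappa_0$ (Lemma~\ref{lem:polynomi-approx-krelu}), per-layer sketching via \PolyS and \SRHT, an inductive invariant analogous to Lemma~\ref{thm:ntk-sketch-corr}, a union bound across $L$ layers, a final Gaussian JL projection, and Theorem~\ref{thm:ntk-relu} plus a lower bound on $K_{relu}^{(L)}$ to convert additive to relative error.

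Two quantitative points deserve more care, though, because your uniform ``additive error of order $\epsilon/L$'' budget for all three sketches does not close as written. First, the error in $\phi^{(h)}$ must be taken much smaller than $\epsilon/L$: when the (erroneous) inner product $\langle\phi^{(h-1)}(y),\phi^{(h-1)}(z)\rangle$ is fed into $\dot P_{relu}^{(p')}$, the error is amplified by the Lipschitz constant of that polynomial, which by Lemma~\ref{lema:sensitivity-polynomial} is $\Theta(\sqrt{p'}) = \Theta(L/\epsilon)$. To get an $O(\epsilon/L)$ error in $\dot\phi^{(h)}$ you therefore need $\phi^{(h-1)}$ accurate to $O(\epsilon^2/L^2)$ (the paper budgets $O(\epsilon^2/L^3)$ to leave room for the $L$-fold union bound and accumulation). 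Second, the error in $\psi^{(h)}$ cannot be held at $O(\epsilon/L)$: the recursion multiplies the sketch of $\dot\Sigma_{relu}^{(h)}$ by $\psi^{(h-1)}$, whose magnitude is $\Theta(h)$, so each layer contributes an $O(\epsilon h/L)$ increment and the total at $h=L$ is $\Theta(\epsilon L)$. This is why the final conversion to relative error really does require $K_{relu}^{(L)}(\alpha) \ge (L+1)/9$ uniformly for \emph{all} $\alpha\in[-1,1]$ --- your hedge ``at most points'' would not suffice for the worst-case $y,z$ --- which the paper establishes via the monotonicity of $\kappa_0,\kappa_1$. Your catch-all ``$\epsilon/L^c$ for a sufficiently large $c$'' gestures at the fix, but the asymmetry between the budgets for $\phi,\dot\phi,\psi$ (driven by the $\sqrt{p'}$ Lipschitz blow-up in one direction and the $\Theta(h)$ magnitude growth in the other) is the crux of the argument and should be spelled out.
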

For a proof of the above theorem see Appendix~\ref{appendix-ntk-sketch}. 
One can observe that the runtime of our \NTKS is faster than the gradient features of an ultra-wide random DNN, studied by \cite{arora2019exact}, by a factor of $L^3/\varepsilon^2$.

\section{Convolutional Neural Tangent Kernel}
In this section, we design and analyze an efficient oblivious sketch for the Convolutional Neural Tangent Kernel (CNTK), which is the kernel function corresponding to a CNN with infinite number of channels.
\cite{arora2019exact} gave DP solutions for computing two variants of CNTK; one is the vanilla version which performs no pooling, and the other performs Global Average Pooling (GAP) on its top layer. For conciseness, we focus mainly on the CNTK with GAP, which also exhibits superior empirical performance~\cite{arora2019exact}. However, we remark that the vanilla CNTK has a very similar structure and hence our techniques can be applied to it, as well. 

Using the DP of \cite{arora2019exact}, the number of operations needed for exact computation of the depth-$L$ CNTK value $\Theta_{\tt cntk}^{(L)}(y,z)$ for images $y,z \in \RR^{d_1 \times d_1}$ is $\Omega\left( (d_1d_2)^2 \cdot L \right)$, which is extremely slow particularly due to its quadratic dependence on the number of pixels of input images $d_1d_2$.
Fortunately, we are able to show that the CNTK for the important case of ReLU activation is a highly structured object that can be fully characterized in terms of tensoring and composition of arc-cosine kernels, and exploiting this special structure is key in designing efficient sketching methods for the CNTK.

\subsection{ReLU-CNTK}
Here we present our derivation of the ReLU CNTK function.
Unlike the fully-connected NTK, the CNTK is not a simple dot-product kernel function. The key reason being that CNTK works by partitioning its input images into patches and locally transforming the patches at each layer, as opposed to the NTK which operates on the entire input vectors.
We prove that the depth-$L$ CNTK corresponding to ReLU activation is highly structured and can be fully characterized in terms of tensoring and composition of arc-cosine kernels $\kappa_1(\cdot)$ and $\kappa_0(\cdot)$. We show how to recursively compute the ReLU CNTK as follows,

\begin{definition}[ReLU-CNTK] \label{relu-cntk-def}
	For every positive integers $q,L$, the $L$-layered CNTK for ReLU activation function and convolutional filter size of $q \times q$ is defined as follows
	\begin{enumerate}[wide, labelwidth=!, labelindent=0pt]
		\item For $x\in \RR^{d_1\times d_2 \times c}$, every $i \in [d_1]$ and $j \in [d_2]$ let $N_{i,j}^{(0)} (x) :=q^2 \cdot \sum_{l=1}^c \left| x_{i+a,j+b,l} \right|^2$, and for every $h \ge 1$, recursively define,
		\begin{equation}\label{eq:dp-cntk-norm-simplified}
			N^{(h)}_{i,j}(x):=\frac{1}{q^2} \cdot \sum_{a=-\frac{q-1}{2}}^{\frac{q-1}{2}} \sum_{b=-\frac{q-1}{2}}^{\frac{q-1}{2}} N^{(h-1)}_{i+a,j+b}(x).
		\end{equation}
		\item Define $\Gamma^{(0)}(y,z)  := \sum_{l=1}^c y_{(:,:,l)} \otimes z_{(:,:,l)}$. Let $\kappa_1:[-1,1]\to \RR$ be the function defined in \eqref{relu-activ-cov} of Definition~\ref{def:relu-ntk}. For every layer $h = 1,2, \ldots , L$, every $i,i' \in [d_1]$ and $j,j' \in [d_2]$, define $\Gamma^{(h)}: \RR^{d_1\times d_2 \times c} \times \RR^{d_1 \times d_2 \times c} \to \RR^{d_1 \times d_2 \times d_1 \times d_2}$ recursively as:
		\begin{equation}\label{eq:dp-cntk-covar-simplified}
			\Gamma^{(h)}_{i,j,i',j'}(y,z) := \frac{\sqrt{N^{(h)}_{i,j}(y) \cdot N^{(h)}_{i',j'}(z)}}{q^2} \cdot \kappa_1\left( A \right),
		\end{equation}
		where $A:=\frac{\sum_{a=-\frac{q-1}{2}}^{\frac{q-1}{2}} \sum_{b=-\frac{q-1}{2}}^{\frac{q-1}{2}}  \Gamma^{(h-1)}_{i+a,j+b,i'+a,j'+b}(y,z)}{\sqrt{N^{(h)}_{i,j}(y) \cdot N^{(h)}_{i',j'}(z)}}$.
		\item Let $\kappa_0:[-1,1]\to \RR$ be the function defined in \eqref{relu-activ-cov} of Definition~\ref{def:relu-ntk}. For every $h = 1,2, \ldots L$, every $i,i' \in [d_1]$ and $j,j' \in [d_2]$, define $\dot{\Gamma}^{(h)}(y,z) \in \RR^{d_1 \times d_2 \times d_1 \times d_2}$ as:
		\begin{equation}\label{eq:dp-cntk-derivative-covar-simplified}
			\dot{\Gamma}^{(h)}_{i,j,i',j'}(y,z) := \frac{1}{q^2} \cdot \kappa_0\left(A\right),
		\end{equation}
		where $A$ is defined in \eqref{eq:dp-cntk-covar-simplified}.
		\item Let $\Pi^{(0)}(y,z) := 0$ and for every $h = 1,2, \ldots, L-1$, every $i,i' \in [d_1]$ and $j,j' \in [d_2]$, define $\Pi^{(h)}: \RR^{d_1\times d_2 \times c} \times \RR^{d_1\times d_2 \times c} \to \RR^{d_1\times d_2\times d_1 \times d_2}$ recursively as:
		\small
		\begin{equation}\label{eq:dp-cntk}
			\Pi^{(h)}_{i,j,i',j'}(y,z) := \sum_{a=-\frac{q-1}{2}}^{\frac{q-1}{2}} \sum_{b=-\frac{q-1}{2}}^{\frac{q-1}{2}}  B_{i+a,j+b,i'+a,j'+b},
		\end{equation}
		\normalsize
		where $B :=\Pi^{(h-1)}(y,z) \odot \dot{\Gamma}^{(h)}(y,z) + \Gamma^{(h)}(y,z)$.\\
		Furthermore, for $h=L$ define: 
		\begin{equation}\label{eq:dp-cntk-last-layer}
			\Pi^{(L)}(y,z) := \Pi^{(L-1)}(y,z) \odot \dot{\Gamma}^{(L)}(y,z).
		\end{equation}
		\item The final CNTK expressions for ReLU activation is:
		\begin{equation}\label{eq:dp-cntk-finalkernel}
			\Theta_{cntk}^{(L)}(y,z) := \frac{1}{d_1^2d_2^2} \cdot \sum_{i , i' \in [d_1]} \sum_{j , j' \in [d_2]} \Pi_{i,j,i',j'}^{(L)}(y,z).
		\end{equation}
	\end{enumerate}
\end{definition}

We present the main properties of the ReLU-CNTK function in Appendix~\ref{appendix-relu-cntk-expr}.

\subsection{CNTK Sketch}
Similar to NTK Sketch, our method relies on approximating the arc-cosine kernels with low-degree polynomials via Taylor expansion, and then applying \PolyS to the resulting polynomial kernels. Our sketch computes the features for each pixel of the input image, by tensor product of arc-cosine sketches at consecutive layers, which in turn can be sketched efficiently by \textsc{PolySketch}. Additionally, the features of pixels that lie in the same patch get \emph{locally combined} at each layer via direct sum operation. This precisely corresponds to the convolution operation in neural networks. We start by presenting our CNTK Sketch algorithm in the following definition,

\begin{definition}[CNTK Sketch Algorithm] \label{alg-def-cntk-sketch}
	For every image $x \in \RR^{d_1 \times d_2 \times c}$, compute the \emph{CNTK Sketch}, $\Psi_{cntk}^{(L)}(x)$, recursively as follows,
	\begin{enumerate}[wide, labelwidth=!, labelindent=0pt]
		\item[$\bullet$] Let $s = O\left(\frac{L^2}{\epsilon^2}  \log^2 \frac{d_1d_2L}{\epsilon\delta}\right)$, $r = O\left(\frac{L^6}{\epsilon^4}  \log^2 \frac{d_1d_2L}{\epsilon\delta}\right)$, $m_2=O\left(\frac{L^2}{\epsilon^2}  \log^3 \frac{d_1d_2L}{\epsilon\delta}\right)$, $n=O\left(\frac{L^4}{\epsilon^4} \log^3 \frac{d_1d_2L}{\epsilon\delta}\right)$, $m=O\left( \frac{L^8}{\epsilon^{16/3}} \log^3 \frac{d_1d_2L}{\epsilon\delta}\right)$, and $s^* = O(\frac{1}{\epsilon^2} \log\frac{1}{\delta})$ be appropriate integers and $P^{(p)}_{relu}(\alpha) = \sum_{l=0}^{2p+2} c_l \cdot \alpha^l$ and $\dot{P}^{(p')}_{relu}(\alpha) = \sum_{l=0}^{2p'+1} b_l \cdot \alpha^l$ be the polynomials defined in \eqref{eq:poly-approx-krelu}.
		
		\item For every $i \in [d_1]$, $j \in [d_2]$, and $h = 0, 1, 2, \ldots L$ compute $N_{i,j}^{(h)}(x)$ as per \eqref{eq:dp-cntk-norm-simplified} of Definition~\ref{relu-cntk-def}.
		
		\item Let $S \in \RR^{r \times c}$ be an SRHT. For every $i \in [d_1]$ and $j \in [d_2]$, compute $\phi_{i,j}^{(0)}(x) \in \RR^r$ as,
		\begin{equation}\label{cntk-sketch-covar-zero}
			\phi_{i,j}^{(0)}(x) \gets S \cdot x_{(i,j,:)}.
		\end{equation}
		
		\item 
		Let $Q^{2p+2} \in \RR^{m \times \left(q^2r\right)^{2p+2}}$ be a degree-$(2p+2)$ \PolyS, and let $T \in \RR^{r \times ((2p+3)\cdot m)}$ be an SRHT.
		For every layer $h = 1,2, \ldots L$, every $i \in [d_1]$ and $j \in [d_2]$, and $l=0,1,2, \ldots 2p+2$ compute,
		\begin{equation}\label{eq:maping-cntk-covar}
			\begin{split}
				&\mu^{(h)}_{i,j}(x) \gets \frac{1}{\sqrt{N^{(h)}_{i,j}(x)}} \cdot  \bigoplus_{a=-\frac{q-1}{2}}^{\frac{q-1}{2}} \bigoplus_{b=-\frac{q-1}{2}}^{\frac{q-1}{2}}  \phi_{i+a,j+b}^{(h-1)}(x)\\
				&\left[Z^{(h)}_{i,j}(x)\right]_l \gets Q^{2p+2} \cdot \left(\left[ \mu^{(h)}_{i,j}(x) \right]^{\otimes l} \otimes e_1^{\otimes 2p+2-l}\right), \\
				&\phi_{i,j}^{(h)}(x) \gets \frac{\sqrt{N^{(h)}_{i,j}(x)}}{q} \cdot T \cdot \left( \bigoplus_{l=0}^{2p+2} \sqrt{c_l}  \left[Z^{(h)}_{i,j}(x)\right]_l \right).
			\end{split}
		\end{equation}

		\item 
		Let $Q^{2p'+1} \in \RR^{n \times \left(q^2r\right)^{2p'+1}}$ be a degree-$(2p'+1)$ \PolyS, and let $W \in \RR^{s \times ((2p'+2)\cdot n)}$ be an SRHT.
		For every $h = 1,2, \ldots L$, every $i \in [d_1]$ and $j \in [d_2]$, and $l=0,1,2, \ldots 2p'+1$ compute,
		\begin{equation}\label{eq:cntk-map-phidot}
			\begin{split}
				&\left[Y^{(h)}_{i,j}(x)\right]_l \gets Q^{2p'+1} \cdot \left(\left[ \mu^{(h)}_{i,j}(x) \right]^{\otimes l} \otimes e_1^{\otimes 2p'+1-l}\right), \\
				&\dot{\phi}_{i,j}^{(h)}(x) \gets \frac{1}{q} \cdot W \cdot \left( \bigoplus_{l=0}^{2p'+1} \sqrt{b_l}  \left[Y^{(h)}_{i,j}(x)\right]_l \right),
			\end{split}
		\end{equation}
		where $\mu^{(h)}_{i,j}(x)$ is computed in \eqref{eq:maping-cntk-covar}.
		
		\item Let $Q^{2} \in \RR^{m_2 \times s^2}$ be a degree-$2$ \PolyS, and let $R \in \RR^{s \times \left(q^2(m_2+r)\right)}$ be an SRHT. 
		Let $\psi^{(0)}_{i,j}(x) \gets 0$ and for every $h \in [L-1]$, and $i \in [d_1], j \in [d_2]$, compute the mapping $\psi^{(h)}_{i,j}(x) \in \RR^s$ as:
		\begin{equation}\label{psi-cntk}
			\begin{split}
				&\eta^{(h)}_{i,j}(x) \gets Q^2 \cdot \left(\psi_{i,j}^{(h-1)}(x) \otimes \dot{\phi}_{i,j}^{(h)}(x)\right) \oplus \phi_{i,j}^{(h)}(x),\\
				&\psi^{(h)}_{i,j}(x) \gets R \cdot \left(\bigoplus_{a=-\frac{q-1}{2}}^{\frac{q-1}{2}} \bigoplus_{b=-\frac{q-1}{2}}^{\frac{q-1}{2}}  \eta_{i+a,j+b}^{(h)}(x)\right).
			\end{split}
		\end{equation}
		\begin{equation}\label{psi-cntk-last}
			\text{(for $h=L$:) }~~~~~~~~~~~~~~~~\psi^{(L)}_{i,j}(x) \gets Q^2 \cdot \left(\psi^{(L-1)}_{i,j}(x) \otimes \dot{\phi}_{i,j}^{(L)}(x)\right).~~~~~~~~~~~~~
		\end{equation}
		
		\item Let $G \in \RR^{s^* \times m_2}$ be a random matrix of i.i.d. normal entries with distribution $\mathcal{N}(0,1/s^*)$. The \emph{CNTK Sketch} is the following:
		\begin{equation}\label{Psi-cntk-def}
			\Psi_{cntk}^{(L)}(y,z) := \frac{1}{d_1d_2} \cdot G \cdot \left(\sum_{i \in [d_1]} \sum_{j \in [d_2]} \psi^{(L)}_{i,j}(x)\right).
		\end{equation}
	\end{enumerate}
\end{definition}

Now we present our main theorem on the performance guarantee of our \emph{CNTK Sketch}:

\begin{theorem}\label{maintheorem-cntk}
	For every positive integers $d_1,d_2,c$ and $L \ge 2$, and every $\epsilon, \delta>0$, if we let $\Theta_{cntk}^{(L)}:\RR^{d_1\times d_2 \times c} \times \RR^{d_1\times d_2\times c} \to \RR$ be the $L$-layered CNTK with ReLU activation and GAP given in \cite{arora2019exact},
	then there exists a randomized map $\Psi_{cntk}^{(L)}: \RR^{d_1 \times d_2 \times c} \to \RR^{s^*}$ for some $s^* = O\left( \frac{1}{\epsilon^2} \cdot \log \frac{1}{\delta} \right)$ such that:
	\begin{enumerate}[wide, labelwidth=!, labelindent=0pt]
		\item For any images $y,z \in \RR^{d_1 \times d_2 \times c}$: 
		\[ \Pr \left[ \left< \Psi_{cntk}^{(L)}(y) , \Psi_{cntk}^{(L)}(z) \right> \notin (1\pm \epsilon) \cdot \Theta_{cntk}^{(L)}(y,z) \right] \le \delta.
		\]

		\item For every image $x \in \RR^{d_1 \times d_2 \times c}$, $\Psi_{cntk}^{(L)}(x)$ can be computed in time $O\left( \frac{L^{11}}{\epsilon^{6.7}} \cdot (d_1d_2) \cdot \log^3 \frac{d_1d_2L}{\epsilon\delta} \right)$.
	\end{enumerate}
\end{theorem}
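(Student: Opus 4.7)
The plan is to prove both statements of Theorem~\ref{maintheorem-cntk} by following the recursive structure of Definition~\ref{relu-cntk-def} layer-by-layer, maintaining an inductive invariant that the sketched features reproduce the CNTK tensors in inner product up to a small additive error, and then combining this with a Johnson--Lindenstrauss step at the very end. The first preliminary piece is a \emph{polynomial approximation lemma}: I would show that the truncated Taylor polynomials $P_{relu}^{(p)}$ and $\dot{P}_{relu}^{(p')}$ in \eqref{eq:poly-approx-krelu} approximate $\kappa_1$ and $\kappa_0$ uniformly on $[-1,1]$ to within $O(\epsilon/L)$ for the chosen degrees $p = \lceil 2L^2/\epsilon^{4/3}\rceil$ and $p' = \lceil 9L^2/\epsilon^2\rceil$. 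This uses standard tail bounds on the arc-cosine Taylor series; $\kappa_0$ needs many more terms than $\kappa_1$ because of its square-root singularity at $\alpha = 1$, which is why $p'$ is larger.

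The main inductive invariant reads: for every layer $h \in \{0, 1, \ldots, L\}$ and every pair $(i,j), (i',j') \in [d_1]\times[d_2]$, with high probability
\[
\bigl\langle \phi_{i,j}^{(h)}(y), \phi_{i',j'}^{(h)}(z) \bigr\rangle \approx \Gamma_{i,j,i',j'}^{(h)}(y,z), \qquad \bigl\langle \dot\phi_{i,j}^{(h)}(y), \dot\phi_{i',j'}^{(h)}(z) \bigr\rangle \approx \dot\Gamma_{i,j,i',j'}^{(h)}(y,z),
\]
with an additive slack of $O(h\epsilon/L)$ times natural normalizers. The base case is immediate because $\phi_{i,j}^{(0)} = S \cdot x_{(i,j,:)}$ is an SRHT applied to the channel fiber and preserves inner products by Lemma~\ref{lem:srht}. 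For the inductive step on $\phi^{(h)}$, the direct-sum $\mu_{i,j}^{(h)}$ in \eqref{eq:maping-cntk-covar} collects the sketches over the $q \times q$ patch, so $\langle \mu_{i,j}^{(h)}(y), \mu_{i',j'}^{(h)}(z) \rangle$ reproduces (up to inductive error) the argument $A$ appearing inside $\kappa_1$ in \eqref{eq:dp-cntk-covar-simplified}. The degree-$(2p{+}2)$ \PolyS applied to $[\mu_{i,j}^{(h)}]^{\otimes l} \otimes e_1^{\otimes 2p+2-l}$ preserves the monomial $\langle\mu_{i,j}^{(h)}(y),\mu_{i',j'}^{(h)}(z)\rangle^l$ in inner product by Lemma~\ref{soda-result}; the coefficient-weighted direct sum followed by the final SRHT $T$ assembles the polynomial $P_{relu}^{(p)}$, which matches $\kappa_1(A)$ up to the Taylor truncation error. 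The prefactor $\sqrt{N_{i,j}^{(h)}}/q$ matches \eqref{eq:dp-cntk-covar-simplified}. The argument for $\dot\phi^{(h)}$ is identical with $\dot P_{relu}^{(p')}$ replacing $P_{relu}^{(p)}$.

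For the $\Pi^{(h)}$ recursion I would analyze $\psi_{i,j}^{(h)}$ in \eqref{psi-cntk}: the degree-$2$ \PolyS applied to $\psi^{(h-1)}_{i,j} \otimes \dot\phi^{(h)}_{i,j}$ preserves their tensor inner product, which equals $\langle\psi^{(h-1)},\psi^{(h-1)}\rangle\cdot\langle\dot\phi^{(h)},\dot\phi^{(h)}\rangle$ and thus reproduces the Hadamard product $\Pi^{(h-1)} \odot \dot\Gamma^{(h)}$; the direct sum with $\phi_{i,j}^{(h)}$ adds the $+\,\Gamma^{(h)}$ contribution, and the outer SRHT $R$ performs the $q\times q$ patch sum required by \eqref{eq:dp-cntk}. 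The last layer uses only the Hadamard product per \eqref{psi-cntk-last}, matching \eqref{eq:dp-cntk-last-layer}. The final global average pooling in \eqref{eq:dp-cntk-finalkernel} is implemented by the $1/(d_1 d_2)$-weighted sum in \eqref{Psi-cntk-def}, and the Gaussian projection $G$ is a standard JL embedding that reduces to $s^* = O(\epsilon^{-2}\log(1/\delta))$ dimensions while preserving the inner product. Finally I would union-bound the failure events of all $O(L \cdot d_1^2 d_2^2)$ inner products appearing across layers; the polylog-in-$d_1 d_2 L/\epsilon\delta$ factors in the sketch sizes $s, r, n, m, m_2, n_1$ are chosen precisely to absorb this union bound while giving each per-layer sketch a $1 \pm O(\epsilon/L)$ guarantee.

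The main obstacle will be the \emph{error propagation} across layers: a naive application of the per-layer multiplicative guarantees would compound to a $(1+\epsilon/L)^L$ blowup with multiplicative hidden constants from $\kappa_1$'s derivative. The key technical ingredient, which I would extract as a stability lemma, is that $\kappa_1$ is $1$-Lipschitz on $[-1,1]$, so additive errors in its argument propagate additively rather than multiplicatively across the $L$ compositions; combined with the explicit normalizers $\sqrt{N^{(h)}_{i,j}}$ this gives a linear, not exponential, total error. A parallel stability argument is needed for $\kappa_0$, and the polynomial-approximation error on $\kappa_0$ costs the extra $1/\epsilon^{4/3}$ factor visible in the runtime exponent. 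For the runtime, counting yields: at each of $L$ layers and each of $d_1 d_2$ spatial locations we invoke a constant number of $\PolyS$ and $\SRHT$ operations whose costs are given by Lemmas~\ref{lem:srht} and~\ref{soda-result}. Multiplying per-operation cost $O(\mathrm{poly}(L,\epsilon^{-1}) \log^3(d_1 d_2 L/\epsilon\delta))$ by the $L \cdot d_1 d_2$ invocations yields the stated $O(L^{11}\epsilon^{-6.7}\cdot d_1 d_2 \cdot \log^3(d_1 d_2 L/\epsilon\delta))$ bound, where the exponent $6.7 \approx 16/3 + 1 + 1/3$ traces back to the $m$-sketch-size choice driven by the $\kappa_0$ approximation degree $p'$.
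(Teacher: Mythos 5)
Your high-level plan matches the paper's: polynomial truncation of the arc-cosine kernels, a layer-by-layer inductive invariant showing the sketched feature inner products track $\Gamma^{(h)}, \dot\Gamma^{(h)}, \Pi^{(h)}$, a final JL projection, and a union bound over spatial locations. But there are two gaps that would prevent the induction from closing and the theorem from being obtained in its stated multiplicative form.

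\textbf{Accuracy of the $\phi$-invariant and non-Lipschitzness of $\kappa_0$.} You state the invariant as $\langle\phi_{i,j}^{(h)}(y),\phi_{i',j'}^{(h)}(z)\rangle \approx \Gamma_{i,j,i',j'}^{(h)}(y,z)$ ``with an additive slack of $O(h\epsilon/L)$ times natural normalizers'' and claim a ``parallel stability argument'' works for $\kappa_0$. This is where the argument breaks: $\kappa_0(\alpha) = 1 - \tfrac{1}{\pi}\arccos(\alpha)$ has derivative $\tfrac{1}{\pi\sqrt{1-\alpha^2}}$, which is unbounded on $[-1,1]$, so $\kappa_0$ is \emph{not} Lipschitz and the stability argument for $\kappa_1$ does not transfer. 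What the paper actually proves (Lemma~\ref{lema:sensitivity-polynomial}) is that the \emph{truncated polynomial} $\dot P_{relu}^{(p')}$ has Lipschitz constant $\sqrt{p'} = O(L/\epsilon)$ on a slightly enlarged interval, and the $\phi$-invariant is deliberately maintained with accuracy $O(\epsilon^2/L^3)$ per layer (see Lemma~\ref{lem:cntk-sketch-corr}, $(h+1)\epsilon^2/(60L^3)$), precisely so that after passing through $\dot P_{relu}^{(p')}$ the amplified error is still only $O(\epsilon/L)$. With the $O(\epsilon/L)$ accuracy you propose on $\phi$, feeding the argument into $\dot P_{relu}^{(p')}$ would amplify it by $\sqrt{p'}$ to order $1$, destroying the $\dot\Gamma^{(h)}$ approximation and hence the $\Pi^{(h)}$ recursion. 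So the specific quantitative choice $\epsilon^2/L^3$ versus $\epsilon/L$ is load-bearing, not a bookkeeping detail.

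\textbf{Converting additive error to relative error.} Even with the correct additive bound at the top layer --- which is $O(\epsilon)\cdot\frac{L}{q^2 d_1^2 d_2^2}\sum_{i,i',j,j'}\sqrt{N^{(L)}_{i,j}(y) N^{(L)}_{i',j'}(z)}$ --- the theorem claims a multiplicative $(1\pm\epsilon)\Theta_{cntk}^{(L)}$ guarantee, and you never explain how to pass from one to the other. The paper closes this by proving the lower bound $\Theta_{cntk}^{(L)}(y,z) \ge \frac{L-1}{9q^2 d_1^2 d_2^2}\sum_{i,i',j,j'}\sqrt{N^{(L)}_{i,j}(y)N^{(L)}_{i',j'}(z)}$ for $L\ge 2$, which requires tracking $\Sigma_{relu}^{(h)}(-1)$ and $\dot\Sigma_{relu}^{(h)}(-1)$ across layers using monotonicity of $\kappa_0,\kappa_1$ and Lemma~\ref{properties-gamma}. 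Without this piece the additive bound does not yield the stated conclusion. (Separately, the $\epsilon^{-6.7}\approx\epsilon^{-20/3}$ in the runtime is driven by the $\kappa_1$ truncation degree $p=O(L^2/\epsilon^{4/3})$ combined with the $\epsilon^2/L^3$ per-sketch accuracy — not by $\kappa_0$'s degree $p'$ as you suggest — since the $Y$-sketches for $\dot\phi$ cost only $O(L^9/\epsilon^6)$.)
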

This theorem is proved in Appendix~\ref{app-cntk-sketch}.
Runtime of our \CNTKS is only linear in the number of image pixels $d_1d_2$, which is in stark contrast to quadratic scaling of the exact CNTK computation \cite{arora2019exact}.

\section{Experiments}
In this section, we empirically show that running least squares regression on the features generated by our methods is an extremely fast and effective way of approximately learning with NTK and CNTK kernel machines and can compete with various baseline methods. One of the baselines that we consider is an MLP with ReLU activation that is fully-trained by stochastic gradient descent. Another baselin is the Nystrom kernel approximation method of \cite{musco2017recursive} applied to the NTK kernel function with ReLU activation. Finally, we also compare our CNTK Sketch agains the gradient features of an ultra-wide random CNN~\cite{novak2018bayesian} as the third baseline. 

\paragraph{Reproducibility.} All the codes used to produce our experimental results are publicly available at this link:
\url{https://github.com/amirzandieh/NTK-CNTK-Sketch.git}.

\subsection{NTK Sketch: Classification and Regression}
We first show that our proposed NTK Sketch algorithm achieves better runtime and accuracy trade-off compared to the Nystrom method of \cite{musco2017recursive} as well as fully-trained MLPs.
We benchmark all algorithms on a variety of large-scale classification and regression datasets. 
To apply the algorithms to classification problem, we encode class labels into one-hot vectors with zero-mean and solve a ridge regression problem. We accelerate our NTK Sketch for deep networks using the following trick,

\paragraph{Optimizing NTK Sketch for Deeper Nets}
As shown in Theorem\ref{thm:ntk-relu}, the NTK is a normalized dot-product kernel characterized by the function $K^{(L)}_{relu} (\alpha)$. This function can be easily computed using $O(L)$ operations at any desired $\alpha \in [-1,1]$, therefore, we can efficiently fit a polynomial to this function using numerical methods (for instance, it is shown in Fig.~\ref{fig:relu-ntk} that a degree-$8$ polynomial can tightly approximate the depth-$3$ ReLU-NTK function $K_{relu}^{(3)}$). Then, we can efficiently sketch the resulting polynomial kernel using \PolyS, as was previously done for Gaussian and general dot-product kernels \cite{ahle2020oblivious,woodruff2020near}. Therefore, we can accelerate our NTK Sketch for deeper networks ($L>2$), using this heuristic.

\begin{figure}[!t]
	\centering
	\begin{subfigure}{\textwidth}
		\captionsetup{justification=centering,margin=0.5cm}
		\includegraphics[width=0.32\textwidth]{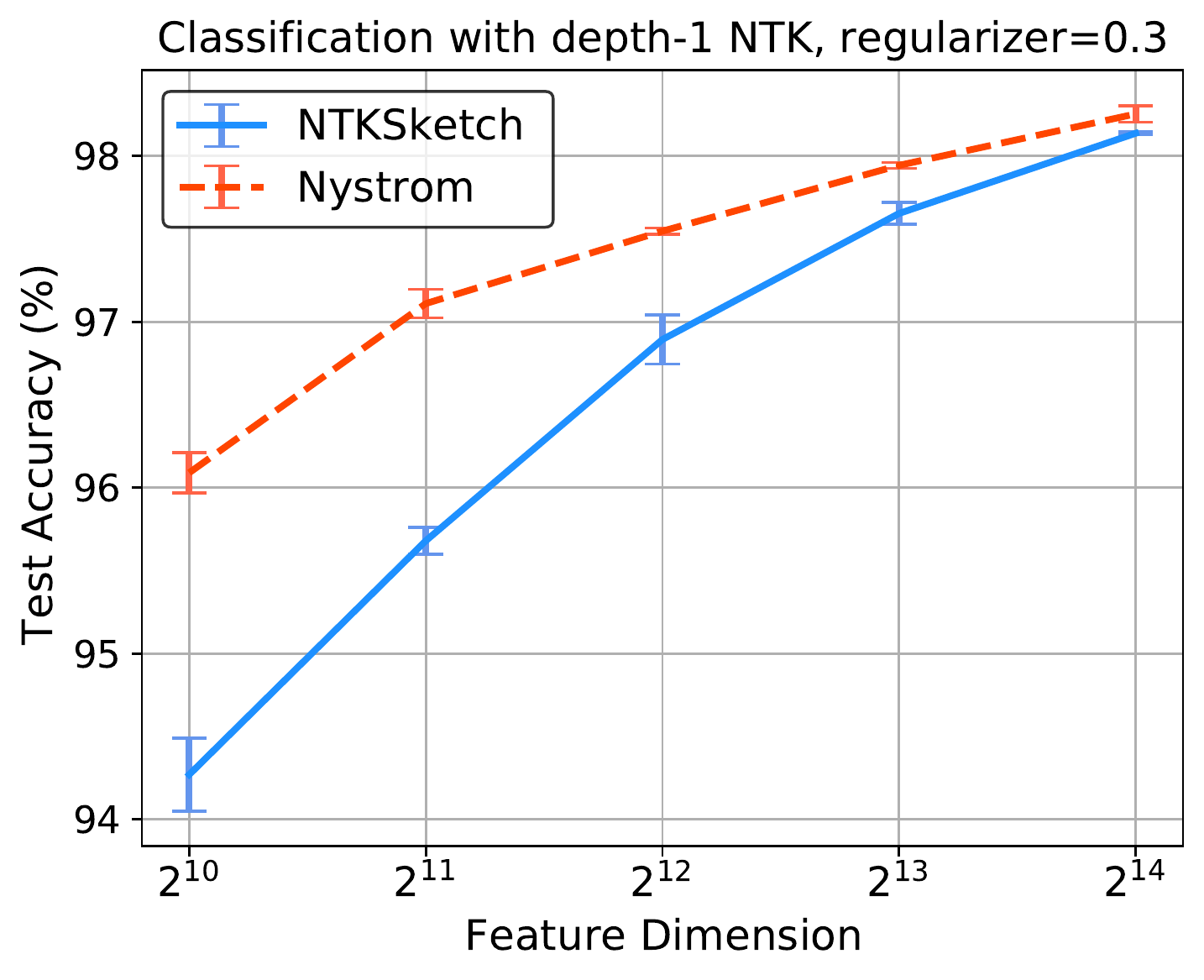}
		\includegraphics[width=0.32\textwidth]{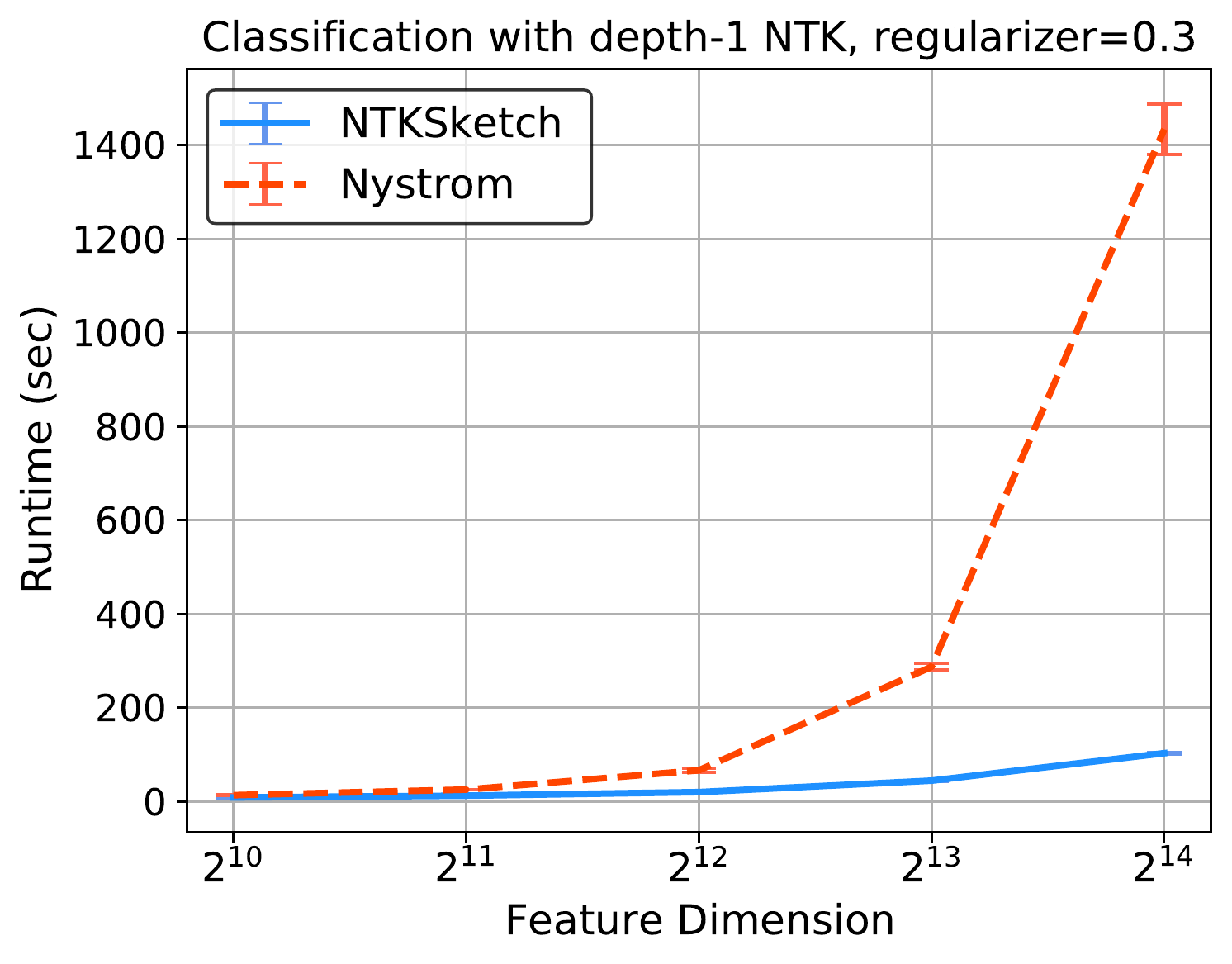}
		\includegraphics[width=0.32\textwidth]{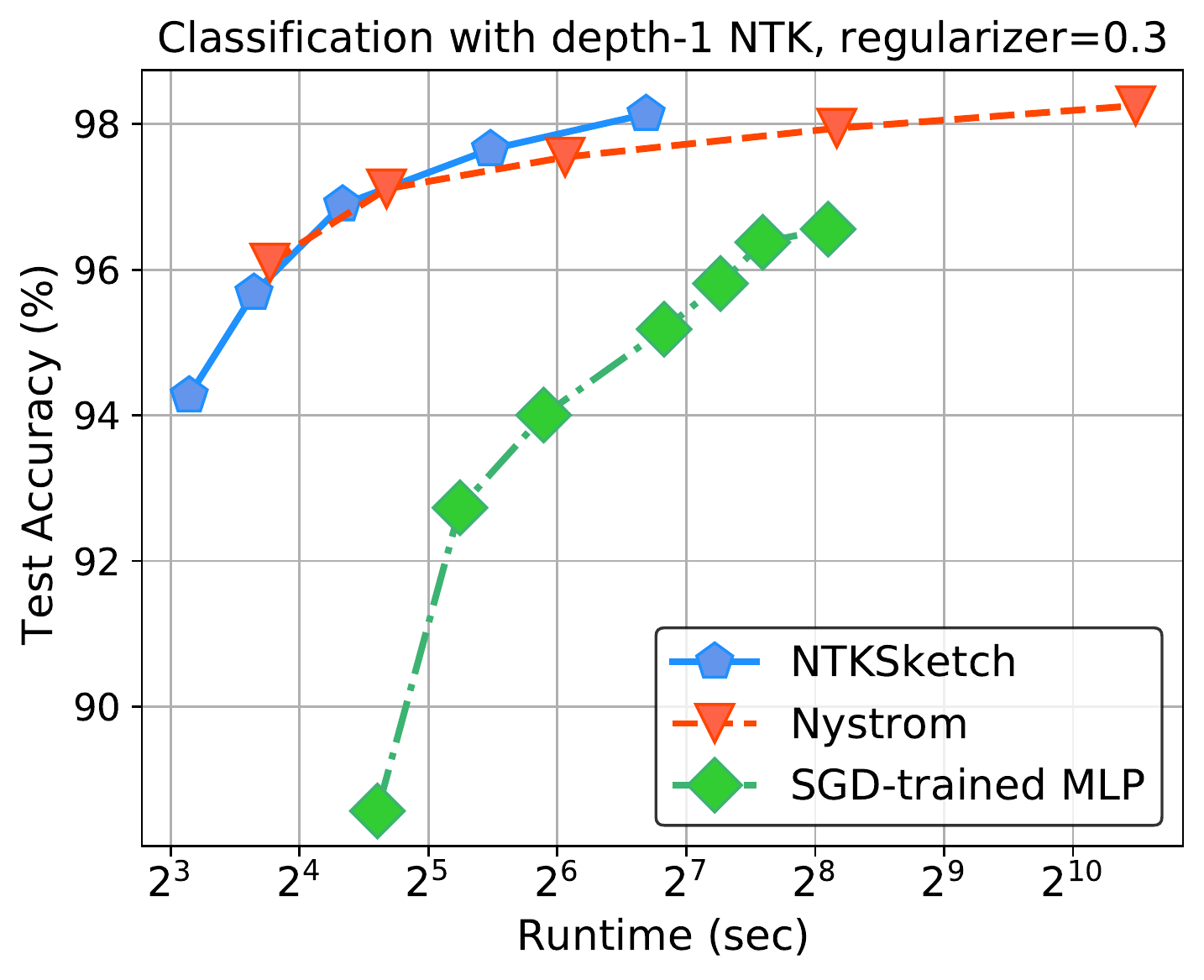}
		\caption{Classification on MNIST dataset using apprximate NTKs of depth $1$ as well as MLP with $1$ hidden layer. The regularizer is $\lambda = 0.3$.}\label{fig:mnist-ntk}
	\end{subfigure}
	\begin{subfigure}{\textwidth}
		\captionsetup{justification=centering,margin=0.5cm}
		\includegraphics[width=0.32\textwidth]{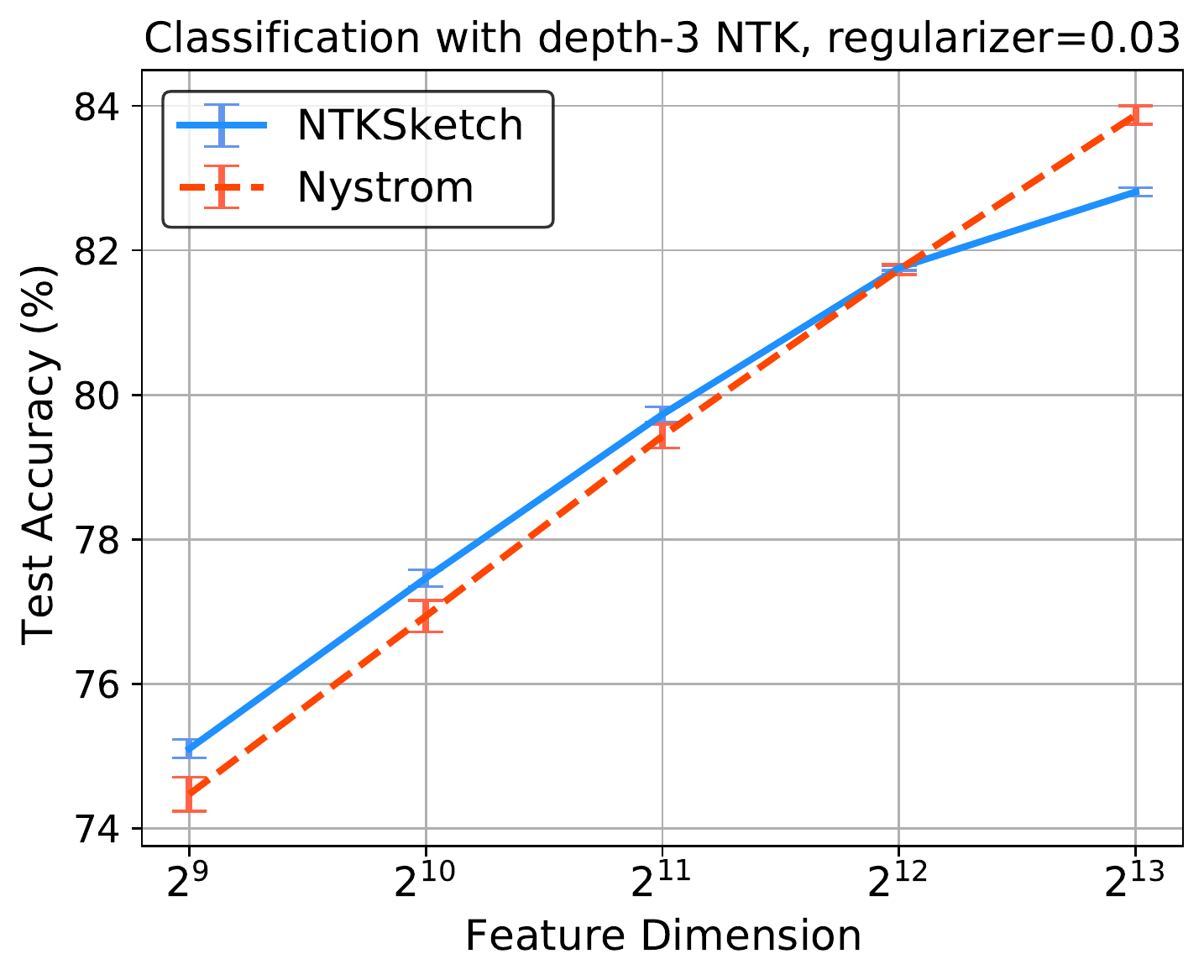}
		\includegraphics[width=0.32\textwidth]{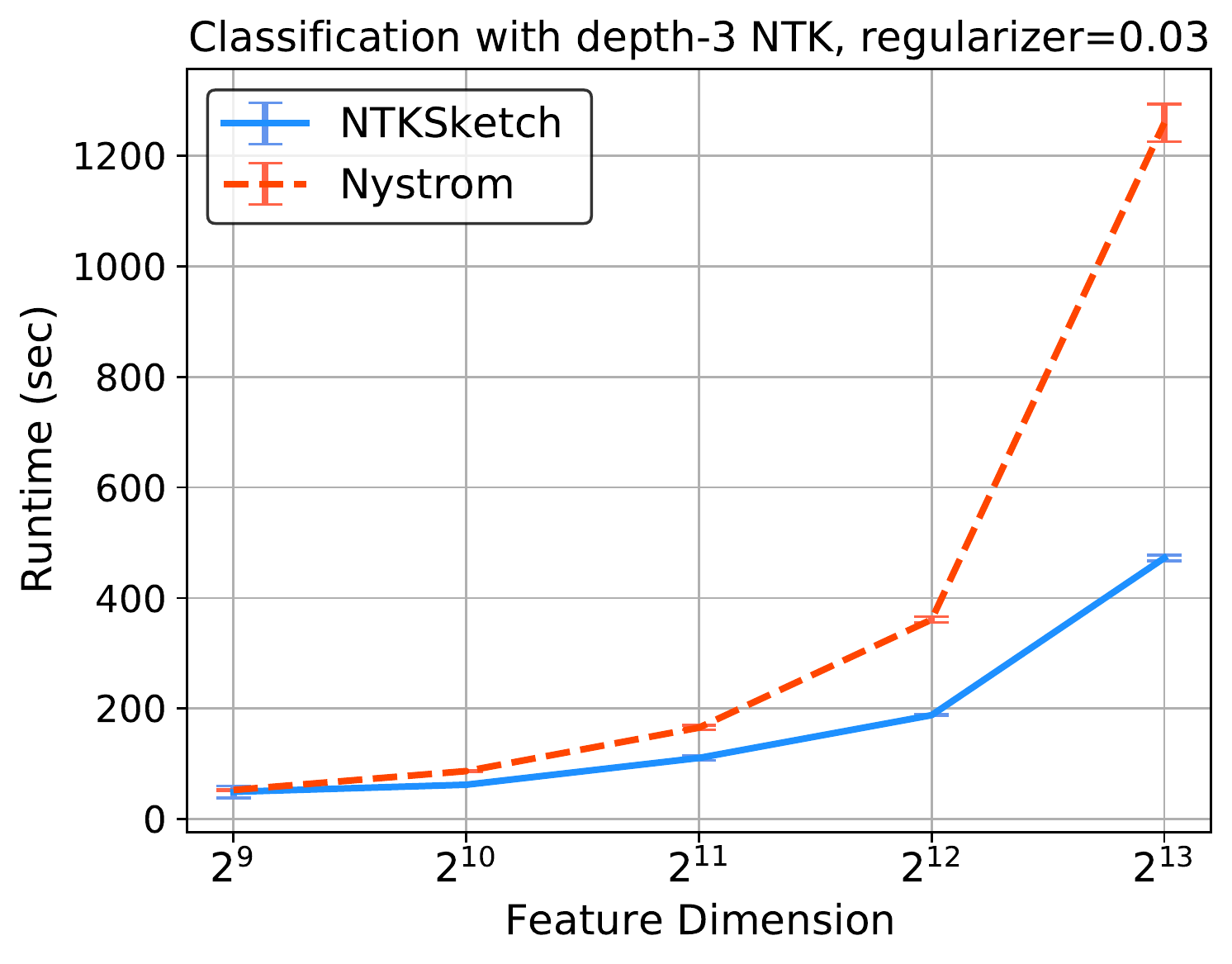}
		\includegraphics[width=0.32\textwidth]{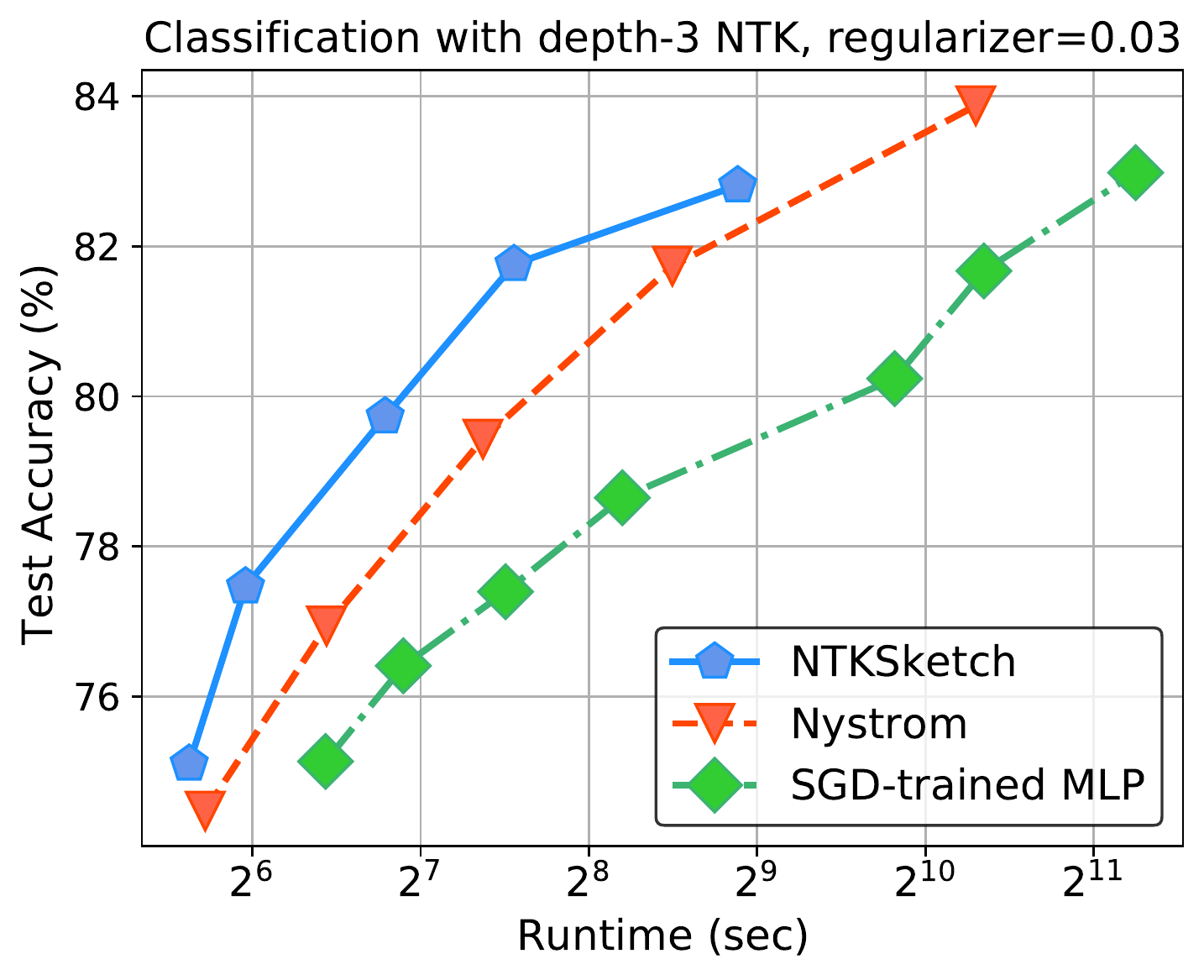}
		\caption{Classification on Forest CoverType dataset using apprximate NTKs of depth $3$ as well as MLP with $3$ hidden layer. The regularizer is $\lambda = 0.03$.}\label{fig:forest-ntk}
	\end{subfigure}
	\begin{subfigure}{\textwidth}
	\captionsetup{justification=centering,margin=0.5cm}
	\includegraphics[width=0.32\textwidth]{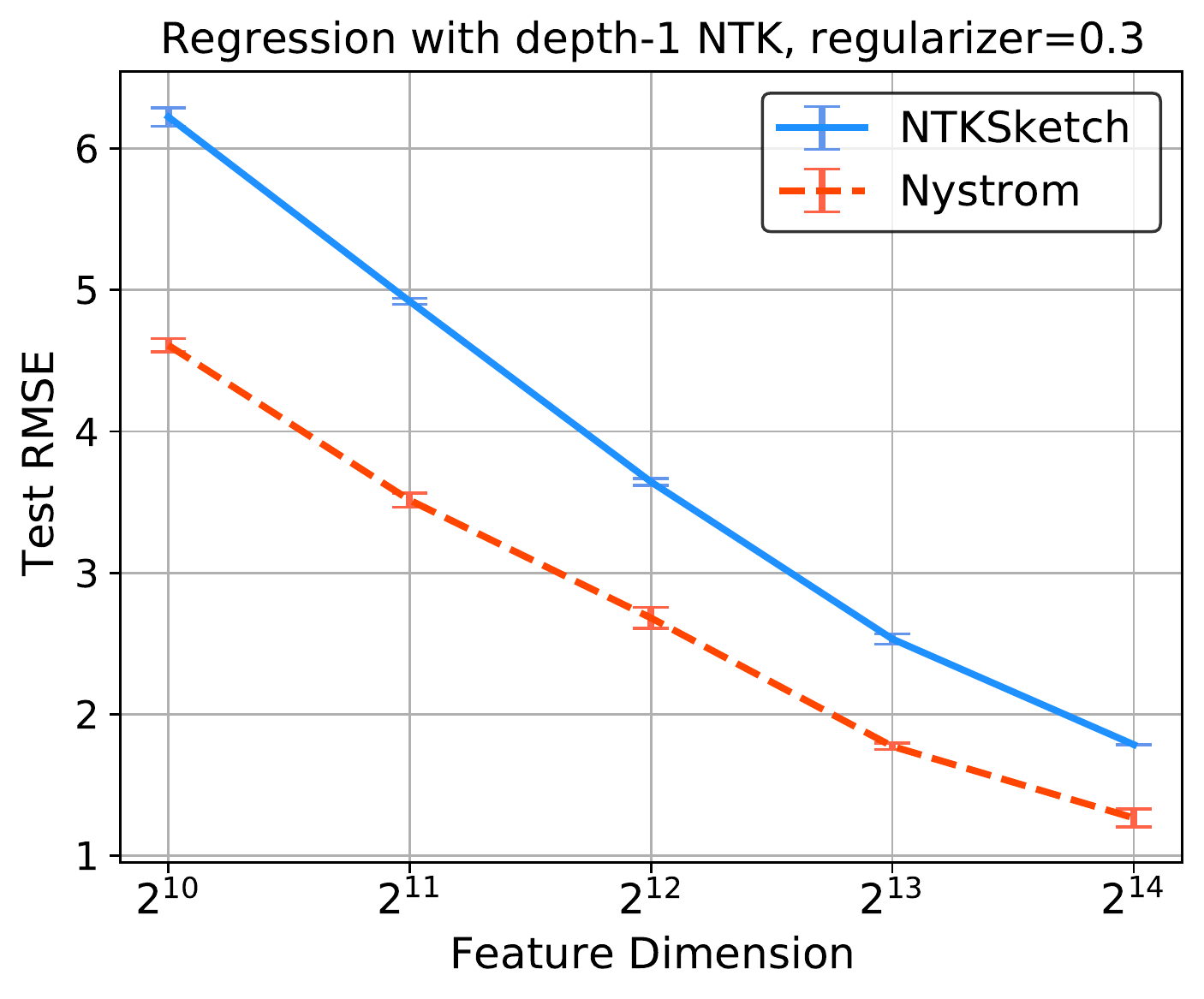}
	\includegraphics[width=0.32\textwidth]{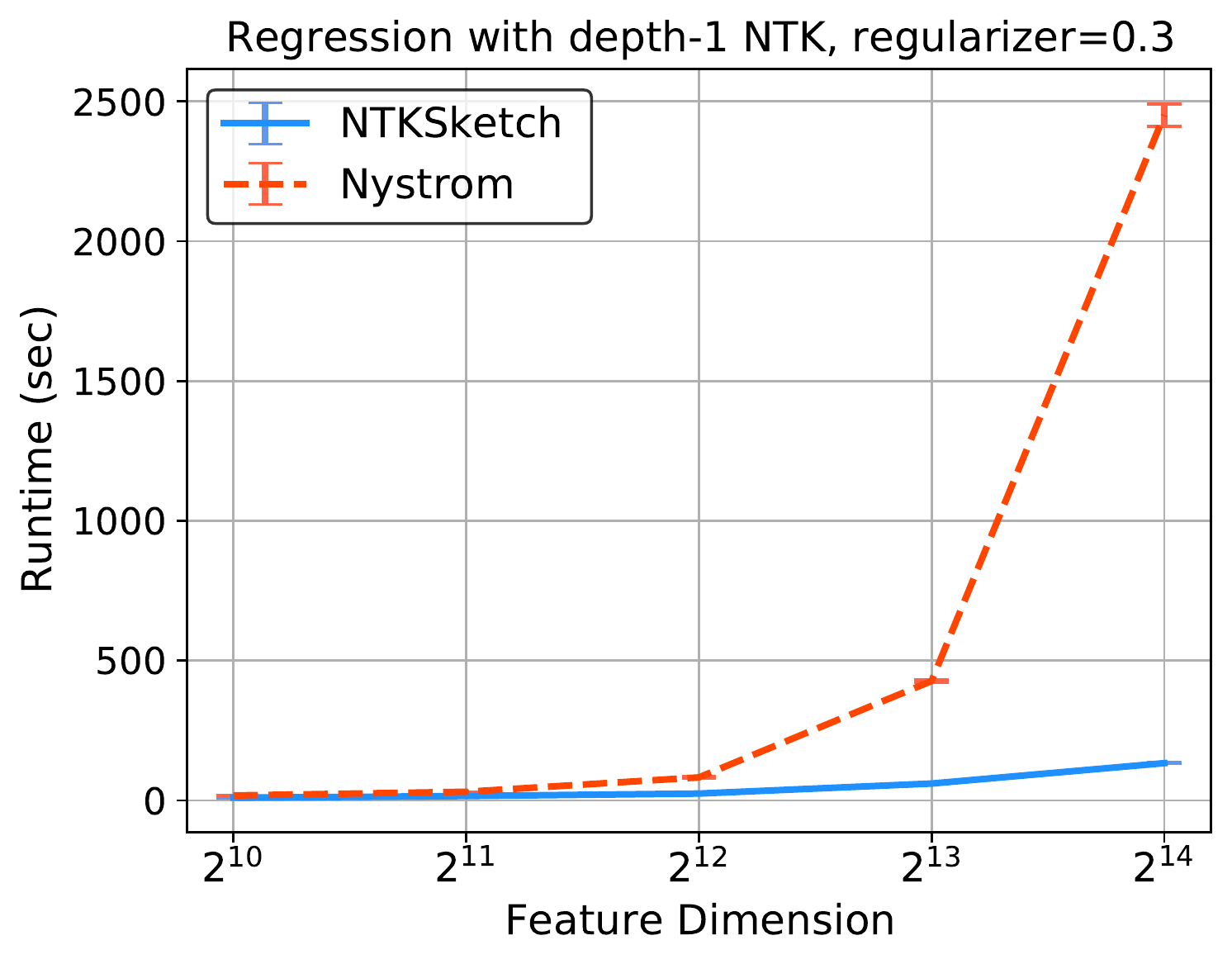}
	\includegraphics[width=0.32\textwidth]{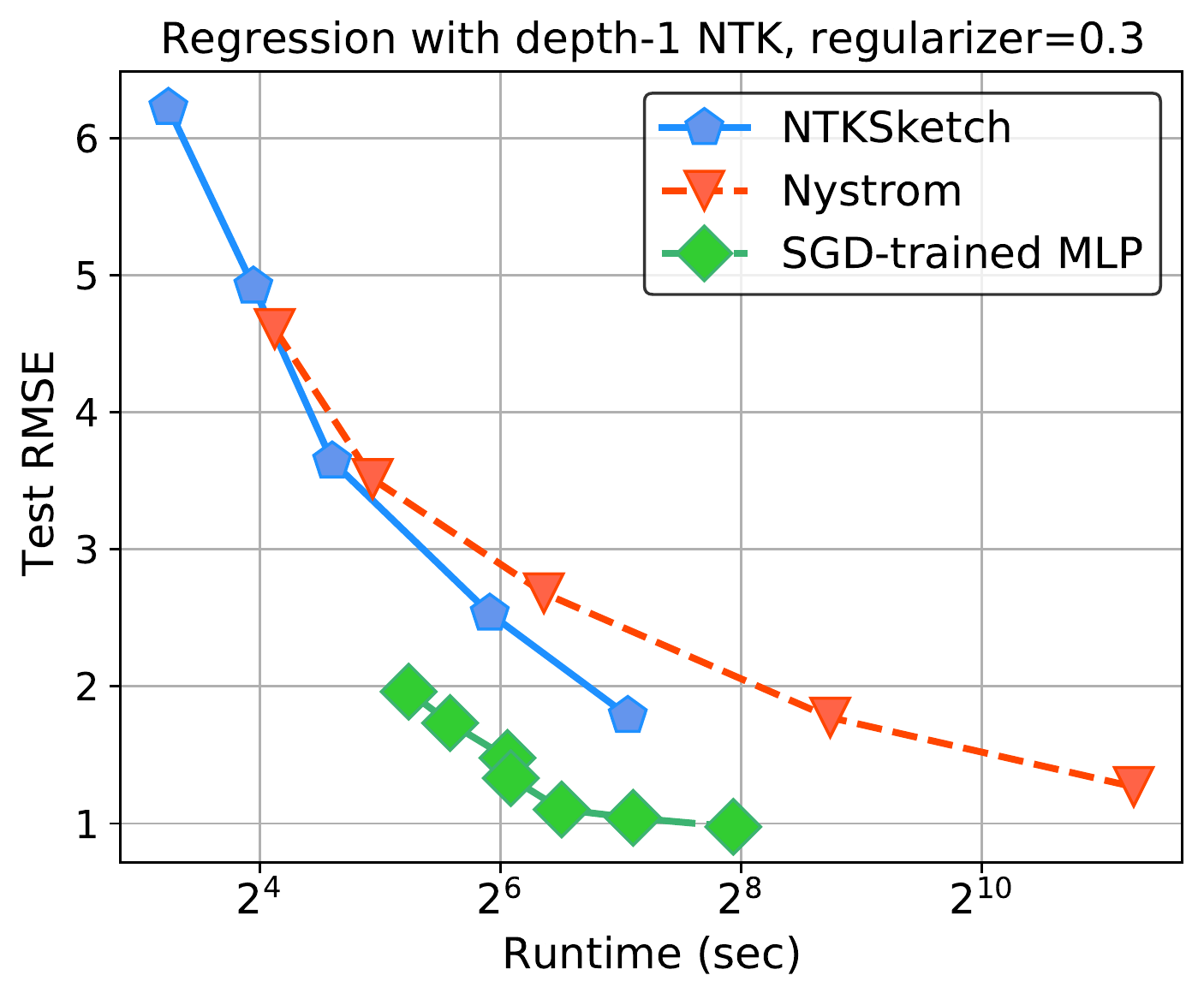}
	\caption{Regression on CT Location dataset using apprximate NTKs of depth $1$ as well as MLP with $1$ hidden layer. The regularizer is $\lambda = 0.3$.}\label{fig:ct-ntk}
	\end{subfigure}
	\captionsetup{justification=centering,margin=0.1cm}
	\caption{The runtime, test accuracy, and test RMSE of various classification and regression methods. Comparison between our NTK Sketch, the Nystrom method of \cite{musco2017recursive} applied to the NTK kernel, and SGD-trained MLP with ReLU activation.} \label{fig:ntksketch}
	\vspace{-7pt}
\end{figure}

In Figure~\ref{fig:ntksketch}, we compare the runtime and accuracy of all algorithms on various datasets and using various network depths $L$. We also plot the trade-off between the accuracy and runtime of different methods. The accuracy vs runtime graph for the MLP method is obtained by varying the width of the hidden layers and training the model for different widths.

Fig.~\ref{fig:mnist-ntk} shows the test set accuracy and the total training and estimation time for classifying MNIST dataset using networks of depth $L=1$ ($1$-hidden layer MLP or depth-$1$ NTK $\Theta_{ntk}^{(1)}$). One can see that while the Nystrom method achieves better accuracy, its runtime is extremely slower than our sketching method and in fact, our method achieves the best trade-off between runtime and accuracy, even better than fully trained MLP.
 
 Fig.~\ref{fig:forest-ntk} shows the results for classifying Forest CoverType dataset using networks of depth $L=3$ ($3$-hidden layer MLP or depth-$3$ NTK $\Theta_{ntk}^{(3)}$). In terms of accuracy, the Nystrom method and our method are quite similar, but the runtime of our sketching method is extremely faster and thus, our method achieves the best trade-off between runtime and accuracy with a large margin.

 Fig.~\ref{fig:forest-ntk} shows the RMSE of the regression on CT Location dataset using networks of depth $L=1$. While the Nystrom method achieves better RMSE, its runtime is extremely slower than our sketching method and in fact, our method achieves better trade-off between runtime and accuracy than the Nystrom method. However, in this experiment, the trained MLP has the best performance.

\paragraph{CNTK Sketch: Classification of CIFAR-10}
We base this set of experiments on the CIFAR-10 dataset. 
We compare the performance of our CNTK Sketch against the Monte Carlo method of \cite{novak2018bayesian} which generates features by taking the gradient of a randomly initialized CNN with finite width. We choose a convolutional network of depth $L=3$ with GAP (CNN with 3 convolutional layers or depth-$3$ CNTK $\Theta_{cntk}^{(3)}$) and compare the runtime and test accuracy of both methods for various feature dimensions.

 \begin{figure}[!t]
 	\centering
 		\captionsetup{justification=centering,margin=0.5cm}
 		\includegraphics[width=0.32\textwidth]{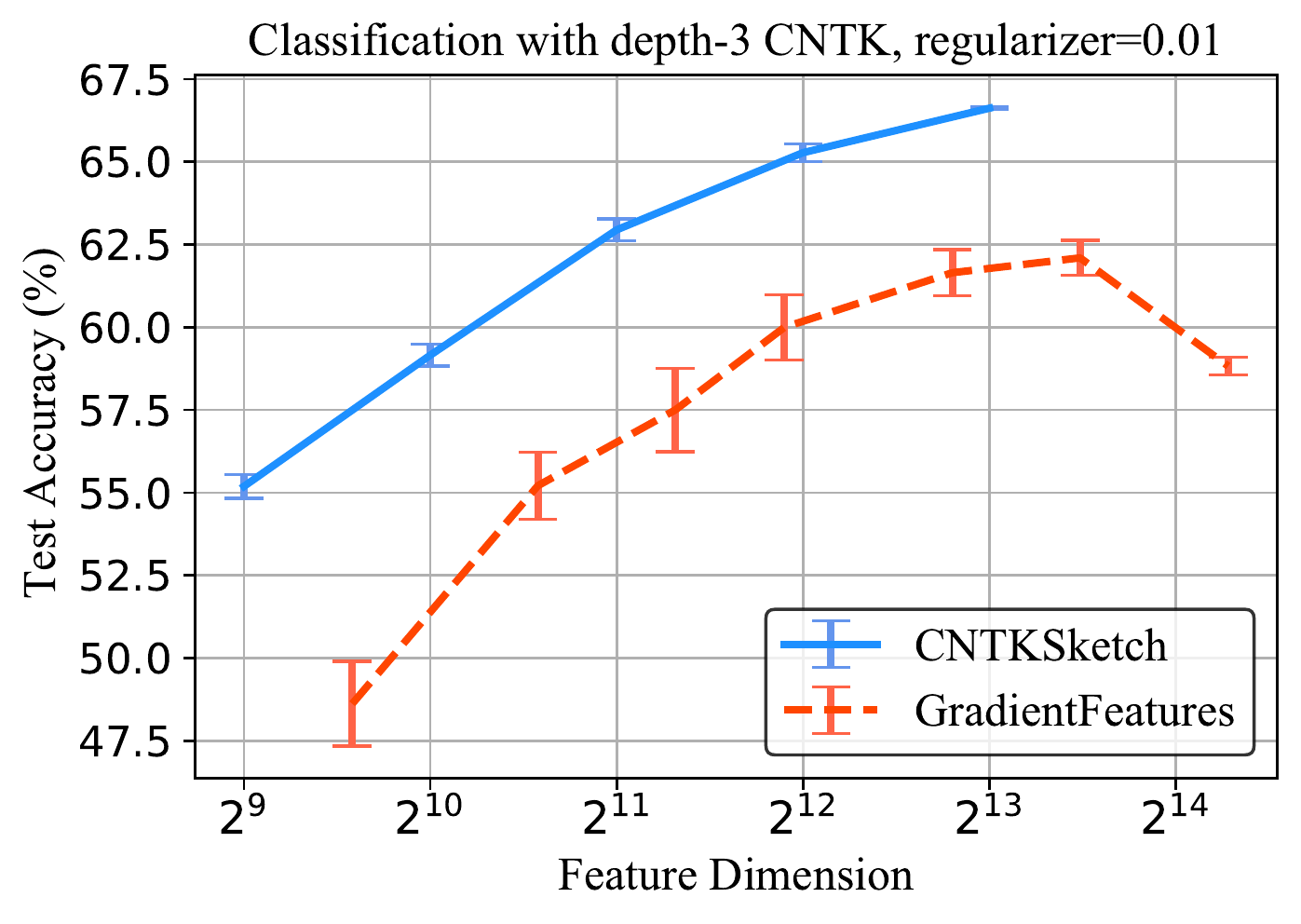}
 		\includegraphics[width=0.32\textwidth]{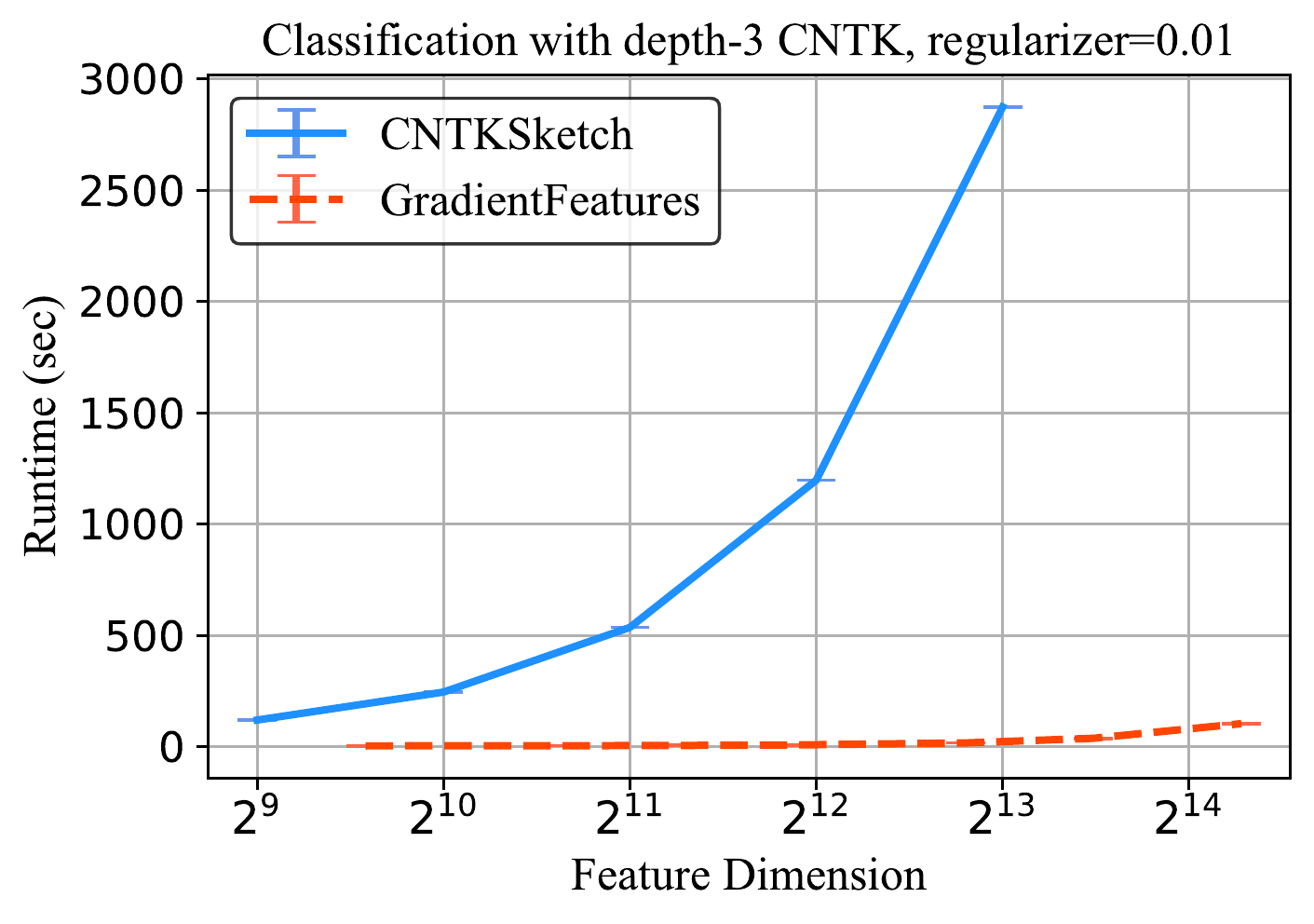}
 		\includegraphics[width=0.32\textwidth]{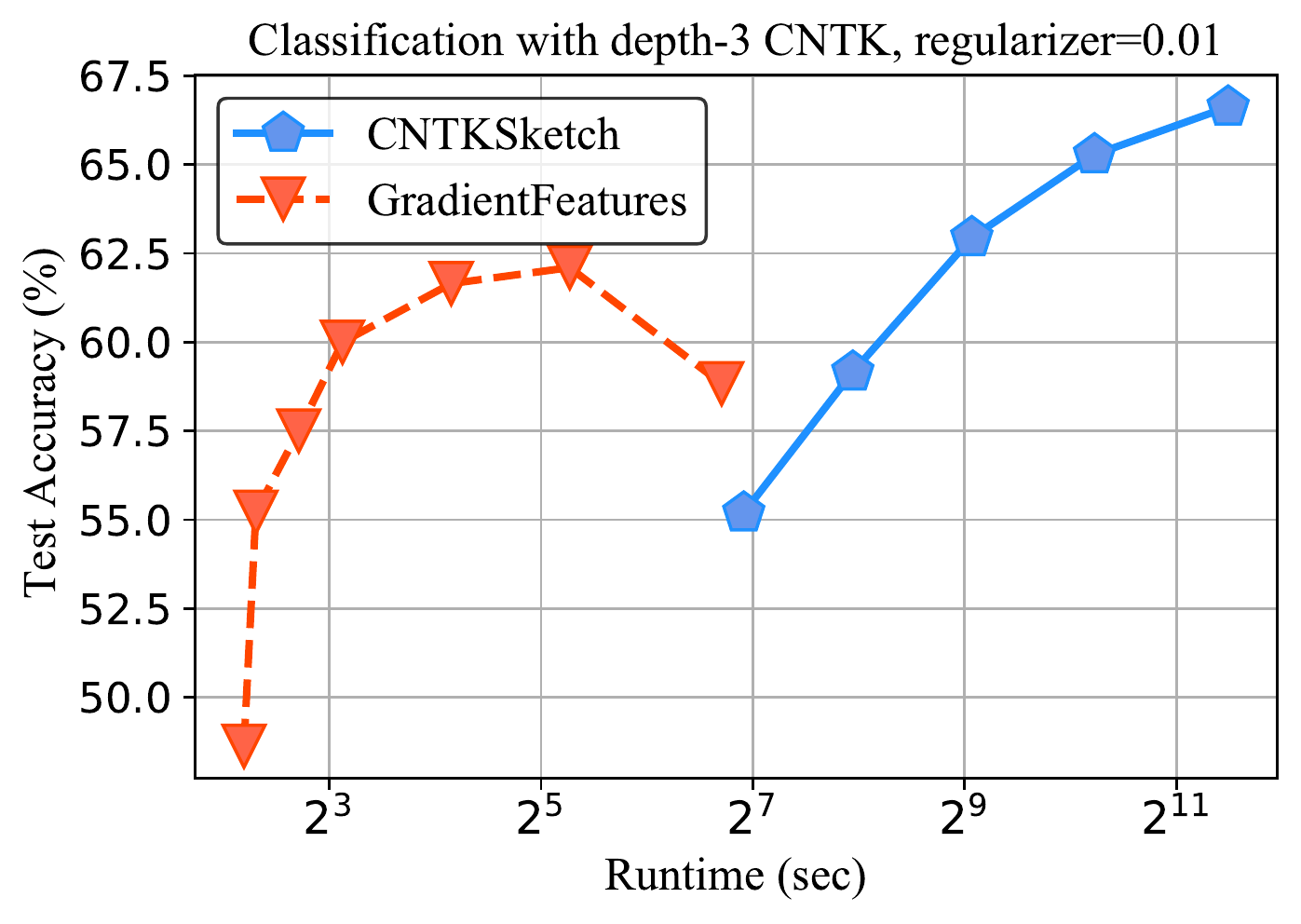}
 	\captionsetup{justification=centering,margin=0.1cm}
 	\caption{The runtime and test accuracy of classifying the CIFAR-10 dataset using CNTK Sketch and Gradient Features for depth $3$ convolutional networks. The regularizer is $\lambda = 0.01$} \label{fig:cntk-cifar}
 	\vspace{-7pt}
 \end{figure}

The results are provided in Figure~\ref{fig:cntk-cifar}. Our CNTK Sketch achieves the best test accuracy for any fixed features dimension while the Gradient Features is much faster. An interesting but expected observation is that the accuracy of the Gradient Features has a high variance which resulted in large error bars in Fig.~\ref{fig:cntk-cifar}. This is caused by approximating an infinitely wide CNN by a finite one. 
Another interesting observation is that increasing the number of features of the Gradient Features does not always improve the test accuracy and in fact the test accuracy saturates at some point and even declines.
As a result, the plot on the right of Fig.~\ref{fig:cntk-cifar} shows that Gradient features performs better when the runtime budget is small but its accuracy peaks at about $62\%$ while our method can achieves higher accuracies by consuming more computational resources.


\bibliography{example_paper}
\bibliographystyle{alpha}

\clearpage

\onecolumn
\appendix
\section{Sketching Background: \PolyS}\label{appendix-sketch-prelims}

Our sketching algorithms use the \PolyS to reduce the dimensionality of the tensor products that arise in our computations. Here we present the proof of Lemma~\ref{soda-result},

\begin{proofof}{Lemma~\ref{soda-result}}
	By invoking Theorem 1.2 of \cite{ahle2020oblivious}, we find that there exists a random sketch $Q^p \in\RR^{m \times d^p}$ such that $m = C \cdot \frac{p}{\varepsilon^2} \log^3 \frac{1}{\varepsilon\delta}$, for some absolute constant $C$, and for any $y\in \RR^{d^p}$, 
	\begin{align*}
		\Pr\left[ \|Q^p y \|_2^2 \in (1\pm\varepsilon) \|y\|_2^2\right] \ge 1- \delta.
	\end{align*}
	This immediately proves the first statement of the lemma.
	
	\begin{figure*}
		\centering
		
		\scalebox{0.9}{
			\begin{tikzpicture}[<-, level/.style={sibling distance=75mm/#1,level distance = 2.3cm}]
				\node [arn_t] (z){}
				child {node [arn_t] (a){}edge from parent [electron]
					child {node [arn_t] (b){}edge from parent [electron]
					}
					child {node [arn_t] (e){}edge from parent [electron]
					}
				}
				child { node [arn_t] (h){}edge from parent [electron]
					child {node [arn_t] (i){}edge from parent [electron]
					}
					child {node [arn_t] (l){}edge from parent [electron]
					}
				};
				
				\node []	at (z.south)	[label=\large{${\bf S_{\text{base}}}$}]	{};

				\node []	at (a.south)	[label=\large{${\bf S_{\text{base}}}$}]	{};

				\node []	at (b.south)	[label=\large${\bf T_{\text{base}}}$]	{};
				\node []	at (e.south)	[label=\large${\bf T_{\text{base}}}$] {};
				\node []	at (h.south)	[label=\large{${\bf S_{\text{base}}}$}] {};

				\node []	at (i.south)	[label=\large${\bf T_{\text{base}}}$] {};
				\node []	at (l.south)	[label=\large${\bf T_{\text{base}}}$] {};

				\draw[draw=black, ->] (2.2,0.2) -- (0.7,0.1);
				\draw[draw=black, ->] (4.5,0) -- (3.8,-1.7);
				
				\node [] at (2.2,0.5) [label=right:\large{ internal nodes: {\sc TensorSRHT}}]	{}
				edge[->, bend right=45] (-3.6,-1.7);
				
				\draw[draw=black, ->] (4.0,-6.1) -- (5.2,-5.2);
				\draw[draw=black, ->] (2.8,-6.1) -- (2,-5.2);
				\draw[draw=black, ->] (1.9,-6.1) -- (-1.6,-5.15);
				
				\node [] at (1,-6.5) [label=right:\large{leaves: {\sc OSNAP}}]	{}
				edge[->, bend left=15] (-5.3,-5.1);
				
			\end{tikzpicture}
		}
		\par

		\caption{The structure of sketch $Q^p$ proposed in Theorem 1.2 of \cite{ahle2020oblivious}: the sketch matrices in nodes of the tree labeled with $S_{\text{base}}$ and $T_{\text{base}}$ are independent instances of {\sc TensorSRHT} and {\sc OSNAP}, respectively.} \label{sketchingtree}
	\end{figure*}
	
	As shown in \cite{ahle2020oblivious}, the sketch $Q^p$ can be applied to tensor product vectors of the form $v_1 \otimes v_2 \otimes \ldots v_p$ by recursive application of $O(p)$ independent instances of OSNAP transform~\cite{nelson2013osnap} and a novel variant of the \SRHT, proposed in \cite{ahle2020oblivious} called {\sc TensorSRHT}, on vectors $v_i$ and their sketched versions. The sketch $Q^p$, as shown in Figure~\ref{sketchingtree}, can be represented by a binary tree with $p$ leaves where the leaves are {\sc OSNAP} sketches and the internal nodes are the {\sc TensorSRHT}. The use of {\sc OSNAP} in the leaves of this sketch structure ensures excellent runtime for sketching sparse input vectors. However, note that if the input vectors are not sparse, i.e., ${\rm nnz}(v_i) = \widetilde{\Omega}(d)$ for input vectors $v_i$, then we can simply remove the {\sc OSNAP} transforms from the leaves of this structure and achieve improved runtime, without hurting the approximation guarantee. Therefore, the sketch $Q^p$ that satisfies the statement of the lemma is exactly the one introduced in \cite{ahle2020oblivious} for sparse input vectors and for non-sparse inputs is obtained by removing the {\sc OSNAP} transforms from the leaves of the sketch structure given in Figure~\ref{sketchingtree}.
	
	{\bf Runtime analysis:} By Theorem 1.2 of \cite{ahle2020oblivious}, for any vector $x \in \RR^d$, $Q^p x^{\otimes p}$ can be computed in time $O\left( p m \log m + \frac{p^{3/2}}{\varepsilon} \log \frac{1}{\delta} \cdot {\rm nnz}(x) \right)$. 
	From the binary tree structure of the sketch, shown in Figure~\ref{sketchingtree}, it follows that once we compute $Q^p x^{\otimes p}$, then $Q^p \left(x^{\otimes p-1} \otimes {e}_1\right)$ can be computed by updating the path from one of the leaves to the root of the binary tree which amounts to applying an instance of {\sc OSNAP} transform on the ${e}_1$ vector and then applying $O(\log p)$ instances of \TSRHT on the intermediate nodes of the tree. This can be computed in a total additional runtime of $O( m\log m \log p )$. By this argument, it follows that $Q^p \left( x^{\otimes p-j} \otimes {e}_1^{j} \right)$ can be computed sequentially for all $j=0,1,2, \cdots p$ in total time $O\left( p m \log p \log m + \frac{p^{3/2}}{\varepsilon} \log\frac{1}{\delta} \cdot {\rm nnz}(x) \right)$. By plugging in the value $m = O\left( \frac{p}{\varepsilon^2} \log^3 \frac{1}{\varepsilon\delta} \right)$, this runtime will be $O\left( \frac{p^2 \log^2 \frac{p}{\varepsilon}}{\varepsilon^2} \log^3 \frac{1}{\varepsilon\delta} + \frac{p^{3/2}}{\varepsilon} \log\frac{1}{\delta} \cdot {\rm nnz}(x) \right)$, which gives the second statement of the lemma for sparse input vectors $x$. If $x$ is non-sparse, as we discussed in the above paragraph, we just need to omit the {\sc OSNAP} transforms from our sketch construction which translates into a runtime of $O\left( \frac{p^2 \log^2 \frac{p}{\varepsilon}}{\varepsilon^2} \log^3 \frac{1}{\varepsilon\delta} + pd \log d \right)$. Therefore, the final runtime bound is $O\left( \frac{p^2 \log^2 \frac{p}{\varepsilon}}{\varepsilon^2} \log^3 \frac{1}{\varepsilon\delta} + \min \left\{\frac{p^{3/2}}{\varepsilon} \log\frac{1}{\delta} \cdot {\rm nnz}(x), pd\log d \right\} \right)$, which proves the second statement of the lemma.
	
	Furthermore, the sketch $Q^p$ can be applied to tensor product of any collection of $p$ vectors. The time to apply $Q^p$ to the tensor product $v_1\otimes v_2\otimes \ldots  v_p$ consists of time of applying OSNAP to each of the vectors $v_1, v_2 \ldots v_p$ and time of applying $O(p)$ instances of \TSRHT to intermediate vectors which are of size $m$. This runtime can be upper bounded by $O\left( \frac{p^2 \log \frac{p}{\varepsilon}}{\varepsilon^2} \log^3 \frac{1}{\varepsilon\delta} + \frac{p^{3/2}}{\varepsilon} d \cdot \log\frac{1}{\delta} \right)$, which proves the third statement of Lemma~\ref{soda-result}.
\end{proofof}

\section{ReLU-NTK Expression}\label{appendix-relu-ntk}
In this section we prove Theorem~\ref{thm:ntk-relu}.
First note that the main component of the DP given in \eqref{eq:dp-covar}, \eqref{eq:dp-derivative-covar}, and \eqref{eq:dp-ntk} for computing the NTK of fully connected network is what we call the \emph{Activation Covariance}. The activation covariance is defined with respect to the activation $\sigma(\cdot)$ as follows,
\begin{definition}[Activation Covariance]\label{def:activation-covariance}
	For any continuous activation function $\sigma:\RR \to \RR$, we define the \emph{Activation Covariance} function $k_\sigma:\RR^d \times \RR^d \to \RR$ as follows:
	\[ k_\sigma(y,z) := \EE_{w \sim \mathcal{N}(0, I_d)} \left[ \sigma(w^\top y) \cdot \sigma(w^\top z) \right] \text{~~ for every }y,z \in \RR^d,\]
	Additionally, we define the \emph{Derivative Activation Covariance} function $\dot{k}_\sigma:\RR^d \times \RR^d \to \RR$ as follows:
	\[ \dot{k}_\sigma(y,z) := \EE_{w \sim \mathcal{N}(0, I_d)} \left[ \dot{\sigma}(w^\top y) \cdot \dot{\sigma}(w^\top z) \right] \text{~~ for every }y,z \in \RR^d. \]
\end{definition}
It is worth noting that one can prove the activation covariance function $k_\sigma$ as well as the derivative activation covariance function $\dot{k}_\sigma$ are positive definite and hence they both define valid kernel functions in $\RR^d \times \RR^d$.
The connection between the ReLU activation covariance functions and arc-cosine kernel functions defined in \eqref{relu-activ-cov} of Definition~\ref{def:relu-ntk} is proved in \cite{cho2009kernel}. We restate this result in the following claim,
\begin{claim}[Covariance of ReLU Activation]\label{relu-covariance}
	For ReLU activation function $\sigma(\alpha) = \max(\alpha,0)$, let $k_\sigma$ and $\dot{k}_\sigma$ denote the activation covariance function and the derivative activation covariance function defined in Definition~\ref{def:activation-covariance}. If we let $\kappa_0(\cdot)$ and $\kappa_1(\cdot)$ be the zeroth and first order arc-cosine kernels defined in \eqref{relu-activ-cov} of Definition~\ref{def:relu-ntk}, then for every positive integer $d$ and every $y,z \in \RR^d$ we have:
	\[ k_\sigma(y,z) = \frac{\|y\|_2\|z\|_2}{2} \cdot \kappa_1 \left(\frac{\langle y, z \rangle}{\|y\|_2\|z\|_2}\right)\]
	and
	\[\dot{k}_\sigma(y,z) = \frac{1}{2} \cdot \kappa_0\left(\frac{\langle y, z \rangle}{\|y\|_2\|z\|_2}\right).\]
\end{claim}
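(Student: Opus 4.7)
My plan is to exploit the homogeneity of ReLU together with rotation invariance of the isotropic Gaussian to reduce both expectations to a two-dimensional Gaussian integral that can be evaluated in polar coordinates. First, observe that $\sigma(cx) = c \sigma(x)$ for any $c \geq 0$, so $\sigma(w^\top y) = \|y\|_2 \sigma(w^\top \hat y)$ where $\hat y := y/\|y\|_2$ (and similarly for $z$). This immediately gives $k_\sigma(y,z) = \|y\|_2 \|z\|_2 \cdot k_\sigma(\hat y, \hat z)$. For the derivative case, $\dot\sigma(\beta) = \mathbbm{1}[\beta > 0]$ is scale-invariant, so $\dot k_\sigma(y,z) = \dot k_\sigma(\hat y, \hat z)$ directly. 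Thus it suffices to prove the identity for unit vectors, where the claim becomes $k_\sigma(\hat y, \hat z) = \tfrac{1}{2}\kappa_1(\alpha)$ and $\dot k_\sigma(\hat y, \hat z) = \tfrac{1}{2}\kappa_0(\alpha)$ with $\alpha := \langle \hat y, \hat z\rangle$.

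Next, since only the two coordinates of $w$ in the span of $\hat y, \hat z$ matter, I would pick an orthonormal basis for $\mathrm{span}(\hat y, \hat z)$ so that $\hat y$ aligns with the first basis vector and $\hat z = \alpha \hat y + \sqrt{1-\alpha^2}\, \hat y^\perp$. Writing $\alpha = \cos\phi$ with $\phi = \arccos(\alpha) \in [0,\pi]$ and letting $(w_1, w_2) \sim \mathcal{N}(0, I_2)$, we obtain $w^\top \hat y = w_1$ and $w^\top \hat z = \cos\phi \cdot w_1 + \sin\phi \cdot w_2$. Converting to polar coordinates $w_1 = r\cos\theta,\; w_2 = r\sin\theta$ with density $\tfrac{r}{2\pi} e^{-r^2/2}$, the second linear form becomes $r\cos(\theta - \phi)$ by the angle-addition identity. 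So ReLU activation at both arguments corresponds to the angular constraint $\cos\theta \geq 0 \wedge \cos(\theta-\phi) \geq 0$, i.e., $\theta \in [\phi - \tfrac{\pi}{2}, \tfrac{\pi}{2}]$ (assuming $\phi \in [0,\pi]$).

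With these reductions, the two computations separate. For $k_\sigma(\hat y, \hat z)$, the radial integral $\int_0^\infty r^3 e^{-r^2/2} dr$ equals $2$, and the angular integral $\int_{\phi - \pi/2}^{\pi/2} \cos\theta \cos(\theta-\phi)\,d\theta$ is evaluated using the product-to-sum identity $\cos\theta\cos(\theta-\phi) = \tfrac{1}{2}(\cos\phi + \cos(2\theta - \phi))$, yielding $\tfrac{1}{2}[\sin\phi + (\pi - \phi)\cos\phi]$; multiplying by the normalizing factor $1/(2\pi)$ and substituting $\cos\phi = \alpha$, $\sin\phi = \sqrt{1-\alpha^2}$, $\phi = \arccos(\alpha)$ gives exactly $\tfrac{1}{2}\kappa_1(\alpha)$. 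For $\dot k_\sigma(\hat y, \hat z)$ the integrand reduces to the indicator of the same angular wedge, so the expectation equals $\tfrac{1}{2\pi}(\pi - \phi) = \tfrac{1}{2}\kappa_0(\alpha)$.

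The main obstacle is not any deep technical step but the careful trigonometric bookkeeping in the angular integral for $k_\sigma$: one must identify the correct wedge of $\theta$-values for the joint positivity of both ReLU arguments as a function of $\phi$, and then simplify the closed form back into $\kappa_1$. A minor case distinction is needed to handle $\phi \in \{0, \pi\}$ (i.e.\ $\alpha = \pm 1$) where $\hat y, \hat z$ fail to span a 2-dimensional subspace, but here both sides are continuous in $\alpha$ and can be obtained by taking limits, so the two-dimensional computation extends to the full range $\alpha \in [-1,1]$.
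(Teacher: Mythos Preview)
Your proof is correct and complete. Note, however, that the paper does not actually give its own proof of this claim: it simply states that the result ``is proved in \cite{cho2009kernel}'' and restates it. Your argument---reduce to unit vectors via ReLU's positive homogeneity and the scale-invariance of $\dot\sigma$, project onto the two-dimensional span using rotational invariance of the isotropic Gaussian, then compute in polar coordinates---is precisely the classical derivation of the arc-cosine kernels from Cho and Saul's original paper, so in effect you have reproduced the proof that the paper defers to. The trigonometric bookkeeping (identifying the wedge $\theta \in [\phi - \tfrac{\pi}{2}, \tfrac{\pi}{2}]$ and evaluating the angular integral via product-to-sum) is handled correctly, and your remark about continuity at $\alpha = \pm 1$ is the right way to dispatch the degenerate collinear case.
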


Now we are ready to prove theorem~\ref{thm:ntk-relu}:

\begin{proofof}{Theorem~\ref{thm:ntk-relu}}
	Consider the NTK expression given in \eqref{eq:dp-covar}, \eqref{eq:dp-derivative-covar}, and \eqref{eq:dp-ntk}.
	We first prove by induction on $h =0,1,2, \ldots L$ that the covariance function $\Sigma^{(h)}(y,z)$ defined in \eqref{eq:dp-covar} satisfies: 
	\[\Sigma^{(h)}(y,z) = \|y\|_2 \|z\|_2 \cdot \Sigma_{relu}^{(h)} \left( \frac{\langle y,z \rangle}{\|y\|_2 \|z\|_2} \right).\]
	The {\bf base of induction} is trivial for $h=0$ because $\Sigma^{(0)}(y,z) = \langle y,z \rangle = \|y\|_2 \|z\|_2 \cdot \Sigma_{relu}^{(0)} \left( \frac{\langle y,z \rangle}{\|y\|_2 \|z\|_2} \right)$.
	
	To prove the {\bf inductive step}, suppose that the inductive hypothesis holds for $h-1$, i.e.,
	\[\Sigma^{(h-1)}(y,z) = \|y\|_2 \|z\|_2 \cdot \Sigma_{relu}^{(h-1)} \left( \frac{\langle y,z \rangle}{\|y\|_2 \|z\|_2} \right)\]
	The $2 \times 2$ covariance matrix $\Lambda^{(h)}(y,z)$, defined in \eqref{eq:dp-covar}, can be decomposed as $\Lambda^{(h)}(y,z) = \begin{pmatrix} f^\top \\ g^\top \end{pmatrix} \cdot \begin{pmatrix}
		f & g \end{pmatrix}$, where $f,g \in \RR^2$. Now note that $\|f\|_2^2 = \Sigma^{(h-1)}(y,y)$, hence, by inductive hypothesis, we have,
	\[ \|f\|_2^2 = \|y\|_2^2 \cdot \Sigma_{relu}^{(h-1)} \left( \frac{\langle y,y \rangle}{\|y\|_2^2} \right) = \|y\|_2^2 \cdot \Sigma_{relu}^{(h-1)}(1) = \|y\|_2^2. \]
	Using a similar argument we have $\|g\|_2^2 = \|z\|_2^2$. Therefore, by Claim~\ref{relu-covariance}, we can write
	\begin{align*}
		\Sigma^{(h)}(y,z) &= \frac{1}{\EE_{x\sim \mathcal{N}(0,1)} \left[ |\sigma(x)|^2 \right]} \cdot \EE_{w \sim \mathcal{N}\left( 0, I_2 \right)} \left[ \sigma(w^\top f) \cdot \sigma(w^\top g) \right]\\
		&= \frac{2}{\kappa_1(1)} \cdot \EE_{w \sim \mathcal{N}\left( 0, I_2 \right)} \left[ \sigma(w^\top f) \cdot \sigma(w^\top g) \right]\\
		&= \|f\|_2 \|g\|_2 \cdot \kappa_1\left( \frac{\langle f, g \rangle}{\|f\|_2 \|g\|_2} \right) = \|y\|_2 \|z\|_2 \cdot \kappa_1\left( \frac{\langle f, g \rangle}{\|y\|_2 \|z\|_2} \right).
	\end{align*}
	Since we assumed that $\Lambda^{(h)}(y,z) = \begin{pmatrix} f^\top \\ g^\top \end{pmatrix} \cdot \begin{pmatrix}
		f & g \end{pmatrix}$, we have $\langle f, g \rangle = \Sigma^{(h-1)}(y,z)$. By inductive hypothesis along with \eqref{eq:dp-covar-relu}, we find that
	\[ \Sigma^{(h)}(y,z) = \|y\|_2 \|z\|_2 \cdot \kappa_1\left( \Sigma_{relu}^{(h-1)} \left( \frac{\langle y,z \rangle}{\|y\|_2 \|z\|_2} \right) \right) = \|y\|_2 \|z\|_2 \cdot \Sigma_{relu}^{(h)}\left( \frac{\langle y, z\rangle}{\|y\|_2 \|z\|_2}  \right), \]
	which completes the induction and proves that for every $h \in \{0,1,2, \ldots L\}$,
	\begin{equation}\label{eq:covariance-dotproduct}
		\Sigma^{(h)}(y,z) = \|y\|_2 \|z\|_2 \cdot \Sigma_{relu}^{(h)} \left( \frac{\langle y,z \rangle}{\|y\|_2 \|z\|_2} \right).
	\end{equation}
	
	For obtaining the final NTK expression we also need to simplify the derivative covariance function defined in \eqref{eq:dp-derivative-covar}.
	Recall that we proved before, the covariance matrix $\Lambda^{(h)}(y,z)$ can be decomposed as $\Lambda^{(h)}(y,z) = \begin{pmatrix} f^\top \\ g^\top \end{pmatrix} \cdot \begin{pmatrix}
		f & g \end{pmatrix}$, where $f,g \in \RR^2$ with $\|f\|_2 = \|y\|_2$ and $\|g\|_2=  \|z\|_2$. Therefore, by Claim~\ref{relu-covariance}, we can write
	\begin{align*}
		\dot{\Sigma}^{(h)}( y, z ) &= \frac{1}{\EE_{x\sim \mathcal{N}(0,1)} \left[ |\sigma(x)|^2 \right]} \cdot \EE_{w \sim \mathcal{N}\left( 0, I_2 \right)} \left[ \dot{\sigma}(w^\top f) \cdot \dot{\sigma}(w^\top g) \right] \\
		&=\frac{2}{\kappa_1(1)} \cdot \EE_{w \sim \mathcal{N}\left( 0, I_2 \right)} \left[ \dot{\sigma}(w^\top f) \cdot \dot{\sigma}(w^\top g) \right]\\
		&= \kappa_0\left( \frac{\langle f, g \rangle}{\|y\|_2 \|z\|_2} \right).
	\end{align*}
	Since we assumed that $\Lambda^{(h)}(y,z) = \begin{pmatrix} f^\top \\ g^\top \end{pmatrix} \cdot \begin{pmatrix}
		f & g \end{pmatrix}$, $\langle f, g \rangle = \Sigma^{(h-1)}(y,z) = \|y\|_2 \|z\|_2 \cdot \Sigma_{relu}^{(h-1)} \left( \frac{\langle y,z \rangle}{\|y\|_2 \|z\|_2} \right)$. Therefore, by \eqref{eq:dp-derivative-covar-relu}, for every $h \in \{1,2, \dots L\}$,
	\begin{equation}\label{eq:der-cov-dotproduct}
		\dot{\Sigma}^{(h)}(y,z) = \kappa_0\left( \Sigma_{relu}^{(h-1)}\left( \frac{\langle y, z \rangle}{\|y\|_2 \|z\|_2} \right) \right) = \dot{\Sigma}_{relu}^{(h)}\left( \frac{\langle y, z \rangle}{\|y\|_2 \|z\|_2} \right).
	\end{equation}
	
	Now we prove by induction on integer $L$ that the NTK with $L$ layers and ReLU activation given in \eqref{eq:dp-ntk} satisfies 
	\[\Theta_{ntk}^{(L)}(y,z) = \|y\|_2 \|z\|_2 \cdot K_{relu}^{(L)}\left( \frac{\langle y,z\rangle}{\|y\|_2 \|z\|_2} \right).\]
	The {\bf base of induction}, trivially holds because, by \eqref{eq:covariance-dotproduct}:
	\[\Theta_{ntk}^{(0)}(y,z)=\Sigma^{(0)}(y,z)=\|y\|_2 \|z\|_2 \cdot K^{(0)}_{relu}\left( \frac{\langle y,z\rangle}{\|y\|_2 \|z\|_2} \right).\]
	
	To prove the {\bf inductive step}, suppose that the inducive hypothesis holds for $L-1$, that is $\Theta_{ntk}^{(L-1)}(y,z) = \|y\|_2 \|z\|_2 \cdot K_{relu}^{(L-1)}\left( \frac{\langle y,z\rangle}{\|y\|_2 \|z\|_2} \right)$. Now using the recursive definition of $\Theta_{ntk}^{(L)}(y,z)$ given in \eqref{eq:dp-ntk} along with \eqref{eq:covariance-dotproduct} and \eqref{eq:der-cov-dotproduct} we can write,
	\begin{align*}
		\Theta_{ntk}^{(L)}(y,z)
		&= \Theta_{ntk}^{(L-1)}(y,z) \cdot \dot{\Sigma}^{(L)}(y,z) + \Sigma^{(L)}(y,z)\\
		&= \|y\|_2 \|z\|_2 \cdot K_{relu}^{(L-1)}\left( \frac{\langle y,z\rangle}{\|y\|_2 \|z\|_2} \right) \cdot \dot{\Sigma}_{relu}^{(h)}\left( \frac{\langle y, z \rangle}{\|y\|_2 \|z\|_2} \right) +  \|y\|_2 \|z\|_2 \cdot \Sigma_{relu}^{(h)} \left( \frac{\langle y,z \rangle}{\|y\|_2 \|z\|_2} \right)\\
		&\equiv \|y\|_2 \|z\|_2 \cdot K_{relu}^{(L)}\left( \frac{\langle y,z\rangle}{\|y\|_2 \|z\|_2} \right),
	\end{align*}
	which gives the theorem.
\end{proofof}

\section{NTK Sketch: Claims and Invariants}\label{appendix-ntk-sketch}

We start by proving that the polynomials $P_{relu}^{(p)}(\cdot)$ and $\dot{P}_{relu}^{(p)}(\cdot)$ defined in \eqref{eq:poly-approx-krelu} of Definition~\ref{alg-def-ntk-sketch} closely approximate the arc-cosine functions $\kappa_1(\cdot)$ and $\kappa_0(\cdot)$ on the interval $[-1,1]$.

\paragraph{Remark.} Observe that $\kappa_0(\alpha) = \frac{d}{d\alpha}\left( \kappa_1(\alpha) \right)$.

\begin{lemma}[Polynomial Approximations to $\kappa_1$ and $\kappa_0$]\label{lem:polynomi-approx-krelu}
	If we let ${k}_{relu}(\cdot)$ and $\kappa_0(\cdot)$ be defined as in \eqref{relu-activ-cov} of Definition~\ref{def:relu-ntk}, then for any integer $p \ge \frac{1}{9 \epsilon^{2/3}}$, the polynomial $P_{relu}^{(p)}(\cdot)$ defined in \eqref{eq:poly-approx-krelu} of Definition~\ref{alg-def-ntk-sketch} satisfies,
	\[ \max_{\alpha \in [-1,1]} \left| P_{relu}^{(p)}(\alpha) - \kappa_1(\alpha) \right| \le \epsilon.\]
	Moreover, for any integer $p' \ge \frac{1}{26 \epsilon^2}$, the polynomial $\dot{P}_{relu}^{(p')}(\cdot)$ defined as in \eqref{eq:poly-approx-krelu} of Definition~\ref{alg-def-ntk-sketch}, satisfies,
	\[ \max_{\alpha \in [-1,1]} \left| \dot{P}_{relu}^{(p')}(\alpha) - \kappa_0(\alpha) \right| \le \epsilon.\]
\end{lemma}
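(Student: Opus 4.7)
\medskip

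\noindent\textbf{Proof proposal.} The plan is to show that $P_{relu}^{(p)}$ and $\dot{P}_{relu}^{(p')}$ are exactly the degree-$(2p+2)$ and degree-$(2p'+1)$ truncations of the Taylor series about $0$ of $\kappa_1$ and $\kappa_0$, respectively, and then bound the tail sums.

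First I would identify the Taylor expansions. Starting from $\arccos(\alpha) = \pi/2 - \arcsin(\alpha)$, so $\kappa_0(\alpha) = \tfrac{1}{2} + \tfrac{1}{\pi}\arcsin(\alpha)$, and plugging in the standard series $\arcsin(\alpha) = \sum_{i\ge 0} \tfrac{(2i)!}{2^{2i}(i!)^2(2i+1)}\alpha^{2i+1}$ (valid for $|\alpha|\le 1$) gives exactly the coefficients $b_j$ in \eqref{eq:poly-approx-krelu}. For $\kappa_1$ I would use the identity $\kappa_1'(\alpha)=\kappa_0(\alpha)$ (which follows by direct differentiation of the formula in \eqref{relu-activ-cov}, the two $\pm\alpha/\sqrt{1-\alpha^2}$ pieces cancelling) together with $\kappa_1(0)=1/\pi$ and termwise integration of the arcsine series; this produces the coefficients $c_j$ in \eqref{eq:poly-approx-krelu}. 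Thus $\kappa_1(\alpha)-P_{relu}^{(p)}(\alpha)=\tfrac{1}{\pi}\sum_{i\ge p+1}\tfrac{(2i)!}{2^{2i}(i!)^2(2i+1)(2i+2)}\alpha^{2i+2}$, and similarly for $\kappa_0-\dot P_{relu}^{(p')}$.

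Next, I would note that every tail coefficient is nonnegative and every tail monomial has a fixed-sign behaviour in $|\alpha|$, so on $[-1,1]$ the maximum of $|\kappa_1-P_{relu}^{(p)}|$ and $|\kappa_0-\dot P_{relu}^{(p')}|$ is attained at $\alpha=\pm 1$. This reduces the claim to bounding the two real tail sums
\begin{equation*}
T_1(p)=\sum_{i=p+1}^{\infty}\frac{(2i)!}{\pi\cdot 2^{2i}(i!)^2(2i+1)(2i+2)}, \qquad T_0(p')=\sum_{i=p'+1}^{\infty}\frac{(2i)!}{\pi\cdot 2^{2i}(i!)^2(2i+1)}.
\end{equation*}

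To bound these I would invoke the central-binomial estimate $\binom{2i}{i}/4^{i}\le 1/\sqrt{\pi i}$ (a form of Wallis/Stirling), which gives the term of $T_0$ at most $\tfrac{1}{\pi^{3/2}}\cdot i^{-1/2}(2i+1)^{-1}\le \tfrac{1}{2\pi^{3/2}}i^{-3/2}$ and the term of $T_1$ at most $\tfrac{1}{4\pi^{3/2}}i^{-5/2}$. Comparing the resulting sums to integrals yields
\begin{equation*}
T_0(p')\le \frac{1}{\pi^{3/2}\sqrt{p'}}, \qquad T_1(p)\le \frac{1}{6\pi^{3/2}\,p^{3/2}}.
\end{equation*}
Solving $T_0(p')\le\epsilon$ and $T_1(p)\le\epsilon$ for $p',p$ produces thresholds of the form $p'\ge c_0/\epsilon^{2}$ and $p\ge c_1/\epsilon^{4/3}$; tuning the elementary constants (tightening the Wallis bound or the integral comparison as needed) matches the stated cutoffs $\tfrac{1}{26\epsilon^{2}}$ and $\tfrac{1}{9\epsilon^{2/3}}$.

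The only real obstacle is squeezing the absolute constants: the integral comparison gives a bound that is off by a small multiplicative factor from what the lemma claims, so I would need a slightly sharper Wallis-type inequality (or an Euler--Maclaurin-style correction for the first few terms of the tail) to hit $1/9$ and $1/26$ exactly. Everything else is routine series manipulation and does not involve convergence subtleties because the tails are geometrically dominated well inside the radius of convergence and continuous up to $\alpha=\pm 1$ by Abel summation applied to the monotone-coefficient series.
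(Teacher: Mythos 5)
Your proposal is correct and follows essentially the same route as the paper: identify $P_{relu}^{(p)}$ and $\dot P_{relu}^{(p')}$ as truncated Taylor series, reduce to the tail sum evaluated at $|\alpha|=1$ (legitimate since the tail coefficients are nonnegative), bound the central binomial coefficient by a Stirling/Wallis estimate, and compare the resulting sum to an integral. The paper substitutes the two-sided Stirling approximation directly (arriving at the constant $e/(\sqrt{2}\pi^2)$ in front of $\sum n^{-1/2}(2n+1)^{-1}$), whereas you use the marginally sharper inequality $\binom{2n}{n}4^{-n}\le(\pi n)^{-1/2}$, which gives $\pi^{-3/2}$; the two are the same idea, and your constant is in fact a bit smaller. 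Consequently your worry at the end is unfounded: with $T_0(p')\le \pi^{-3/2}p'^{-1/2}$ and $p'\ge 1/(26\epsilon^2)$ you get $T_0\le\sqrt{26}\,\pi^{-3/2}\epsilon\approx 0.92\,\epsilon<\epsilon$, and with $T_1(p)\le\tfrac{1}{6}\pi^{-3/2}p^{-3/2}$ and $p\ge 1/(9\epsilon^{2/3})$ you get $T_1\le\tfrac{27}{6}\pi^{-3/2}\epsilon\approx 0.81\,\epsilon<\epsilon$ — no extra sharpening is needed to hit the stated cutoffs.
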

\begin{proof}
	We start by Taylor series expansion of $\kappa_0(\cdot)$ at $\alpha=0$, $\kappa_0(\alpha) = \frac{1}{2} + \frac{1}{\pi} \cdot \sum_{n=0}^\infty \frac{(2n)!}{2^{2n} \cdot (n!)^2 \cdot (2n+1)} \cdot \alpha^{2n+1}$.
	Therefore, we have
	\begin{align*}
		\max_{\alpha \in [-1,1]} \left| \dot{P}_{relu}^{(p')}(\alpha) - \kappa_0(\alpha) \right| &= \frac{1}{\pi} \cdot \sum_{n=p'+1}^\infty \frac{(2n)!}{2^{2n} \cdot (n!)^2 \cdot (2n+1)}\\
		&\le \frac{1}{\pi} \cdot \sum_{n = p'+1}^\infty \frac{e \cdot e^{-2n} \cdot (2n)^{2n + 1/2}}{ 2\pi \cdot 2^{2n} \cdot e^{-2n} \cdot n^{2n + 1} \cdot (2n+1)}\\
		&= \frac{e}{\sqrt{2} \pi^2} \cdot \sum_{n = p'+1}^\infty \frac{1}{\sqrt{n} \cdot (2n+1)}\\
		&\le \frac{e}{\sqrt{2} \pi^2} \cdot \int_{p'}^\infty \frac{1}{\sqrt{x} \cdot (2x+1)} dx\\
		&\le \frac{e}{\sqrt{2} \pi^2} \cdot \frac{1}{\sqrt{p'}} \le \epsilon.
	\end{align*}
	
	To prove the second part of the lemma, we consider the Taylor expansion of $\kappa_1(\cdot)$ at $\alpha=0$. Observe that $\kappa_0(\alpha) = \frac{d}{d\alpha}\left(\kappa_1(\alpha)\right)$, therefore, the Taylor series of $\kappa_1(\alpha)$ can be obtained from the Taylor series of $\kappa_0(\alpha)$ as follows,
	\[ \kappa_1(\alpha) = \frac{1}{\pi} + \frac{\alpha}{2} + \frac{1}{\pi} \cdot \sum_{n=0}^\infty \frac{(2n)!}{2^{2n} \cdot (n!)^2 \cdot (2n+1)\cdot (2n+2)} \cdot \alpha^{2n+2}. \]
	Hence, we have
	\begin{align*}
		\max_{\alpha \in [-1,1]} \left| P_{relu}^{(p)}(\alpha) - \kappa_1(\alpha) \right| &= \frac{1}{\pi} \cdot \sum_{n=p+1}^\infty \frac{(2n)!}{2^{2n} \cdot (n!)^2 \cdot (2n+1) \cdot (2n+2)}\\
		&\le \frac{1}{\pi} \cdot \sum_{n = p+1}^\infty \frac{e \cdot e^{-2n} \cdot (2n)^{2n + 1/2}}{ 2\pi \cdot 2^{2n} \cdot e^{-2n} \cdot n^{2n + 1} \cdot (2n+1) \cdot (2n+2)}\\
		&= \frac{e}{\sqrt{2} \pi^2} \cdot \sum_{n = p+1}^\infty \frac{1}{\sqrt{n} \cdot (2n+1) \cdot (2n+2) }\\
		&\le \frac{e}{\sqrt{2} \pi^2} \cdot \int_{p}^\infty \frac{1}{\sqrt{x} \cdot (2x+1) \cdot (2x+2)} dx\\
		&\le \frac{e}{\sqrt{2} \pi^2} \cdot \frac{1}{6 \cdot p^{3/2}} \le \epsilon.
	\end{align*}
\end{proof}

Therefore, it is possible to approximate the function $\kappa_0(\cdot)$ up to error $\epsilon$ using a polynomial of degree $O\left(\frac{1}{ \epsilon^2}\right)$. Also if we want to approximate $\kappa_1(\cdot)$ using a polynomial up to error $\epsilon$ on the interval $[-1,1]$, it suffices to use a polynomial of degree $O\left(\frac{1}{\epsilon^{2/3} }\right)$.
One can see that since the Taylor expansion of $\kappa_1$ and $\kappa_0$ contain non-negative coefficients only, both of these functions are positive definite. Additionally, the Polynomial approximations $P_{relu}^{(p)}$ and $\dot{P}_{relu}^{(p')}$ given in \eqref{eq:poly-approx-krelu} of Definition~\ref{alg-def-ntk-sketch} are positive definite functions.

In order to prove Theorem~\ref{mainthm-ntk}, we also need the following lemma on the error sensitivity of polynomials $P_{relu}^{(p)}$ and $\dot{P}_{relu}^{(p')}$,
\begin{lemma}[Sensitivity of $P_{relu}^{(p)}$ and $\dot{P}_{relu}^{(p)}$]\label{lema:sensitivity-polynomial}
	For any integer $p\ge 3$, any $\alpha \in [-1,1]$, and any $\alpha'$ such that $|\alpha - \alpha'| \le \frac{1}{6p}$, if we let the polynomials $P_{relu}^{(p)}(\alpha)$ and $\dot{P}_{relu}^{(p)}(\alpha)$ be defined as in \eqref{eq:poly-approx-krelu} of Definition~\ref{alg-def-ntk-sketch}, then 
	\[ \left| P_{relu}^{(p)}(\alpha) - P_{relu}^{(p)}(\alpha') \right| \le |\alpha - \alpha'|, \]
	and
	\[ \left| \dot{P}_{relu}^{(p)}(\alpha) - \dot{P}_{relu}^{(p)}(\alpha') \right| \le \sqrt{p} \cdot |\alpha - \alpha'|. \]
\end{lemma}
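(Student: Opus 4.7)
The natural approach is to bound both polynomials' derivatives uniformly on an interval containing $\alpha$ and $\alpha'$ and then invoke the mean value theorem. The two claims will reduce to bounds on the same family of polynomials, linked by the key observation that $P_{relu}^{(p)}{}'(\alpha) = \dot P_{relu}^{(p)}(\alpha)$.

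\emph{Step 1: An algebraic identity.} I would first verify by term-by-term differentiation of \eqref{eq:poly-approx-krelu} that
\[
P_{relu}^{(p)}{}'(\alpha)
= \tfrac{1}{2} + \tfrac{1}{\pi}\sum_{i=0}^{p} \tfrac{(2i)!}{2^{2i}(i!)^2 (2i+1)}\,\alpha^{2i+1}
= \dot P_{relu}^{(p)}(\alpha),
\]
where the factor $(2i+2)$ produced by differentiating $\alpha^{2i+2}$ exactly cancels the $(2i+2)$ in the denominator. This mirrors the relation $\kappa_1'=\kappa_0$.

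\emph{Step 2: Bounding $\dot P_{relu}^{(p)}$ on $[-1,1]$.} All coefficients of $\dot P_{relu}^{(p)}$ are non-negative, so for $\xi\in[0,1]$ the partial sum is monotone in $p$ and is dominated by its infinite-series limit $\kappa_0(\xi)\in[0,1]$ (by Lemma~\ref{lem:polynomi-approx-krelu}'s Taylor expansion, the tail is positive). For $\xi\in[-1,0]$, I use the odd-symmetry identity $\dot P_{relu}^{(p)}(-\xi)=1-\dot P_{relu}^{(p)}(\xi)$ to conclude $\dot P_{relu}^{(p)}(\xi)\in[0,1]$ as well. Combined with Step 1 and the MVT, this yields $|P_{relu}^{(p)}(\alpha)-P_{relu}^{(p)}(\alpha')|\le |\alpha-\alpha'|$, which is the first claim.

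\emph{Step 3: Derivative of $\dot P_{relu}^{(p)}$.} Differentiating again gives
\[
\dot P_{relu}^{(p)}{}'(\alpha) = \tfrac{1}{\pi}\sum_{i=0}^{p}\tfrac{(2i)!}{2^{2i}(i!)^2}\,\alpha^{2i}
= \tfrac{1}{\pi}\sum_{i=0}^{p}\binom{2i}{i}2^{-2i}\,\alpha^{2i},
\]
which is, formally, the partial sum of the Taylor series of $\tfrac{1}{\pi\sqrt{1-\alpha^2}}$. I would apply the standard bound $\binom{2i}{i}2^{-2i}\le \tfrac{1}{\sqrt{\pi i}}$ (for $i\ge 1$; the $i=0$ term is $1$). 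Using $|\alpha|\le 1$ (say) and an integral comparison,
\[
\bigl|\dot P_{relu}^{(p)}{}'(\alpha)\bigr|
\le \tfrac{1}{\pi}\!\left(1+\!\sum_{i=1}^p\tfrac{1}{\sqrt{\pi i}}\right)
\le \tfrac{1}{\pi}\!\left(1+\tfrac{2\sqrt{p}}{\sqrt{\pi}}\right)\!,
\]
which is at most $\sqrt{p}$ for $p\ge 3$. Combined with MVT this gives the second claim $|\dot P_{relu}^{(p)}(\alpha)-\dot P_{relu}^{(p)}(\alpha')|\le \sqrt{p}\,|\alpha-\alpha'|$.

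\emph{Main obstacle.} The delicate point is handling $\alpha'$ that may slip slightly outside $[-1,1]$: the hypothesis $|\alpha-\alpha'|\le \tfrac{1}{6p}$ only places $\alpha'$ in the enlarged interval $[-1-\tfrac{1}{6p},\,1+\tfrac{1}{6p}]$. Here the clean monotone-domination argument of Step 2 breaks down, so I would compensate by exploiting the bound $(1+\tfrac{1}{6p})^{2p+1}\le e^{1/3+1/(6p)}$, which controls the inflation of every $\alpha^{2i+1}$ factor on the enlarged interval. The $\tfrac{1}{6p}$ threshold is chosen precisely so that this exponential blow-up stays a modest constant, and combining it with the $O(1/\sqrt{p})$ truncation slack of $1-\dot P_{relu}^{(p)}(1)$ (from the positive tail of the Taylor series) keeps the derivative bound below $1$ (respectively $\sqrt{p}$) on the whole interval. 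This trade-off between the truncation gap and the boundary enlargement is the only nontrivial piece; once secured, both inequalities follow immediately from the MVT.
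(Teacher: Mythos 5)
Your plan --- uniform derivative bound plus the mean value theorem --- matches the paper's proof, but you arrive at the derivative bounds by a different route. Two things you do are genuinely different. First, the identity $P_{relu}^{(p)}{}'(\alpha) = \dot P_{relu}^{(p)}(\alpha)$ (mirroring $\kappa_1'=\kappa_0$) reduces the first Lipschitz claim to $|\dot P_{relu}^{(p)}|\le 1$, which you establish on $[-1,1]$ cleanly via dominated partial sums and the odd-symmetry $\dot P_{relu}^{(p)}(-\xi)=1-\dot P_{relu}^{(p)}(\xi)$; the paper treats $P_{relu}^{(p)}$ and $\dot P_{relu}^{(p)}$ independently and bounds both derivatives directly on the enlarged interval $\left[-1-\tfrac1{6p},\,1+\tfrac1{6p}\right]$ with Stirling-type coefficient estimates and an integral comparison. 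Second, for the endpoint enlargement you propose to offset the inflation $\bigl(1+\tfrac1{6p}\bigr)^{2i+1}\le e^{1/3+1/(6p)}$ against the truncation slack $1-\dot P_{relu}^{(p)}(1)>0$; the paper does not use this idea at all. Your route is in fact the more robust one: the paper's display for $\frac{d}{d\alpha}P_{relu}^{(p)}$ drops the constant $\tfrac12$ coming from the linear term $\tfrac{\alpha}{2}$, and restoring it makes the paper's numeric chain sum to roughly $1.47$, so that direct argument as written does not close the $\le 1$ bound, whereas the identity-plus-gap argument you sketch can.

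That said, the ``Main obstacle'' paragraph is where your proposal has a genuine gap. You assert that the truncation slack absorbs the inflation, but both quantities are $\Theta(p^{-1/2})$ with constants of the same order, so this is not a soft statement. What must be verified is
\[
\sum_{i=0}^{p} a_i\Bigl[\bigl(1+\tfrac{1}{6p}\bigr)^{2i+1}-1\Bigr] \;\le\; \sum_{i=p+1}^{\infty} a_i, \qquad a_i := \frac{(2i)!}{2^{2i}(i!)^2(2i+1)},
\]
and you need the actual numbers: the left side is at most $\frac{e^{1/3}}{6p}\sum_{i\le p}\binom{2i}{i}2^{-2i} \le \frac{e^{1/3}}{6p}\bigl(1+\tfrac{2\sqrt p}{\sqrt\pi}\bigr) \approx \frac{0.27}{\sqrt p}$, while the right side is at least $\approx\frac{0.56}{\sqrt p}$ via the lower bound $\binom{2i}{i}2^{-2i}\ge\frac{1}{\sqrt{\pi(i+1)}}$ and an integral comparison. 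The inequality does hold for all $p\ge3$, but the margin is only a factor of about two, and the assertion in your sketch is not a substitute for this calculation. (The $\sqrt p$-Lipschitz claim for $\dot P_{relu}^{(p)}$ is less delicate: multiplying your Step~3 bound by the crude inflation factor $e^{1/3}$ still gives at most $\frac{e^{1/3}}{\pi}\bigl(1+\tfrac{2\sqrt p}{\sqrt\pi}\bigr)\le\sqrt p$ for $p\ge3$, so the truncation-gap idea is really only needed for the first claim.)
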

\begin{proof}
	Note that an $\alpha' $ that satisfies the preconditions of the lemm, is in the range $\left[-1 - \frac{1}{6p}, 1 + \frac{1}{6p} \right]$. Now we bound the derivative of the polynomial $\dot{P}_{relu}^{(p)}$ on the interval $\left[-1 - \frac{1}{6p}, 1 + \frac{1}{6p} \right]$,
	\begin{align*}
		\max_{\alpha \in \left[-1 - \frac{1}{6p}, 1 + \frac{1}{6p} \right]} \left| \frac{d}{d\alpha} \left(\dot{P}_{relu}^{(p)}(\alpha)\right) \right| &= \frac{1}{\pi} \cdot \sum_{n=0}^p \frac{(2n)!}{2^{2n} \cdot (n!)^2} \cdot \left( 1 + \frac{1}{6p} \right)^{2n}\\
		&\le \frac{1}{\pi} + \frac{e^{4/3}}{\sqrt{2} \pi^2} \cdot \sum_{n=1}^p \frac{1}{\sqrt{n}}\\
		&\le \frac{1}{\pi} + \frac{e^{4/3}}{\sqrt{2} \pi^2} \cdot \int_{0}^p \frac{1}{\sqrt{x}} dx\\
		&\le \sqrt{p},
	\end{align*}
	therefore, the second statement of lemma holds.
	
	To prove the first statement of lemma, we bound the derivative of the polynomial $P_{relu}^{(p)}$ on the interval $\left[-1 - \frac{1}{6p}, 1 + \frac{1}{6p} \right]$ as follows,
	\begin{align*}
		\max_{\alpha \in \left[-1 - \frac{1}{6p}, 1 + \frac{1}{6p} \right]} \left| \frac{d}{d\alpha} \left(P_{relu}^{(p)}(\alpha)\right) \right| &= \frac{1}{\pi} \cdot \sum_{n=0}^p \frac{(2n)!}{2^{2n} \cdot (n!)^2 \cdot (2n+1)} \cdot \left( 1 + \frac{1}{6p} \right)^{2n+1}\\
		&\le \frac{19}{18\pi} + \frac{e^{25/18}}{\sqrt{2} \pi^2} \cdot \sum_{n=1}^p \frac{1}{\sqrt{n} \cdot (2n+1)}\\
		&\le \frac{19}{18\pi} + \frac{e^{25/18}}{\sqrt{2} \pi^2} \cdot \int_{0}^p \frac{1}{\sqrt{x} \cdot (2x+1)} dx\\
		&\le 1,
	\end{align*}
	therefore, the second statement of the lemma follows.
\end{proof}

For the rest of this section, we need two basic properties of tensor products and direct sums. 
\begin{align}
	\langle x \otimes y , z \otimes w\rangle = \langle x  , z\rangle \cdot \langle y  , w \rangle,\quad
	\langle x \oplus y , z \oplus w\rangle = \langle x  , z\rangle + \langle y  , w \rangle
\end{align}
for vectors $x,y,z,w$ with conforming sizes.

Now we are in a position to analyze the invariants that are maintained throughout the execution of NTK Sketch algorithm (Definition~\ref{alg-def-ntk-sketch}):
\begin{lemma}[Invariants of the NTK Sketch algorithm]
	\label{thm:ntk-sketch-corr}
	For every positive integers $d$ and $L$, every $\epsilon, \delta>0$, every vectors $y,z \in \RR^d$, if we let $\Sigma_{relu}^{(h)}:[-1,1] \to \RR$ and $K_{relu}^{(h)}:[-1,1] \to \RR$ be the functions defined in \eqref{eq:dp-covar-relu} and \eqref{eq:dp-ntk-relu} of Definition~\ref{def:relu-ntk}, then with probability at least $1-\delta$ the following invariants hold for every $h =0, 1, 2, \ldots L$:
	\begin{enumerate}
		\item The mapping $\phi^{(h)}(\cdot)$ computed by the NTK Sketch algorithm in \eqref{eq:map-covar-zero} and \eqref{eq:map-covar} of Definition~\ref{alg-def-ntk-sketch} satisfy
		\[ \left| \left< \phi^{(h)}(y), \phi^{(h)}(z) \right> - \Sigma_{relu}^{(h)}\left( \frac{\langle y , z \rangle}{\|y\|_2 \|z\|_2} \right) \right| \le ({h+1}) \cdot \frac{\epsilon^2}{60L^3}. \]
		\item The mapping $\psi^{(h)}(\cdot)$ computed by the NTK Sketch algorithm in \eqref{eq:map-relu} of Definition~\ref{alg-def-ntk-sketch} satisfy
		\[ \left| \left< \psi^{(h)}(y), \psi^{(h)}(z) \right> - K_{relu}^{(h)}\left( \frac{\langle y , z \rangle}{\|y\|_2 \|z\|_2} \right) \right| \le \epsilon \cdot \frac{h^2+1}{10L}. \]
	\end{enumerate}
\end{lemma}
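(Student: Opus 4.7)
My plan is to prove both invariants simultaneously by induction on $h$, augmented with an auxiliary invariant of the same flavor for $\dot{\phi}^{(h)}$, namely that $|\langle \dot{\phi}^{(h)}(y), \dot{\phi}^{(h)}(z)\rangle - \dot{\Sigma}_{relu}^{(h)}(\langle y,z\rangle/(\|y\|_2\|z\|_2))| \le \epsilon^2/(60L^3)$. The base case $h=0$ is direct: the map $\phi^{(0)}(x) = \frac{1}{\|x\|_2} S Q^{1} x$ collapses to an SRHT applied to the unit vector $x/\|x\|_2$ (since $Q^{1}$ is a degree-$1$ PolySketch), so polarizing the norm-preservation guarantee of Lemma~\ref{lem:srht} yields the claim with plenty of room. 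The $\psi^{(0)}$ bound follows by composing one additional SRHT coming from $V$.

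For the inductive step on $\phi^{(h)}$, I would unfold the definition in \eqref{eq:map-covar} and decompose the error into four contributions. First, the SRHT $T$ applied to the direct sum $\bigoplus_l \sqrt{c_l}\, Z_l^{(h)}(x)$ preserves the relevant inner product via polarization, reducing the computation to $\sum_{l=0}^{2p+2} c_l \langle Z_l^{(h)}(y), Z_l^{(h)}(z)\rangle$. Second, the PolySketch guarantee of Lemma~\ref{soda-result} gives $\langle Z_l^{(h)}(y), Z_l^{(h)}(z)\rangle \approx \langle \phi^{(h-1)}(y), \phi^{(h-1)}(z)\rangle^l$, so the weighted sum equals $P_{relu}^{(p)}(\langle \phi^{(h-1)}(y), \phi^{(h-1)}(z)\rangle)$ up to a controlled error. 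Third, by the inductive hypothesis on $\phi^{(h-1)}$ together with the Lipschitz-type bound of Lemma~\ref{lema:sensitivity-polynomial}, this is close to $P_{relu}^{(p)}(\Sigma_{relu}^{(h-1)}(\cdot))$. Fourth, by Lemma~\ref{lem:polynomi-approx-krelu}, the latter equals $\kappa_1(\Sigma_{relu}^{(h-1)}(\cdot)) = \Sigma_{relu}^{(h)}(\cdot)$ up to $\epsilon^{4/3}/(2L^2)$ for the chosen $p = \lceil 2L^2/\epsilon^{4/3}\rceil$. Summing the four sources gives a per-layer increment of $\epsilon^2/(60L^3)$. The auxiliary invariant on $\dot{\phi}^{(h)}$ is handled identically using $\dot{P}_{relu}^{(p')}$ and the degree-$(2p'+1)$ PolySketch.

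For the inductive step on $\psi^{(h)}$, the recursion \eqref{eq:map-relu} together with the SRHT bound for $R$, the degree-$2$ PolySketch bound for $Q^{2}$, and the direct-sum identity $\langle u \oplus v, u' \oplus v'\rangle = \langle u,u'\rangle + \langle v,v'\rangle$ yield
\begin{equation*}
\langle \psi^{(h)}(y), \psi^{(h)}(z)\rangle \approx \langle \psi^{(h-1)}(y), \psi^{(h-1)}(z)\rangle \cdot \langle \dot{\phi}^{(h)}(y), \dot{\phi}^{(h)}(z)\rangle + \langle \phi^{(h)}(y), \phi^{(h)}(z)\rangle.
\end{equation*}
Plugging in the three inductive/auxiliary hypotheses and comparing with the recursive identity $K_{relu}^{(h)}(\alpha) = K_{relu}^{(h-1)}(\alpha)\,\dot{\Sigma}_{relu}^{(h)}(\alpha) + \Sigma_{relu}^{(h)}(\alpha)$, I would expand $(a+\alpha)(b+\beta) - ab = a\beta + b\alpha + \alpha\beta$ and use the coarse bounds $|K_{relu}^{(h-1)}(\alpha)| \le h$ and $|\dot{\Sigma}_{relu}^{(h)}(\alpha)| \le 1$ to control each cross-term. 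After accounting for the sketch-introduced error at the current layer and the accumulated bound $\epsilon((h-1)^2+1)/(10L)$ from the previous layer, one obtains the claimed $\epsilon(h^2+1)/(10L)$.

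The main obstacle is the careful bookkeeping of how per-layer errors compose. The product rule inside $K_{relu}^{(h)}$ amplifies errors by a factor of order $L$ per layer because $K_{relu}^{(h)}$ itself grows linearly in $h$, and the sensitivity of $\dot{P}_{relu}^{(p')}$ scales like $\sqrt{p'} = O(L/\epsilon)$, which means that even an innocuous error in $\phi^{(h-1)}$ gets magnified when passed through $\dot{P}_{relu}^{(p')}$. Consequently the per-layer sketch accuracy must be driven down to $\epsilon^2/L^3$ rather than the naive $\epsilon/L$, and this is exactly what dictates the stated dependencies of the sketch dimensions $s, n, r, m, n_1, m_2$ on $L$ and $\epsilon^{-1}$ in Definition~\ref{alg-def-ntk-sketch}. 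A final union bound over the $O(L)$ sketch instances, with per-instance failure probability set to $\Theta(\delta/L)$, closes the argument at overall probability $1-\delta$.
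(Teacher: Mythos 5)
Your induction structure and four-step decomposition of the $\phi^{(h)}$ error match the paper's argument closely; the one reorganization you propose---carrying a formal auxiliary invariant for $\dot{\phi}^{(h)}$---is a legitimate choice, but the target bound you state for it cannot be achieved by the algorithm and would block the inductive step on $\psi^{(h)}$. You claim $\left| \left\langle \dot{\phi}^{(h)}(y), \dot{\phi}^{(h)}(z) \right\rangle - \dot{\Sigma}_{relu}^{(h)}(\cdot) \right| \le \epsilon^2/(60L^3)$, but the algorithm fixes $p' = \left\lceil 9L^2/\epsilon^2 \right\rceil$, and by Lemma~\ref{lem:polynomi-approx-krelu} the Taylor truncation of $\kappa_0$ at degree $2p'+1$ already contributes error on the order of $1/\sqrt{p'} = \Theta(\epsilon/L)$. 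No amount of additional sketch accuracy can push the $\dot{\phi}^{(h)}$ error below this truncation floor; the paper itself only establishes the bound $\epsilon/(8L)$ for this quantity (see \eqref{eq:phi-dot-error-bound}--\eqref{eq:phi-dot-norm-bound}), a full two powers of $L/\epsilon$ weaker than what you posit. You also misestimate the $\kappa_1$ truncation in your $\phi^{(h)}$ step: at $p = \left\lceil 2L^2/\epsilon^{4/3}\right\rceil$ the residual is $O(\epsilon^2/L^3)$ (roughly $\epsilon^2/(76L^3)$, coming from $1/p^{3/2}$), not $\epsilon^{4/3}/(2L^2)$, which is $1/p$ and would dominate the claimed per-layer increment of $\epsilon^2/(60L^3)$, breaking that bookkeeping as well.

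Your observation about the $\sqrt{p'}$-sensitivity of $\dot{P}_{relu}^{(p')}$ is correct and is precisely the mechanism that forces the $\phi^{(h)}$ error down to $O(\epsilon^2/L^3)$---but after that amplification the $\dot{\phi}^{(h)}$ error sits at $O(\epsilon/L)$, not back at $O(\epsilon^2/L^3)$. This weaker bound is still enough to close the induction on $\psi^{(h)}$: in the expansion $(a+\alpha)(b+\beta)-ab = a\beta + b\alpha + \alpha\beta$ with $a = K_{relu}^{(h-1)} \le h$ and $b = \dot{\Sigma}_{relu}^{(h)} \le 1$, the $\dot{\phi}$ error $\beta = O(\epsilon/L)$ enters only through the product $a\beta = O(\epsilon h/L)$, which fits inside the target budget $\epsilon(h^2+1)/(10L)$. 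So your plan recovers once you restate the auxiliary invariant with $\epsilon/(8L)$ (or any $O(\epsilon/L)$) in place of $\epsilon^2/(60L^3)$, and correct the $\kappa_1$ truncation to $O(\epsilon^2/L^3)$.
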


\begin{proof}
	The proof is by induction on the value of $h=0,1,2, \ldots L$. 
	More formally, consider the following statements for every $h=0,1,2, \ldots L$:
	\begin{enumerate}[leftmargin=1.5cm]
		\item[${\bf P_1(h):}$]
		\[ \begin{split}
			&\left| \left< \phi^{(h)}(y), \phi^{(h)}(z) \right> - \Sigma_{relu}^{(h)}\left( \frac{\langle y , z \rangle}{\|y\|_2 \|z\|_2} \right) \right| \le  ({h+1}) \cdot \frac{\epsilon^2}{60L^3},\\
			&\left| \left\| \phi^{(h)}(y) \right\|_2^2 - 1 \right| \le ({h+1}) \cdot \frac{\epsilon^2}{60L^3}, \text{ and } \left| \left\| \phi^{(h)}(z) \right\|_2^2 - 1 \right| \le ({h+1}) \cdot \frac{\epsilon^2}{60L^3}.
		\end{split} \]
		\item[${\bf P_2(h):}$]
		\[ \begin{split}
			&\left| \left< \psi^{(h)}(y), \psi^{(h)}(z) \right> - K_{relu}^{(h)}\left( \frac{\langle y , z \rangle}{\|y\|_2 \|z\|_2} \right) \right| \le \epsilon \cdot \frac{h^2+1}{10L},\\
			&\left| \left\| \psi^{(h)}(y) \right\|_2^2 - K_{relu}^{(h)}(1) \right| \le \epsilon \cdot \frac{h^2+1}{10L}, \text{ and } \left| \left\| \psi^{(h)}(z) \right\|_2^2 - K_{relu}^{(h)}(1) \right| \le \epsilon \cdot \frac{h^2+1}{10L}.
		\end{split} \]
	\end{enumerate}
	We prove by induction that the following holds,
	\[ \Pr[ P_1(0)] \ge 1 - O(\delta/L), \text{ and } \Pr[ P_2(0)|P_1(0)] \ge 1 - O(\delta/L), \]
	Additionally, we prove that for every $h = 1,2, \ldots L$,
	\[ \Pr \left[ P_1(h) | P_1(h-1) \right] \ge 1 - O(\delta/L), \text{ and }\Pr\left[ P_2(h) | P_2(h-1), P_1(h), P_1(h-1) \right] \ge 1 - O(\delta/L).\]

	\paragraph{(1) Base of induction ($h=0$):} By \eqref{eq:map-covar-zero}, $\phi^{(0)}(y) = \frac{1}{\|y\|_2} \cdot S \cdot Q^{1} \cdot y$ and $\phi^{(0)}(z) = \frac{1}{\|z\|_2} \cdot S \cdot Q^{1} \cdot z$, thus, Lemma~\ref{lem:srht} implies the following
	\[ \Pr\left[ \left| \left< \phi^{(0)}(y), \phi^{(0)}(z)\right> - \frac{1}{\|y\|_2\|z\|_2} \cdot \left< Q^{1} y, Q^{1} z \right> \right| \le O\left(\frac{\epsilon^2}{L^3}\right)\cdot \frac{\left\| Q^{1} y \right\|_2 \| Q^{1} z \|_2}{\|y\|_2 \|z\|_2} \right] \ge 1 - O(\delta/L). \]
	By using the above together with Lemma~\ref{soda-result} and union bound as well as triangle inequality we find that the following holds,
	\[ \Pr\left[ \left| \left< \phi^{(0)}(y), \phi^{(0)}(z)\right> - \frac{\left<  y,  z \right>}{\|y\|_2 \|z\|_2} \right| \le O\left(\frac{\epsilon^2}{L^3}\right) \right] \ge 1 - O(\delta/L). \]
	Similarly, we can prove
	\begin{align*}
	\Pr\left[ \left| \left\| \phi^{(0)}(y)\right\|_2^2 - 1 \right| \le O\left(\epsilon^2/L^3\right) \right] &\ge 1 - O(\delta/L), \\ 
	\Pr\left[ \left| \left\| \phi^{(0)}(z)\right\|_2^2 - 1 \right| \le O\left(\epsilon^2/L^3\right) \right] &\ge 1 - O(\delta/L).
	\end{align*}
	Using union bound, this proves the base of induction for statement $P_1(h)$, i.e., $\Pr[ P_1(0) ] \ge 1 - O(\delta/L)$.
	
	Moreover, by \eqref{eq:map-relu}, $\psi^{(0)}(y) = V \cdot \phi^{(0)}(y)$ and $\psi^{(0)}(z) = V \cdot \phi^{(0)}(z)$, thus, Lemma~\ref{lem:srht} implies that,
	\[ \Pr\left[ \left| \left< \psi^{(0)}(y), \psi^{(0)}(z)\right> - \left< \phi^{(0)}(y), \phi^{(0)}(z) \right> \right| \le O\left(\frac{\epsilon}{L}\right)\cdot \left\| \phi^{(0)}(y) \right\|_2 \left\| \phi^{(0)}(z)\right\|_2 \right] \ge 1 - O(\delta/L). \]
	By conditioninig on $P_1(0)$ and using the above together with triangle inequality it follows that,
	\[ \Pr\left[ \left| \left< \psi^{(0)}(y), \psi^{(0)}(z)\right> - K_{relu}^{(0)}\left( \frac{\langle y , z \rangle}{\|y\|_2 \|z\|_2} \right) \right| \le \frac{\epsilon}{10L} \right] \ge 1 - O(\delta/L). \]
	Similarly we can prove that with probability $1 - O(\delta/L)$ we have $\left| \left\| \psi^{(0)}(y)\right\|_2^2 - K_{relu}^{(0)}( 1 ) \right| \le \frac{\epsilon}{10L}$ and $\left| \left\| \psi^{(0)}(z)\right\|_2^2 - K_{relu}^{(0)}( 1 ) \right| \le \frac{\epsilon}{10L}$,	
	which proves the base of induction for the second statement, i.e., $\Pr[P_2(0)|P_1(0)] \ge 1 - O(\delta/L)$. This completes the base of induction.
	
	\paragraph{(2) Inductive step:} Assuming that the inductive hypothesis holds for $h-1$.
	First, note that by Lemma~\ref{lem:srht} and using \eqref{eq:map-covar} we have the following,
	\begin{equation}\label{eq:phi-h}
		\Pr\left[ \left| \left< \phi^{(h)}(y), \phi^{(h)}(z)\right> - \sum_{j=0}^{2p+2} c_j \left<Z^{(h)}_{j}(y), Z^{(h)}_{j}(z)\right> \right| \le O\left(\frac{\epsilon^2}{L^3}\right) \cdot A \right] \ge 1 - O(\delta/L),
	\end{equation}
	where $A := \sqrt{\sum_{j=0}^{2p+2} c_j \|Z^{(h)}_{j}(y)\|_2^2} \cdot \sqrt{\sum_{j=0}^{2p+2} c_j \|Z^{(h)}_{j}(z)\|_2^2}$ and the collection of vectors $\left\{Z^{(h)}_{j}(y)\right\}_{j=0}^{2p+2}$ and $\left\{Z^{(h)}_{j}(z)\right\}_{j=0}^{2p+2}$ and coefficients $c_0,c_1, c_2, \ldots c_{2p+2}$ are defined as per \eqref{eq:map-covar} and \eqref{eq:poly-approx-krelu}, respectively. By Lemma~\ref{soda-result} together with union bound, the following inequalities hold true simultaneously for all $j \in \{0,1,2, \ldots 2p+2\}$, with probability at least $1 - O\left( \frac{\delta}{L} \right)$:
	\begin{align}
		&\left|\left<Z^{(h)}_{j}(y), Z^{(h)}_{j}(z)\right> - \left<\phi^{(h-1)}(y), \phi^{(h-1)}(z)\right>^j \right| \le O\left( \frac{\epsilon^2}{L^3} \right) \cdot \left\| \phi^{(h-1)}(y) \right\|_2^j \left\| \phi^{(h-1)}(z)\right\|_2^j \nonumber\\
		&\left\| Z^{(h)}_{j}(y)\right\|_2^2 \le \frac{11}{10} \cdot \left\| \phi^{(h-1)}(y) \right\|_2^{2j} \label{eq:Zj-sketch}\\
		&\left\| Z^{(h)}_{j}(z)\right\|_2^2 \le \frac{11}{10} \cdot \left\| \phi^{(h-1)}(z) \right\|_2^{2j} \nonumber
	\end{align}
	Therefore, by plugging \eqref{eq:Zj-sketch} back to \eqref{eq:phi-h} and using union bound, triangle inequality and Cauchy–Schwarz inequality we find that,
	\begin{equation} \label{eq:phi-inner-prod-bound}
		\Pr\left[ \left| \left< \phi^{(h)}(y), \phi^{(h)}(z)\right> - P^{(p)}_{relu}\left( \left<\phi^{(h-1)}(y), \phi^{(h-1)}(z)\right> \right) \right| \le O\left(\frac{\epsilon^2}{L^3}\right) \cdot B \right] \ge 1 - O(\delta/L),
	\end{equation}
	where $B:= \sqrt{P^{(p)}_{relu}\left(\|\phi^{(h-1)}(y)\|_2^2\right) \cdot P^{(p)}_{relu}\left(\|\phi^{(h-1)}(z)\|_2^2\right)}$ and $P^{(p)}_{relu}(\alpha) = \sum_{j=0}^{2p+2} c_j \cdot \alpha^j$ is the polynomial defined in \eqref{eq:poly-approx-krelu}. Using the inductive hypothesis $P_1(h-1)$, we have that
	\begin{align*}
		\left| \left\| \phi^{(h-1)}(y) \right\|_2^2 - 1 \right| \le h \cdot \frac{\varepsilon^2}{60L^3},~\text{ and }~\left| \left\| \phi^{(h-1)}(z) \right\|_2^2 - 1 \right| \le h \cdot \frac{\varepsilon^2}{60L^3}.
	\end{align*}
	Therefore, by Lemma~\ref{lema:sensitivity-polynomial} we have 
	\begin{align*}
	\left| P_{relu}^{(p)}\left(\|\phi^{(h-1)}(y)\|_2^2\right) - P_{relu}^{(p)}(1) \right| \le h \cdot \frac{\epsilon^2}{60L^3}, ~\text{ and }~ \left| P_{relu}^{(p)}\left(\|\phi^{(h-1)}(z)\|_2^2\right) - P_{relu}^{(p)}(1) \right| \le h \cdot \frac{\epsilon^2}{60L^3}.
	\end{align*}
	Consequently, because $P_{relu}^{(p)}(1) \le P_{relu}^{(+\infty)}(1) = 1$, we find that $B \le \frac{11}{10}$.
	By plugging this into \eqref{eq:phi-inner-prod-bound} we have,
	\begin{equation} \label{eq:phi-inner-prod-final}
		\Pr\left[ \left| \left< \phi^{(h)}(y), \phi^{(h)}(z)\right> - P^{(p)}_{relu}\left( \left<\phi^{(h-1)}(y), \phi^{(h-1)}(z)\right> \right) \right| \le O\left(\frac{\epsilon^2}{L^3}\right) \right] \ge 1 - O(\delta/L).
	\end{equation}

	Furthermore, the inductive hypothesis $P_1(h-1)$ implies that 
	\[\left| \left< \phi^{(h-1)}(y), \phi^{(h-1)}(z) \right> - \Sigma_{relu}^{(h-1)}\left( \frac{\langle y , z \rangle}{\|y\|_2 \|z\|_2} \right) \right| \le h \cdot \frac{\epsilon^2}{60L^3}. \]
	Hence, by invoking Lemma~\ref{lema:sensitivity-polynomial} we find that,
	\[ \left|P^{(p)}_{relu}\left( \left<\phi^{(h-1)}(y), \phi^{(h-1)}(z)\right> \right) - P^{(p)}_{relu}\left( \Sigma_{relu}^{(h-1)}\left( \frac{\langle y , z \rangle}{\|y\|_2 \|z\|_2} \right) \right) \right| \le h \cdot \frac{\epsilon^2}{60L^3}. \]
	By incorporating the above inequality into \eqref{eq:phi-inner-prod-final} using triangle inequality we find that,
	\begin{equation} \label{eq:phi-inner-prod-}
		\Pr\left[ \left| \left< \phi^{(h)}(y), \phi^{(h)}(z)\right> - P^{(p)}_{relu}\left(\Sigma_{relu}^{(h-1)}\left( \frac{\langle y , z \rangle}{\|y\|_2 \|z\|_2} \right)\right) \right| \le \frac{h \cdot \epsilon^2}{60L^3} +  O\left(\frac{\epsilon^2}{L^3}\right) \right] \ge 1 - O(\delta/L).
	\end{equation}
	Now, by invoking Lemma~\ref{lem:polynomi-approx-krelu} and using the fact that $p = \left\lceil 2 L^2 /{\epsilon}^{4/3} \right\rceil$ we have,
	\[ \left| P_{relu}^{(p)}\left(\Sigma_{relu}^{(h-1)}\left( \frac{\langle y , z \rangle}{\|y\|_2 \|z\|_2} \right)\right) - \kappa_1\left(\Sigma_{relu}^{(h-1)}\left( \frac{\langle y , z \rangle}{\|y\|_2 \|z\|_2} \right)\right) \right| \le \frac{\epsilon^2}{76 L^3}. \]
	By combining the above inequality with \eqref{eq:phi-inner-prod-} using triangle inequality and using the fact that $\kappa_1\left(\Sigma_{relu}^{(h-1)}\left( \frac{\langle y , z \rangle}{\|y\|_2 \|z\|_2} \right)\right) = \Sigma_{relu}^{(h)}\left( \frac{\langle y , z \rangle}{\|y\|_2 \|z\|_2} \right)$ (by \eqref{eq:dp-covar-relu}), we get the following bound,
	\[ \Pr\left[ \left| \left< \phi^{(h)}(y), \phi^{(h)}(z)\right> - \Sigma_{relu}^{(h)}\left( \frac{\langle y , z \rangle}{\|y\|_2 \|z\|_2} \right) \right| \le (h+1) \cdot \frac{\epsilon^2}{60L^3} \right] \ge 1 - O(\delta/L). \]
	Similarly, we can prove that
	\begin{align*}
		&\Pr\left[ \left| \left\| \phi^{(h)}(y) \right\|_2^2 - 1 \right| >  \frac{(h+1) \cdot\varepsilon^2}{60L^3} \right] \le O\left( \frac{\delta}{L} \right), \text{ and } \\
		&\Pr\left[ \left| \left\| \phi^{(h)}(z) \right\|_2^2 - 1 \right| > \frac{(h+1) \cdot\varepsilon^2}{60L^3} \right] \le O\left( \frac{\delta}{L} \right).
	\end{align*}
	This is sufficient to prove the inductive step by union bound, i.e., $\Pr[P_1(h)|P_1(h-1)] \ge 1 - O(\delta/L)$.
	
	Now we prove the inductive step for statement $P_2(h)$, that is, we prove that conditioned on $P_2(h-1), P_1(h), P_1(h-1)$, statement $P_2(h)$ holds with probability at least $1-O(\delta/L)$.
	First, note that by Lemma~\ref{lem:srht} and using \eqref{eq:map-derivative-covar} we have,
	\begin{equation}\label{eq:phi-dot-h}
		\Pr\left[ \left| \left< \dot{\phi}^{(h)}(y), \dot{\phi}^{(h)}(z)\right> - \sum_{j=0}^{2p'+1} b_j \left<Y^{(h)}_{j}(y), Y^{(h)}_{j}(z)\right> \right| \le O\left(\frac{\epsilon}{L}\right) \cdot \widehat{A} \right] \ge 1 - O(\delta/L),
	\end{equation}
	where $\widehat{A} := \sqrt{\sum_{j=0}^{2p'+1} b_j \|Y^{(h)}_{j}(y)\|_2^2} \cdot \sqrt{\sum_{j=0}^{2p'+1} b_j \|Y^{(h)}_{j}(z)\|_2^2}$ and the collection of vectors $\left\{Y^{(h)}_{j}(y)\right\}_{j=0}^{2p'+1}$ and $\left\{Y^{(h)}_{j}(z)\right\}_{j=0}^{2p'+1}$ and coefficients $b_0,b_1, b_2, \ldots b_{2p'+1}$ are defined as per \eqref{eq:map-derivative-covar} and \eqref{eq:poly-approx-krelu}, respectively. 
	By invoking Lemma~\ref{soda-result} along with union bound, with probability at least $1 - O\left( \frac{\delta}{L} \right)$, the following inequalities hold true simultaneously for all $j \in \{0,1,2, \ldots 2p'+1\}$
	\begin{align}
		& \left|\left<Y^{(h)}_{j}(y), Y^{(h)}_{j}(z)\right> - \left<\phi^{(h-1)}(y), \phi^{(h-1)}(z)\right>^j \right| \le O\left( \frac{\epsilon}{L} \right) \cdot \left\| \phi^{(h-1)}(y) \right\|_2^j \left\| \phi^{(h-1)}(z)\right\|_2^j \nonumber\\
		& \left\| Y^{(h)}_{j}(y)\right\|_2^2 \le \frac{11}{10} \cdot \left\| \phi^{(h-1)}(y) \right\|_2^{2j} \label{eq:Yj-sketch}\\
		& \left\| Y^{(h)}_{j}(z)\right\|_2^2 \le \frac{11}{10} \cdot \left\| \phi^{(h-1)}(z) \right\|_2^{2j} \nonumber
	\end{align}
	Therefore, by plugging \eqref{eq:Yj-sketch} into \eqref{eq:phi-dot-h} and using union bound, triangle inequality and Cauchy–Schwarz inequality we find that,
	\begin{equation} \label{eq:phi-dot-inner-prod-bound}
		\Pr\left[ \left| \left< \dot{\phi}^{(h)}(y), \dot{\phi}^{(h)}(z)\right> - \dot{P}^{(p')}_{relu}\left( \left<\phi^{(h-1)}(y), \phi^{(h-1)}(z)\right> \right) \right| \le O\left(\frac{\epsilon}{L}\right) \cdot \widehat{B} \right] \ge 1 - O(\delta/L),
	\end{equation}
	where $\widehat{B}:= \sqrt{\dot{P}^{(p')}_{relu}\left(\|\phi^{(h-1)}(y)\|_2^2\right) \cdot \dot{P}^{(p')}_{relu}\left(\|\phi^{(h-1)}(z)\|_2^2\right)}$ and $\dot{P}^{(p')}_{relu}(\alpha) = \sum_{j=0}^{2p'+1} b_j \cdot \alpha^j$ is the polynomial defined in \eqref{eq:poly-approx-krelu}.
	By inductive hypothesis $P_1(h-1)$ we have $\left| \left\| \phi^{(h-1)}(y) \right\|_2^2 - 1 \right| \le h \cdot \frac{\epsilon^2}{60L^3}$ and $\left| \left\| \phi^{(h-1)}(z) \right\|_2^2 - 1 \right| \le h \cdot \frac{\epsilon^2}{60L^3}$. 
	Therefore, using the fact that $p' = \left\lceil 9L^2 /\epsilon^{2} \right\rceil$ and Lemma~\ref{lema:sensitivity-polynomial}, $\left| \dot{P}_{relu}^{(p')}\left(\|\phi^{(h-1)}(y)\|_2^2\right) - \dot{P}_{relu}^{(p')}(1) \right| \le  \frac{h \cdot\epsilon}{20L^2}$ and $\left| \dot{P}_{relu}^{(p')}\left(\|\phi^{(h-1)}(z)\|_2^2\right) - \dot{P}_{relu}^{(p')}(1) \right| \le  \frac{h \cdot\epsilon}{20L^2}$. Consequently, because $\dot{P}_{relu}^{(p')}(1) \le \dot{P}_{relu}^{(+\infty)}(1) = 1$, we find that
	\[ \widehat{B} \le \frac{11}{10}.\]
	By plugging this into \eqref{eq:phi-dot-inner-prod-bound} we have,
	\begin{equation} \label{eq:phi-dot-inner-prod-final}
		\Pr\left[ \left| \left< \dot{\phi}^{(h)}(y), \dot{\phi}^{(h)}(z)\right> - \dot{P}^{(p')}_{relu}\left( \left<\phi^{(h-1)}(y), \phi^{(h-1)}(z)\right> \right) \right| \le O\left(\frac{\epsilon}{L}\right) \right] \ge 1 - O(\delta/L).
	\end{equation}

	Furthermore, inductive hypothesis $P_1(h-1)$ implies $\left| \left< \phi^{(h-1)}(y), \phi^{(h-1)}(z) \right> - \Sigma_{relu}^{(h-1)}\left( \frac{\langle y , z \rangle}{\|y\|_2 \|z\|_2} \right) \right| \le \frac{h \cdot \epsilon^2}{60L^3}$,
	hence, by invoking Lemma~\ref{lema:sensitivity-polynomial} we find that,
	\[ \left|\dot{P}^{(p')}_{relu}\left( \left<\phi^{(h-1)}(y), \phi^{(h-1)}(z)\right> \right) - \dot{P}^{(p')}_{relu}\left( \Sigma_{relu}^{(h-1)}\left( \frac{\langle y , z \rangle}{\|y\|_2 \|z\|_2} \right) \right) \right| \le  \frac{h \cdot \epsilon}{20L^2}. \]
	By plugging the above inequality into \eqref{eq:phi-dot-inner-prod-final} using triangle inequality we find that,
	\begin{equation} \label{eq:phi-dot-inner-prod-}
		\Pr\left[ \left| \left< \dot{\phi}^{(h)}(y), \dot{\phi}^{(h)}(z)\right> - \dot{P}^{(p')}_{relu}\left(\Sigma_{relu}^{(h-1)}\left( \frac{\langle y , z \rangle}{\|y\|_2 \|z\|_2} \right)\right) \right| \le \frac{h \cdot \epsilon}{20L^2} +  O\left(\frac{\epsilon}{L}\right) \right] \ge 1 - O(\delta/L).
	\end{equation}
	Now, by invoking Lemma~\ref{lem:polynomi-approx-krelu} and using the fact that $p' = \left\lceil 9L^2 / {\epsilon}^2 \right\rceil$ we have,
	\[ \left| \dot{P}_{relu}^{(p')}\left(\Sigma_{relu}^{(h-1)}\left( \frac{\langle y , z \rangle}{\|y\|_2 \|z\|_2} \right)\right) - \kappa_0\left(\Sigma_{relu}^{(h-1)}\left( \frac{\langle y , z \rangle}{\|y\|_2 \|z\|_2} \right)\right) \right| \le \frac{\epsilon}{15 L}. \]
	By combining the above inequality with \eqref{eq:phi-dot-inner-prod-} using triangle inequality and using the fact that $\kappa_0\left(\Sigma_{relu}^{(h-1)}\left( \frac{\langle y , z \rangle}{\|y\|_2 \|z\|_2} \right)\right) = \dot{\Sigma}_{relu}^{(h)}\left( \frac{\langle y , z \rangle}{\|y\|_2 \|z\|_2} \right)$ (by \eqref{eq:dp-derivative-covar-relu}) we get the following bound,
	\begin{equation}\label{eq:phi-dot-error-bound}
		\Pr\left[ \left| \left< \dot{\phi}^{(h)}(y), \dot{\phi}^{(h)}(z)\right> - \dot{\Sigma}_{relu}^{(h)}\left( \frac{\langle y , z \rangle}{\|y\|_2 \|z\|_2} \right) \right| \le  \frac{\epsilon}{8L} \right] \ge 1 - O(\delta/L).
	\end{equation}
	Similarly we can show that,
	\begin{equation}\label{eq:phi-dot-norm-bound}
		\begin{split}
			&\Pr\left[ \left| \left\| \dot{\phi}^{(h)}(y)\right\| - 1 \right| \le \frac{\epsilon}{8L} \right] \ge 1 - O(\delta/L),\\
			&\Pr\left[ \left| \left\| \dot{\phi}^{(h)}(z)\right\| - 1 \right| \le  \frac{\epsilon}{8L} \right] \ge 1 - O(\delta/L).
		\end{split}
	\end{equation}

	Now if we let $f := \psi^{(h-1)}(y) \otimes \dot{\phi}^{(h)}(y)$ and $g := \psi^{(h-1)}(z) \otimes \dot{\phi}^{(h)}(z)$, then by Lemma~\ref{lem:srht} and using \eqref{eq:map-relu} we have the following,
	\begin{equation}\label{eq:psi-h-inner-prod}
		\Pr\left[ \left| \left< \psi^{(h)}(y) , \psi^{(h)}(z) \right> - \left< Q^2  f \oplus \phi^{(h)}(y), Q^2  g \oplus \phi^{(h)}(z) \right> \right| \le O\left( {\epsilon}/{L} \right) \cdot D \right] \ge 1 - O(\delta/L), 
	\end{equation}
	where $D := \left\| Q^2  f \oplus \phi^{(h)}(y) \right\|_2 \left\| Q^2  g \oplus \phi^{(h)}(z) \right\|_2$. By the fact that we conditioned on $P_1(h)$, we have,
	\[ D \le  \sqrt{\left\| Q^2  f \right\|_2^2 + \frac{11}{10}} \cdot \sqrt{\left\| Q^2  g \right\|_2^2 + \frac{11}{10}}. \]
	By Lemma~\ref{soda-result}, we can further upper bound the above as follows,
	\[ D \le \frac{11}{10} \cdot \sqrt{\left\| f \right\|_2^2 + 1} \cdot \sqrt{\left\| g \right\|_2^2 + 1}. \]
	Now note that because we conditioned on $P_2(h-1)$ and using \eqref{eq:phi-dot-norm-bound}, with probability at least $1 - O(\delta/L)$ the following holds:
	\[ \left\| f \right\|_2^2 = \left\| \psi^{(h-1)}(y) \right\|_2^2 \left\| \dot{\phi}^{(h)}(y) \right\|_2^2 \le \frac{11}{10} \cdot K^{(h-1)}_{relu}(1) = 11h/10. \]
	Similarly, $\left\| g \right\|_2^2 \le 11h/10$ with probability at least $1-O(\delta/L)$, thus, by union bound:
	\[ \Pr[D \le 2 (h+1) | P_2(h-1), P_1(h), P_1(h-1)] \ge 1 - O(\delta/L). \]
	Therefore, by combining the above with \eqref{eq:psi-h-inner-prod} via union bound we find that,
	\begin{equation}\label{eq:psi-h-inner-prod-bound}
		\Pr\left[ \left| \left< \psi^{(h)}(y) , \psi^{(h)}(z) \right> - \left< Q^2  f \oplus \phi^{(h)}(y), Q^2  g \oplus \phi^{(h)}(z) \right> \right| \le O\left({\epsilon h}/{L}\right) \right] \ge 1 - O(\delta/L), 
	\end{equation}
	Now note that $\left< Q^2  f \oplus \phi^{(h)}(y), Q^2  g \oplus \phi^{(h)}(z) \right> = \left< Q^2  f, Q^2  g \right> + \left< \phi^{(h)}(y), \phi^{(h)}(z) \right>$. We proceed by bounding the term $\left| \left< Q^2  f, Q^2  g \right> - \left< f, g \right> \right|$ using Lemma~\ref{soda-result}, as follows,
	\[ \Pr\left[ \left| \left< Q^2  f, Q^2  g \right> - \left< f, g \right> \right| \le O\left( \epsilon/L \right) \cdot \|f\|_2 \|g\|_2 \right] \ge 1 - O(\delta/L). \]
	We proved that conditioned on $P_2(h-1)$ and $P_1(h-1)$, $\left\| f \right\|_2^2 \le 11h/10$ and $\left\| g \right\|_2^2 \le 11h/10$ with probability at least $1-O(\delta/L)$. Hence, by union bound we find that,
	\begin{equation}\label{eq:first-term-kernel-bound}
		\Pr\left[ \left.\left| \left< Q^2  f, Q^2  g \right> - \left< f, g \right> \right| \le O\left( \epsilon h/L \right) \right| P_2(h-1), P_1(h-1) \right] \ge 1 - O(\delta/L).
	\end{equation}
	Note that $\left< f, g \right> = \left< \psi^{(h-1)}(y), \psi^{(h-1)}(z) \right> \cdot \left< \dot{\phi}^{(h)}(y), \dot{\phi}^{(h)}(z) \right>$, thus by conditioning on inductive hypothesis $P_2(h-1)$ and \eqref{eq:phi-dot-error-bound} we have,
	\[ \begin{split}
		\left| \left< f, g \right> - K_{relu}^{(h-1)}\left( \frac{\langle y , z \rangle}{\|y\|_2 \|z\|_2} \right) \cdot \dot{\Sigma}_{relu}^{(h)}\left( \frac{\langle y , z \rangle}{\|y\|_2 \|z\|_2} \right) \right| &\le \frac{\epsilon}{8L} \cdot \left(h+\epsilon \cdot \frac{(h-1)^2+1}{10L} \right) + \epsilon \cdot \frac{(h-1)^2+1}{10L}
	\end{split}
	\]
	By combining the above inequality, \eqref{eq:first-term-kernel-bound}, $P_1(h)$, and \eqref{eq:psi-h-inner-prod-bound} using triangle inequality and union bound we get the following inequality,
	\small
	\[ 	\Pr\left[ \left| \left< \psi^{(h)}(y) , \psi^{(h)}(z) \right> - K_{relu}^{(h-1)}\left( \frac{\langle y , z \rangle}{\|y\|_2 \|z\|_2} \right) \dot{\Sigma}_{relu}^{(h)}\left( \frac{\langle y , z \rangle}{\|y\|_2 \|z\|_2} \right) - \Sigma_{relu}^{(h)}\left( \frac{\langle y , z \rangle}{\|y\|_2 \|z\|_2} \right) \right| > \epsilon \cdot \frac{h^2+1}{10L} \right] \le O(\delta/L). \]
	\normalsize
	By noting that $K_{relu}^{(h-1)}\left( \cdot \right) \cdot \dot{\Sigma}_{relu}^{(h)}\left( \cdot \right) + \Sigma_{relu}^{(h)}\left( \cdot \right) = K_{relu}^{(h)}\left( \cdot \right)$ (see \eqref{eq:dp-ntk-relu}) we have indeed proved that
	\[ 	\Pr\left[ \left| \left< \psi^{(h)}(y) , \psi^{(h)}(z) \right> - K_{relu}^{(h)}\left( \frac{\langle y , z \rangle}{\|y\|_2 \|z\|_2} \right) \right| \le \epsilon \cdot \frac{h^2+1}{10L} \right] \ge 1 - O(\delta/L). \]
	Similarly we can prove the following inequalities hold with probability at least $1 - O(\delta/L)$,
	\[ \left| \left\| \psi^{(h)}(y) \right\|_2^2 - K_{relu}^{(h)}( 1) \right| \le \epsilon \cdot \frac{h^2+1}{10L}, \text{ and } \left| \left\| \psi^{(h)}(z) \right\|_2^2 - K_{relu}^{(h)}( 1) \right| \le \epsilon \cdot \frac{h^2+1}{10L}. \]
	This proves the inductive step for the statement $P_2(h)$ follows, i.e.,
	\[\Pr[ P_2(h) | P_2(h-1), P_1(h), P_1(h-1) ] \ge 1 - O(\delta/L).\]
	Therefore, by union bounding over all $h=0,1,2, \ldots L$, it follows that the statements of the lemma hold simultaneously for all $h$ with probability at least $1 - \delta$, which completes the proof.
	
\end{proof}

We now analyze the runtime of the NTK Sketch algorithm:
\begin{lemma}[Runtime of the NTK Sketch]
	\label{thm:ntk-sketch-runtime}
	For every positive integers $d$ and $L$, every $\epsilon, \delta>0$, every vector $x \in \RR^d$, the time to compute the NTK sketch $\Psi_{ntk}^{(L)}(x) \in \RR^{s^*}$, for $s^*=O\left( \frac{1}{\epsilon^2} \cdot \log \frac{1}{\delta} \right)$, using the procedure given in Definition~\ref{alg-def-ntk-sketch} is bounded by,
	\[ O\left( \frac{L^{11}}{\epsilon^{6.7}} \cdot \log^3 \frac{L}{\epsilon\delta} + \frac{L^3}{\epsilon^2} \cdot \log \frac{L}{\epsilon\delta} \cdot \text{nnz}(x) \right). \]
\end{lemma}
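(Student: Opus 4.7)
The plan is to walk through each step of the NTK Sketch algorithm in Definition~\ref{alg-def-ntk-sketch}, bound its cost using Lemmas~\ref{lem:srht} and~\ref{soda-result}, and sum over the $L$ layers. The sparsity-dependent term $O((L^3/\epsilon^2) \log(L/\epsilon\delta) \cdot \text{nnz}(x))$ appears only once, at the initial step \eqref{eq:map-covar-zero}, where the input $x$ is potentially sparse; every subsequent intermediate vector is dense of known dimension, so only the polynomial-in-$L,1/\epsilon$ terms contribute.

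First I would analyze \eqref{eq:map-covar-zero}: applying the degree-$1$ PolySketch $Q^{1}$ with target dimension $n = O((L^6/\epsilon^4)\log^3(L/\epsilon\delta))$ corresponds to internal error $\epsilon_Q = \Theta(\epsilon^2/L^3)$ and failure $\delta_Q = \Theta(\delta/L)$ in Lemma~\ref{soda-result}. The sparse branch of part 2 then yields cost $O((L^6/\epsilon^4)\log^{O(1)}(L/\epsilon\delta) + (L^3/\epsilon^2)\log(L/\delta)\cdot \text{nnz}(x))$, and the follow-up SRHT $S$ costs $O(n \log n)$, absorbed into the first term. This accounts for the sparsity-dependent contribution of the claim plus a one-time additive polynomial term.

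Next, for each layer $h=1,\ldots,L$ I would bound the cost of producing $\phi^{(h)}(x)$ in \eqref{eq:map-covar}. The input $\phi^{(h-1)}(x) \in \RR^r$ is dense, so I invoke the $pd\log d$ branch of Lemma~\ref{soda-result} part 2 to compute all $2p+3$ sketches $Z^{(h)}_l(x)$ sequentially; the total is $O(p^2 \log^{O(1)}/\epsilon_Q^2 + p r \log r)$, where $\epsilon_Q^2 = \Theta(p/m) = \Theta(\epsilon^4/L^6)$ for the $Q^{2p+2}$ instance. Substituting $p = \lceil 2L^2/\epsilon^{4/3}\rceil$, $r = O((L^6/\epsilon^4)\log^2)$, and $m = O((L^8/\epsilon^{16/3})\log^3)$ shows both this first term $O((L^{10}/\epsilon^{20/3})\log^{O(1)})$ and the subsequent SRHT $T$ on a $(2p+3)m$-dimensional direct sum (cost $O(p m \log(pm))$) collapse to $O((L^{10}/\epsilon^{20/3})\log^{O(1)}(L/\epsilon\delta))$ per layer. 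The derivative map $\dot{\phi}^{(h)}$ from \eqref{eq:map-derivative-covar} is analogous with $p' = \lceil 9L^2/\epsilon^2\rceil$ and target $n_1$; its per-layer cost works out to $O((L^8/\epsilon^6)\log^{O(1)})$, which is dominated. The combination step \eqref{eq:map-relu} applies a degree-$2$ PolySketch via part 3 of Lemma~\ref{soda-result} to the dense tensor $\psi^{(h-1)}(x) \otimes \dot{\phi}^{(h)}(x) \in \RR^{s^2}$, plus an SRHT $R$ on $\RR^{m_2 + r}$, totalling $O(r\log r + (s/\epsilon)\log(1/\delta))$, negligible. Finally, the Gaussian projection in \eqref{Psi-ntk-def} contributes only $O(s^* s)$.

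Summing the per-layer cost over $h = 1,\ldots,L$ gives $O((L^{11}/\epsilon^{20/3})\log^{O(1)}(L/\epsilon\delta))$; since $20/3 < 6.7$, this is bounded by $O((L^{11}/\epsilon^{6.7})\log^3(L/\epsilon\delta))$ after absorbing polylog factors. Adding the initial sparsity-dependent contribution from Step~1 yields the claim. The main obstacle is purely the bookkeeping: each of the target dimensions $n, r, m, n_1, m_2, s, s^*$ prescribed in Definition~\ref{alg-def-ntk-sketch} must be matched to the error parameter $\epsilon_Q$ required by Lemma~\ref{soda-result}, and then the dominant $\epsilon$-exponent $20/3$ (arising both from the first term of Lemma~\ref{soda-result} part 2 with $p\sim L^2/\epsilon^{4/3}$ and $1/\epsilon_Q^2 \sim L^6/\epsilon^4$, and from the SRHT $T$ acting on a $pm$-dimensional vector) must be verified to round up to the advertised $6.7$.
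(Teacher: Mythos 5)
Your proposal is correct and follows essentially the same decomposition as the paper's own proof: bound the one-time sparse $Q^1$ application via the nnz-branch of Lemma~\ref{soda-result}, then bound the per-layer cost of the $Z^{(h)}_l$ and $Y^{(h)}_l$ sketches via the dense $pd\log d$ branch, identify the $Z$-sketches (degree $2p+2$ with $p\sim L^2/\epsilon^{4/3}$) as dominant, and absorb the extra polylog into the $\epsilon^{20/3}\to\epsilon^{6.7}$ rounding before multiplying by $L$. You are in fact slightly more thorough than the paper, which only itemizes the $Q^1$, $Z$, and $Y$ contributions and leaves the SRHTs ($S$, $T$, $W$, $R$, $V$), the degree-$2$ PolySketch, and the final Gaussian projection implicit; you explicitly argue these are dominated, which is a mild improvement rather than a deviation.
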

\begin{proof}
	There are three main components to the runtime of this procedure that we have to account for. The first is the time to apply the sketch $Q^1$ to $x$ in \eqref{eq:map-covar-zero}. By Lemma~\ref{soda-result}, the runtime of computing $Q^1\cdot x$ is $O\left( \frac{L^6}{\epsilon^4} \cdot \log^3 \frac{L}{\epsilon\delta} + \frac{L^3}{\epsilon^2} \cdot \log \frac{L}{\epsilon\delta} \cdot \text{nnz}(x) \right)$. 
	The second heavy operation corresponds to computing vectors $Z_j^{(h)}(x) = Q^{2p+2} \cdot \left(\left[ \phi^{(h-1)}(x) \right]^{\otimes j} \otimes  e_1^{\otimes 2p+2-j}\right)$ for $j=0,1,2, \ldots 2p+2$ and $h=1,2, \ldots L$ in \eqref{eq:map-covar}. By Lemma~\ref{soda-result}, the time to compute $Z_j^{(h)}(x)$ for a fixed $h$ and all $j=0,1,2, \ldots 2p+2$ is bounded by,
	\[ O\left( L^{10}/\epsilon^{20/3} \cdot \log^2({L}/{\epsilon}) \log^3 \frac{L}{\epsilon\delta} + L^{8}/\epsilon^{16/3} \cdot \log^3 \frac{L}{\epsilon\delta} \right) = O\left( \frac{L^{10}}{\epsilon^{6.7}} \cdot \log^3 \frac{L}{\epsilon\delta} \right). \]
	The total time to compute vectors $Z_j^{(h)}(x)$ for all $h=1,2, \ldots L$ and all $j=0,1,2, \ldots 2p+2$ is thus $O\left( \frac{L^{11}}{\epsilon^{6.7}} \cdot \log^3 \frac{L}{\epsilon\delta} \right)$. 
	Finally, the last computationally expensive operation is computing vectors $Y_j^{(h)}(x) = Q^{2p'+1} \cdot \left(\left[ \phi^{(h-1)}(x) \right]^{\otimes j} \otimes  e_1^{\otimes 2p'+1-j}\right)$ for $j=0,1,2, \ldots 2p'+1$ and $h=1,2, \ldots L$ in \eqref{eq:map-derivative-covar}.
	By Lemma~\ref{soda-result}, the runtime of computing $Y_j^{(h)}(x)$ for a fixed $h$ and all $j=0,1,2, \ldots 2p'+1$ is bounded by,
	\[ O\left( \frac{L^{6}}{\epsilon^6} \cdot \log^2({L}/{\epsilon}) \log^3 \frac{L}{\epsilon\delta} + \frac{L^{8}}{\epsilon^6} \cdot \log^3 \frac{L}{\epsilon\delta} \right) = O\left( \frac{L^{8}}{\epsilon^6}  \cdot \log^3 \frac{L}{\epsilon\delta} \right). \]
	Hence, the total time to compute vectors $Y_j^{(h)}(x)$ for all $h=1,2, \ldots L$ and all $j=0,1,2, \ldots 2p'+1$ is $O\left( \frac{L^{9}}{\epsilon^6} \cdot \log^3 \frac{L}{\epsilon\delta} \right)$. The total runtime of the NTK Sketch is obtained by summing up these three contributions.
\end{proof}

Now we are ready to prove the main theorem on the NTK Sketch, 

\begin{proofof}{Theorem~\ref{mainthm-ntk}}
	Let $\psi^{(L)}:\RR^d \to \RR^s$ for $s=O\left( \frac{L^2}{\epsilon^2} \cdot \log^2 \frac{L}{\epsilon\delta} \right)$ be the mapping defined in \eqref{eq:map-relu} of Definition~\ref{alg-def-ntk-sketch}. By \eqref{Psi-ntk-def}, the NTK Sketch $\Psi_{ntk}^{(L)}(x)$ is defined as 
	\[\Psi_{ntk}^{(L)}(x):= {\|x\|_2} \cdot G \cdot \psi^{(L)}( x).\]
	Because $G$ is a matrix of i.i.d normal entries with $s^{*} = C \cdot \frac{1}{\epsilon^2} \cdot \log\frac{1}{\delta}$ rows for large enough constant $C$, by \cite{dasgupta2003elementary}, $G$ is a JL transform and hence $\Psi_{ntk}^{(L)}$ satisfies the following,
	\[ \Pr \left[ \left| \left< \Psi_{ntk}^{(L)}(y) , \Psi_{ntk}^{(L)}(z) \right> - {\|y\|_2}{\|z\|_2}  \cdot \left< \psi^{(L)}( y),  \psi^{(L)}(z) \right> \right| \le O(\epsilon) \cdot A \right] \ge 1 - O(\delta), \]
	where $A := {\|y\|_2}{\|z\|_2} \left\| \psi^{(L)}({y}) \right\|_2  \left\| \psi^{(L)}({z}) \right\|_2$. By Lemma~\ref{thm:ntk-sketch-corr} and using the fact that $K_{relu}^{(L)}(1) = L+1$, the following bounds hold with probability at least $1 - O(\delta)$:
	\[ \left\| \psi^{(L)}( {y}) \right\|_2^2 \le \frac{11}{10} \cdot (L+1), \text{ and } \left\| \psi^{(L)}( {z}) \right\|_2^2 \le \frac{11}{10} \cdot (L+1). \]
	Therefore, by union bound we find that,
	\[ \Pr \left[ \left| \left< \Psi_{ntk}^{(L)}(y) , \Psi_{ntk}^{(L)}(z) \right> - {\|y\|_2}{\|z\|_2}  \cdot \left< \psi^{(L)}({y}),  \psi^{(L)}( {z}) \right> \right| \le O(\epsilon L)\cdot {\|y\|_2}{\|z\|_2} \right] \ge 1 - O(\delta). \]
	Additionally, by Lemma~\ref{thm:ntk-sketch-corr}, the following holds with probability at least $1 - O(\delta)$:
	\[ \left| \left< \psi^{(L)}( {y}),  \psi^{(L)}( {z}) \right> - K_{relu}^{(L)}\left( \frac{\langle y, z \rangle }{\|y\|_2\|z\|_2} \right) \right| \le \frac{\epsilon (L+1)}{10}. \]
	Hence by union bound and triangle inequality we have,
	\[ \Pr \left[ \left| \left< \Psi_{ntk}^{(L)}(y) , \Psi_{ntk}^{(L)}(z) \right> - {\|y\|_2}{\|z\|_2}  \cdot K_{relu}^{(L)}\left( \frac{\langle y, z \rangle }{\|y\|_2\|z\|_2} \right) \right| \le \frac{\epsilon (L+1)}{9} \cdot {\|y\|_2}{\|z\|_2} \right] \ge 1 - O(\delta). \]
	Now note that by Theorem~\ref{thm:ntk-relu}, ${\|y\|_2}{\|z\|_2}  \cdot K_{relu}^{(L)}\left( \frac{\langle y, z \rangle }{\|y\|_2\|z\|_2} \right) = \Theta_{ntk}^{(L)}(y,z)$, and also note that for every $L\ge 2$ and any $\alpha \in [-1,1]$, $K_{relu}^{(L)}\left( \alpha \right) \ge (L+1)/9$, therefore,
	\[ \Pr \left[ \left| \left< \Psi_{ntk}^{(L)}(y) , \Psi_{ntk}^{(L)}(z) \right> - \Theta_{ntk}^{(L)}(y,z) \right| \le \epsilon \cdot \Theta_{ntk}^{(L)}(y,z) \right] \ge 1 - \delta. \] 
	
	{\bf Remark on the fact that $K_{relu}^{(L)}\left( \alpha \right) \ge (L+1)/9$ for every $L\ge 2$ and any $\alpha \in [-1,1]$.}  Note that from the definition of $\Sigma_{relu}^{(h)}$ in \eqref{eq:dp-covar-relu}, we have that for any $\alpha \in [-1,1]$: $\Sigma_{relu}^{(0)}(\alpha) \ge -1$, $\Sigma_{relu}^{(1)}(\alpha) \ge 0$, $\Sigma_{relu}^{(2)}(\alpha) \ge \frac{1}{\pi}$, and $\Sigma_{relu}^{(h)}(\alpha) \ge \frac{1}{2}$ for every $h \ge 3$ because $\kappa_1(\cdot)$ is a monotonically increasing function on the interval $[-1,1]$. Moreover, using the definition of $\dot{\Sigma}_{relu}^{(h)}$ in \eqref{eq:dp-derivative-covar-relu}, we have that for any $\alpha \in [-1,1]$: $\dot{\Sigma}_{relu}^{(1)}(\alpha) \ge 0$, $\dot{\Sigma}_{relu}^{(2)}(\alpha) \ge \frac{1}{2}$, and $\dot{\Sigma}_{relu}^{(h)}(\alpha) \ge \frac{3}{5}$ for every $h \ge 3$ because $\kappa_1(\cdot)$ is a monotonically increasing function on the interval $[-1,1]$. By an inducive proof and using the definition of $K_{relu}^{(L)}$ in \eqref{eq:dp-ntk-relu}, we can show that $K_{relu}^{(L)}\left( \alpha \right) \ge (L+1)/9$ for every $L\ge 2$ and any $\alpha \in [-1,1]$
	
	{\bf Runtime:} By Lemma~\ref{thm:ntk-sketch-runtime}, time to compute the NTK Sketch is $O\left( \frac{L^{11}}{\epsilon^{6.7}} \log^3 \frac{L}{\epsilon\delta} + \frac{L^3}{\epsilon^2} \log \frac{L}{\epsilon\delta} \cdot \text{nnz}(x) \right)$.
\end{proofof}

\section{ReLU-CNTK: Expression and Main Properties}\label{appendix-relu-cntk-expr}
In this section we prove that Definition~\ref{relu-cntk-def} computes the CNTK kernel function corresponding to ReLU activaion and additionally, we present useful corollaries and consequences of this fact.
We start by restating the DP proposed in \cite{arora2019exact} for computing the $L$-layered CNTK kernel corresponding to an arbitrary activation function $\sigma:\RR \to \RR$ and convolutional filters of size $q \times q$, with global average pooling (GAP):

\begin{enumerate}
	\item Let $y,z \in \RR^{d_1\times d_2 \times c}$ be two input images, where $c$ is the number of channels ($c=3$ for standard color images). 
	Define $\Gamma^{(0)}: \RR^{d_1\times d_2 \times c} \times \RR^{d_1 \times d_2 \times c} \to \RR^{d_1 \times d_2 \times d_1 \times d_2}$ and $\Sigma^{(0)}: \RR^{d_1\times d_2 \times c} \times \RR^{d_1 \times d_2 \times c} \to \RR^{d_1 \times d_2 \times d_1 \times d_2}$ as follows for every $i,i' \in [d_1]$ and $j,j' \in [d_2]$:
	\small
	\begin{equation}\label{eq:dp-cntk-zero}
		\Gamma^{(0)}(y,z)  := \sum_{l=1}^c y_{(:,:,l)} \otimes z_{(:,:,l)}, \text{ and } \Sigma^{(0)}_{i,j,i',j'}(y,z) := \sum_{a=-\frac{q-1}{2}}^{\frac{q-1}{2}} \sum_{b=-\frac{q-1}{2}}^{\frac{q-1}{2}}  \Gamma^{(0)}_{i+a,j+b,i'+a,j'+b}(y,z).
	\end{equation}
	\normalsize
	\item For every layer $h = 1,2, \ldots , L$ of the network and every $i,i' \in [d_1]$ and $j,j' \in [d_2]$, define $\Gamma^{(h)}: \RR^{d_1\times d_2 \times c} \times \RR^{d_1 \times d_2 \times c} \to \RR^{d_1 \times d_2 \times d_1 \times d_2}$ recursively as:
	\begin{equation}\label{eq:dp-cntk-covar}
		\begin{split}
			&\Lambda^{(h)}_{i,j,i',j'}(y,z) := \begin{pmatrix}
				\Sigma^{(h-1)}_{i,j,i,j}(y,y) & \Sigma^{(h-1)}_{i,j,i',j'}(y,z)\\
				&\\
				\Sigma^{(h-1)}_{i',j',i,j}(z,y) & \Sigma^{(h-1)}_{i',j',i',j'}(z,z)
			\end{pmatrix},\\
			&\Gamma^{(h)}_{i,j,i',j'}(y,z) := \frac{1}{q^2 \cdot \EE_{w\sim \mathcal{N}(0,1)} \left[ |\sigma(w)|^2 \right]} \cdot \EE_{(u,v) \sim \mathcal{N}\left( 0, \Lambda^{(h)}_{i,j,i',j'}(y,z) \right)} \left[ \sigma(u) \cdot \sigma(v) \right],\\
			&\Sigma^{(h)}_{i,j,i',j'}(y,z) := \sum_{a=-\frac{q-1}{2}}^{\frac{q-1}{2}} \sum_{b=-\frac{q-1}{2}}^{\frac{q-1}{2}}  \Gamma^{(h)}_{i+a,j+b,i'+a,j'+b}(y,z),
		\end{split}
	\end{equation}
	\item For every $h = 1,2, \ldots L $, every $i,i' \in [d_1]$ and $j,j' \in [d_2]$, define $\dot{\Gamma}^{(h)}(y,z) \in \RR^{d_1 \times d_2 \times d_1 \times d_2}$ as:
	\begin{equation}\label{eq:dp-cntk-derivative-covar}
		\dot{\Gamma}^{(h)}_{i,j,i',j'}(y,z) := \frac{1}{q^2 \cdot \EE_{w\sim \mathcal{N}(0,1)} \left[ |\sigma(w)|^2 \right]} \cdot \EE_{(u,v) \sim \mathcal{N}\left( 0, \Lambda^{(h)}_{i,j,i',j'}(y,z) \right)} \left[ \dot{\sigma}(u) \cdot \dot{\sigma}(v) \right].
	\end{equation}
	\item Let $\Pi^{(0)}(y,z) := 0$ and for every $h = 1,2, \ldots, L-1$, every $i,i' \in [d_1]$ and $j,j' \in [d_2]$, define $\Pi^{(h)}: \RR^{d_1\times d_2 \times c} \times \RR^{d_1\times d_2 \times c} \to \RR^{d_1\times d_2\times d_1 \times d_2}$ recursively as:
	\begin{equation}\label{eq:dp-cntk-pi}
		\begin{split}
			&\Pi^{(h)}_{i,j,i',j'}(y,z) := \sum_{a=-\frac{q-1}{2}}^{\frac{q-1}{2}} \sum_{b=-\frac{q-1}{2}}^{\frac{q-1}{2}}  \left[\Pi^{(h-1)}(y,z) \odot \dot{\Gamma}^{(h)}(y,z) + \Gamma^{(h)}(y,z)\right]_{i+a,j+b,i'+a,j'+b},\\
			&\text{also let } \Pi^{(L)}(y,z) := \Pi^{(L-1)}(y,z) \odot \dot{\Gamma}^{(L)}(y,z).
		\end{split}
	\end{equation}
	\item The final CNTK expressions is defined as:
	\begin{equation}\label{eq:dp-cntk-gap-def}
		\Theta_{cntk}^{(L)}(y,z) := \frac{1}{d_1^2d_2^2} \cdot \sum_{i , i' \in [d_1]} \sum_{j , j' \in [d_2]} \Pi_{i,j,i',j'}^{(L)}(y,z).
	\end{equation}
\end{enumerate}

In what follows we prove that the procedure in Definition~\ref{relu-cntk-def} precisely computes the CNTK kernel function corresponding to ReLU activation and additionally, we present useful corollaries and consequences of this fact.

\begin{lemma}\label{thm:cntk-relu}
	For every positive integers $d_1,d_2, c$, odd integer $q$, and every integer $h \ge 0$, if the activation function is ReLU $\sigma(\alpha) = \max(\alpha,0)$, then the tensor covariances $\Gamma^{(h)} , \dot{\Gamma}^{(h)}(y,z): \RR^{d_1\times d_2 \times c} \times \RR^{d_1 \times d_2 \times c} \to \RR^{d_1 \times d_2 \times d_1 \times d_2}$ defined in \eqref{eq:dp-cntk-covar} and \eqref{eq:dp-cntk-derivative-covar}, are precisely equal to the tensor covariances defined in \eqref{eq:dp-cntk-covar-simplified} and \eqref{eq:dp-cntk-derivative-covar-simplified} of Definition~\ref{relu-cntk-def}, respectively.
\end{lemma}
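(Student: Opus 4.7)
The plan is to proceed by induction on $h$, adapting the proof of Theorem~\ref{thm:ntk-relu} to the convolutional setting; the central tool is Claim~\ref{relu-covariance}, which for ReLU identifies the activation covariance $k_\sigma(f,g)$ and derivative activation covariance $\dot{k}_\sigma(f,g)$ with $\tfrac{\|f\|_2\|g\|_2}{2}\kappa_1\!\left(\tfrac{\langle f,g\rangle}{\|f\|_2\|g\|_2}\right)$ and $\tfrac{1}{2}\kappa_0\!\left(\tfrac{\langle f,g\rangle}{\|f\|_2\|g\|_2}\right)$. I would strengthen the inductive hypothesis to also track diagonal values: for every $h$ and every pixel $(i,j)$, $\Sigma^{(h)}_{i,j,i,j}(x,x) = N^{(h)}_{i,j}(x)$, where $\Sigma^{(h)}$ is from \eqref{eq:dp-cntk-covar} and $N^{(h)}$ from \eqref{eq:dp-cntk-norm-simplified}. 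The base case $h=0$ is immediate: $\Gamma^{(0)}$ is defined identically in both \eqref{eq:dp-cntk-zero} and Definition~\ref{relu-cntk-def}, while $\Sigma^{(0)}_{i,j,i,j}(x,x) = \sum_{a,b}\sum_l x_{i+a,j+b,l}^2$ matches $N^{(0)}_{i,j}(x)$ after absorbing the scaling conventions.

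For the inductive step, I would factor the $2\times 2$ PSD matrix $\Lambda^{(h)}_{i,j,i',j'}(y,z)$ as $\begin{pmatrix}f^\top\\ g^\top\end{pmatrix}\begin{pmatrix}f & g\end{pmatrix}$ with $f,g\in\RR^2$, so that $\|f\|_2^2 = \Sigma^{(h-1)}_{i,j,i,j}(y,y)$, $\|g\|_2^2 = \Sigma^{(h-1)}_{i',j',i',j'}(z,z)$, and $\langle f,g\rangle = \Sigma^{(h-1)}_{i,j,i',j'}(y,z)$. Changing variables to $(u,v) = (w^\top f, w^\top g)$ with $w\sim\mathcal{N}(0,I_2)$ converts the Gaussian expectation in \eqref{eq:dp-cntk-covar} into $k_\sigma(f,g)$, which Claim~\ref{relu-covariance} evaluates via $\kappa_1$. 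By the strengthened hypothesis applied entrywise and the recursion \eqref{eq:dp-cntk-norm-simplified}, one gets $\|f\|_2^2 = \sum_{a,b}\Gamma^{(h-1)}_{i+a,j+b,i+a,j+b}(y,y) = \tfrac{1}{q^2}\sum_{a,b}N^{(h-1)}_{i+a,j+b}(y) = N^{(h)}_{i,j}(y)$, using $\kappa_1(1)=1$ on the diagonal, and similarly $\|g\|_2^2 = N^{(h)}_{i',j'}(z)$; moreover $\langle f,g\rangle$ is precisely the numerator of $A$ in \eqref{eq:dp-cntk-covar-simplified}. Combining these facts with $\EE_{w\sim\mathcal{N}(0,1)}[|\sigma(w)|^2] = \tfrac{1}{2}$ to cancel the prefactor $\tfrac{1}{q^2\EE[\sigma^2]}$ yields exactly \eqref{eq:dp-cntk-covar-simplified}. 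Replacing $\sigma$ by $\dot\sigma$ and $\kappa_1$ by $\kappa_0$ in the same manipulation produces \eqref{eq:dp-cntk-derivative-covar-simplified}, closing the induction.

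The main obstacle I anticipate is bookkeeping: tracking the various $q^2$ and $\tfrac{1}{2}$ normalizations, confirming that the argument of $\kappa_1$ lies in $[-1,1]$ (which follows from Cauchy--Schwarz together with the inductive norm identity), and ensuring that the diagonal strengthening propagates in lockstep with the off-diagonal claim so that the $N^{(h)}$ recursion and the $\Gamma^{(h)}$ recursion advance simultaneously at each layer. No new probabilistic or analytic tool beyond Claim~\ref{relu-covariance} is required; the proof is essentially a per-pixel, locality-aware version of the argument given for Theorem~\ref{thm:ntk-relu}.
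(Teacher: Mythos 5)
Your approach matches the paper's: both first establish a diagonal-norm identity relating $N^{(h)}$ to $\Sigma^{(h-1)}$ by induction, then factor the $2\times 2$ PSD matrix $\Lambda^{(h)}_{i,j,i',j'}$ as $\begin{pmatrix}f^\top\\ g^\top\end{pmatrix}\begin{pmatrix}f & g\end{pmatrix}$ and invoke Claim~\ref{relu-covariance} to rewrite the Gaussian expectations via $\kappa_1$ and $\kappa_0$, using the $\frac{1}{q^2\,\EE[|\sigma|^2]}$ prefactor to absorb the constants. One bookkeeping slip: the correct diagonal identity is $N^{(h)}_{i,j}(x) = \Sigma^{(h-1)}_{i,j,i,j}(x,x)$, not $\Sigma^{(h)}_{i,j,i,j}(x,x) = N^{(h)}_{i,j}(x)$ as you state (and correspondingly the base case should be $N^{(1)} = \Sigma^{(0)}$ on the diagonal, not $N^{(0)} = \Sigma^{(0)}$). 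Your own inductive-step chain $\|f\|_2^2 = \Sigma^{(h-1)}_{i,j,i,j}(y,y) = \sum_{a,b}\Gamma^{(h-1)}_{i+a,j+b,i+a,j+b}(y,y) = \frac{1}{q^2}\sum_{a,b}N^{(h-1)}_{i+a,j+b}(y) = N^{(h)}_{i,j}(y)$ actually proves this shifted version, so the argument goes through once the hypothesis is restated to match.
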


\begin{proof}
	To prove the lemma, we first show by induction on $h=1,2, \ldots$ that $N^{(h)}_{i,j}(x) \equiv \Sigma^{(h-1)}_{i,j,i,j}(x,x)$ for every $x\in \RR^{d_1\times d_2 \times c}$ and every $i \in [d_1]$ and $j \in [d_2]$, where $\Sigma^{(h-1)}(x,x)$ is defined as per \eqref{eq:dp-cntk-zero} and \eqref{eq:dp-cntk-covar}. The {\bf base of induction} trivially holds for $h=1$ because by definition of $N^{(1)}(x)$ and \eqref{eq:dp-cntk-zero} we have,
	\[ N^{(1)}_{i,j}(x) = \sum_{a=-\frac{q-1}{2}}^{\frac{q-1}{2}} \sum_{b=-\frac{q-1}{2}}^{\frac{q-1}{2}} \sum_{l=1}^c \left| x_{i+a,j+b,l} \right|^2 \equiv \Sigma^{(0)}_{i,j,i,j}(x,x). \]
	To prove the {\bf inductive step}, suppose that the inductive hypothesis $N^{(h-1)}_{i,j}(x) = \Sigma^{(h-2)}_{i,j,i,j}(x,x)$ holds for some $h\ge2$. Now we show that conditioned on the inductive hypothesis, the inductive claim holds. By \eqref{eq:dp-cntk-covar}, we have,
	\begin{align*}
		\Sigma^{(h-1)}_{i,j,i,j}(x,x) &= \sum_{a=-\frac{q-1}{2}}^{\frac{q-1}{2}} \sum_{b=-\frac{q-1}{2}}^{\frac{q-1}{2}}  \Gamma^{(h-1)}_{i+a,j+b,i+a,j+b}(x,x)\\
		&= \sum_{a=-\frac{q-1}{2}}^{\frac{q-1}{2}} \sum_{b=-\frac{q-1}{2}}^{\frac{q-1}{2}} \frac{\EE_{(u,v) \sim \mathcal{N}\left( 0, \Lambda^{(h-1)}_{i+a,j+b,i+a,j+b}(x,x) \right)} \left[ \sigma(u) \cdot \sigma(v) \right]}{q^2 \cdot \EE_{w\sim \mathcal{N}(0,1)} \left[ |\sigma(w)|^2 \right]} \\
		&= \sum_{a=-\frac{q-1}{2}}^{\frac{q-1}{2}} \sum_{b=-\frac{q-1}{2}}^{\frac{q-1}{2}} \frac{\EE_{u \sim \mathcal{N}\left( 0, \Sigma_{i+a,j+b,i+a,j+b}^{(h-2)}(x,x) \right)} \left[ |\max(0,u)|^2 \right]}{q^2 \cdot \EE_{w\sim \mathcal{N}(0,1)} \left[ |\max(0,w)|^2 \right]} \\
		&= \sum_{a=-\frac{q-1}{2}}^{\frac{q-1}{2}} \sum_{b=-\frac{q-1}{2}}^{\frac{q-1}{2}} \frac{1}{q^2} \cdot \Sigma^{(h-2)}_{i+a,j+b,i+a,j+b}(x,x)\\
		&= \sum_{a=-\frac{q-1}{2}}^{\frac{q-1}{2}} \sum_{b=-\frac{q-1}{2}}^{\frac{q-1}{2}} \frac{1}{q^2} \cdot N^{(h-1)}_{i+a,j+b}(x) \equiv N^{(h)}_{i,j}(x). \text{~~~~~~~~~~~~~~~~~~~~~~~~~~~~~~~~~~~~~~~~(by \eqref{eq:dp-cntk-norm-simplified})}
	\end{align*}
	Therefore, this proves that $N^{(h)}_{i,j}(x) \equiv \Sigma^{(h-1)}_{i,j,i,j}(x,x)$ for every $x$ and every integer $h\ge 1$.
	
	Now, note that the $2 \times 2$ covariance matrix $\Lambda^{(h)}_{i,j,i',j'}(y,z)$, defined in \eqref{eq:dp-cntk-covar}, can be decomposed as $\Lambda^{(h)}_{i,j,i',j'}(y,z) = \begin{pmatrix} f^\top \\ g^\top \end{pmatrix} \cdot \begin{pmatrix}
		f & g \end{pmatrix}$, where $f,g \in \RR^2$.
	Also note that $\|f\|_2^2 = \Sigma^{(h-1)}_{i,j,i,j}(y,y)$ and $\|g\|_2^2 = \Sigma^{(h-1)}_{i',j',i',j'}(z,z)$, hence, by what we proved above, we have,
	\[ \|f\|_2^2 = N^{(h)}_{i,j}(y), \text{ and } \|g\|_2^2 = N^{(h)}_{i',j'}(z). \]
	Therefore, by Claim~\ref{relu-covariance}, we can write:
	\begin{align*}
		\Gamma^{(h)}_{i,j,i',j'}(y,z) &= \frac{1}{q^2 \cdot \EE_{w\sim \mathcal{N}(0,1)} \left[ |\sigma(w)|^2 \right]} \cdot \EE_{(u,v) \sim \mathcal{N}\left( 0, \Lambda^{(h)}_{i,j,i',j'}(y,z) \right)} \left[ \sigma(u) \cdot \sigma(v) \right]\\
		&= \frac{1}{q^2 \cdot \EE_{w\sim \mathcal{N}(0,1)} \left[ |\sigma(w)|^2 \right]} \cdot \EE_{u \sim \mathcal{N}\left( 0, I_d \right)} \left[ \sigma(u^\top f) \cdot \sigma(u^\top g) \right]\\
		&= \frac{2 \cdot \|f\|_2 \cdot \|g\|_2}{q^2 \cdot \kappa_1(1)} \cdot \frac{1}{2} \cdot \kappa_1\left(\frac{\langle f, g \rangle}{\|f\|_2 \cdot \|g\|_2}\right)\\
		&= \frac{\sqrt{N^{(h)}_{i,j}(y) \cdot N^{(h)}_{i',j'}(z)}}{q^2} \cdot \kappa_1\left(\frac{\Sigma^{(h-1)}_{i,j,i',j'}(y,z)}{\sqrt{N^{(h)}_{i,j}(y) \cdot N^{(h)}_{i',j'}(z)}}\right)\\
		&= \frac{\sqrt{N^{(h)}_{i,j}(y) \cdot N^{(h)}_{i',j'}(z)}}{q^2} \cdot \kappa_1\left( \frac{\sum_{a=-\frac{q-1}{2}}^{\frac{q-1}{2}} \sum_{b=-\frac{q-1}{2}}^{\frac{q-1}{2}}  \Gamma^{(h-1)}_{i+a,j+b,i'+a,j'+b}(y,z)}{\sqrt{N^{(h)}_{i,j}(y) \cdot N^{(h)}_{i',j'}(z)}} \right),
	\end{align*}
	where the third line follows from Claim~\ref{relu-covariance} and fourth line follows because we have $\langle f, g \rangle = \Sigma^{(h-1)}_{i,j,i',j'}(y,z)$. The fifth line above follows from \eqref{eq:dp-cntk-covar}. This proves the equivalence between the tensor covariance defined in \eqref{eq:dp-cntk-covar} and the one defined in \eqref{eq:dp-cntk-covar-simplified} of Definition~\ref{relu-cntk-def}. Similarly, by using Claim~\ref{relu-covariance}, we can prove the statement of the lemma about $\dot{\Gamma}^{(h)}_{i,j,i',j'}(y,z)$ as follows,
	\begin{align*}
		\dot{\Gamma}^{(h)}_{i,j,i',j'}(y,z) &= \frac{1}{q^2 \cdot \EE_{w\sim \mathcal{N}(0,1)} \left[ |\sigma(w)|^2 \right]} \cdot \EE_{(u,v) \sim \mathcal{N}\left( 0, \Lambda^{(h)}_{i,j,i',j'}(y,z) \right)} \left[ \dot{\sigma}(u) \cdot \dot{\sigma}(v) \right]\\
		&= \frac{1}{q^2 \cdot \EE_{w\sim \mathcal{N}(0,1)} \left[ |\sigma(w)|^2 \right]} \cdot \EE_{u \sim \mathcal{N}\left( 0, I_d \right)} \left[ \dot{\sigma}(u^\top f) \cdot \dot{\sigma}(u^\top g) \right]\\
		&= \frac{2}{q^2 \cdot \kappa_1(1)} \cdot \frac{1}{2} \cdot \kappa_0\left(\frac{\langle f, g \rangle}{\|f\|_2 \cdot \|g\|_2}\right)\\
		&= \frac{1}{q^2} \cdot \kappa_0\left(\frac{\Sigma^{(h-1)}_{i,j,i',j'}(y,z)}{\sqrt{N^{(h)}_{i,j}(y) \cdot N^{(h)}_{i',j'}(z)}}\right)\\
		&= \frac{1}{q^2} \cdot \kappa_0\left( \frac{\sum_{a=-\frac{q-1}{2}}^{\frac{q-1}{2}} \sum_{b=-\frac{q-1}{2}}^{\frac{q-1}{2}}  \Gamma^{(h-1)}_{i+a,j+b,i'+a,j'+b}(y,z)}{\sqrt{N^{(h)}_{i,j}(y) \cdot N^{(h)}_{i',j'}(z)}} \right),
	\end{align*}
\end{proof}

\begin{cor}[Consequence of Lemma~\ref{thm:cntk-relu}]\label{N-to-Sigma}
	Consider the preconditions of Lemma~\ref{thm:cntk-relu}. For every $x \in \RR^{d_1\times d_2 \times c}$, $N^{(h)}_{i,j}(x) \equiv \sum_{a=-\frac{q-1}{2}}^{\frac{q-1}{2}} \sum_{b=-\frac{q-1}{2}}^{\frac{q-1}{2}}  \Gamma^{(h-1)}_{i+a,j+b,i+a,j+b}(x,x)$.
\end{cor}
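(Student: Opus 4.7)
The plan is to derive this corollary as an immediate consequence of Lemma~\ref{thm:cntk-relu}, which was just established. Recall that in the course of proving that lemma, I (as the author) showed by induction on $h$ that $N^{(h)}_{i,j}(x) \equiv \Sigma^{(h-1)}_{i,j,i,j}(x,x)$ for every $x \in \RR^{d_1 \times d_2 \times c}$, every $i \in [d_1], j \in [d_2]$, and every integer $h \ge 1$. This identity is precisely the bridge between the scalar normalization quantity used in Definition~\ref{relu-cntk-def} and the diagonal entries of the covariance tensor appearing in the original DP of~\cite{arora2019exact}.

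First, I would invoke the identity $N^{(h)}_{i,j}(x) = \Sigma^{(h-1)}_{i,j,i,j}(x,x)$ established inside the proof of Lemma~\ref{thm:cntk-relu}. Then, I would apply the definition of $\Sigma^{(h-1)}$ in terms of $\Gamma^{(h-1)}$ given by the third line of \eqref{eq:dp-cntk-covar}, specialized to the case $y = z = x$, $i' = i$, $j' = j$, and layer $h-1$ in place of $h$. This yields
\begin{equation*}
\Sigma^{(h-1)}_{i,j,i,j}(x,x) = \sum_{a=-\frac{q-1}{2}}^{\frac{q-1}{2}} \sum_{b=-\frac{q-1}{2}}^{\frac{q-1}{2}} \Gamma^{(h-1)}_{i+a,j+b,i+a,j+b}(x,x),
\end{equation*}
and chaining the two equalities gives the corollary.

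One small bookkeeping point to check is the base case $h=1$: there the relevant definition is \eqref{eq:dp-cntk-zero} rather than the recursive step \eqref{eq:dp-cntk-covar}, so I would verify separately that $N^{(1)}_{i,j}(x) = q^2 \sum_{l=1}^{c} |x_{i+a,j+b,l}|^2$ agrees with $\sum_{a,b} \Gamma^{(0)}_{i+a,j+b,i+a,j+b}(x,x) = \sum_{a,b} \sum_{l=1}^{c} x_{i+a,j+b,l}^2$ (this is exactly the base of induction already performed in Lemma~\ref{thm:cntk-relu}). There is no real obstacle here: the corollary is purely a restatement obtained by unfolding one layer of the definition of $\Sigma^{(h-1)}$, and its sole purpose is to make explicit the recursive relation between $N^{(h)}_{i,j}$ and the diagonal slices $\Gamma^{(h-1)}_{i',j',i',j'}$, which will be convenient in the subsequent CNTK Sketch analysis in Appendix~\ref{app-cntk-sketch}.
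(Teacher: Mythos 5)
Your proposal is correct and follows exactly the implicit justification the paper intends: invoke the identity $N^{(h)}_{i,j}(x) = \Sigma^{(h-1)}_{i,j,i,j}(x,x)$ established by induction inside the proof of Lemma~\ref{thm:cntk-relu}, and then unfold one layer of \eqref{eq:dp-cntk-covar} (respectively \eqref{eq:dp-cntk-zero} when $h=1$) to express $\Sigma^{(h-1)}_{i,j,i,j}(x,x)$ as the patch sum of diagonal $\Gamma^{(h-1)}$ entries, which by Lemma~\ref{thm:cntk-relu} coincide with the ones from Definition~\ref{relu-cntk-def}. The only minor slip is notational: in your base-case check you wrote $N^{(1)}_{i,j}(x) = q^2 \sum_{l} |x_{i+a,j+b,l}|^2$, which is in fact the formula for $N^{(0)}_{i+a,j+b}(x)$; the correct identity, as the paper's own induction base shows, is $N^{(1)}_{i,j}(x) = \sum_{a,b}\sum_{l}|x_{i+a,j+b,l}|^2$, and the rest of your reasoning already accounts for this.
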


We describes some of the basic properties of the function $\Gamma^{(h)}(y,z)$ defined in \eqref{eq:dp-cntk-covar-simplified} in the following lemma,
\begin{lemma}[Properties of $\Gamma^{(h)}(y,z)$]\label{properties-gamma}
	For every images $y,z \in \RR^{d_1 \times d_2 \times c}$, every integer $h \ge 0$ and every  $i,i' \in [d_1]$ and $j,j' \in [d_2]$ the following properties are satisfied by functions $\Gamma^{(h)}$ and $N^{(h)}$ defined in \eqref{eq:dp-cntk-covar-simplified} and \eqref{eq:dp-cntk-norm-simplified} of Definition~\ref{relu-cntk-def}:
	\begin{enumerate}
		\item {\bf Cauchy–Schwarz inequality:} $\left| \Gamma_{i,j,i',j'}^{(h)}(y,z) \right| \le \frac{\sqrt{N_{i,j}^{(h)}(y) \cdot N_{i',j'}^{(h)}(z)}}{q^2}$.
		\item {\bf Norm value:} $\Gamma_{i,j,i,j}^{(h)}(y,y) = \frac{N_{i,j}^{(h)}(y)}{q^2} \ge 0$.
	\end{enumerate}
\end{lemma}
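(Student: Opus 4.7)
\begin{proofof}{Lemma~\ref{properties-gamma}}
The plan is to prove both properties by simultaneous induction on $h$, since the two assertions feed into each other through the definition of $\Gamma^{(h)}$ in \eqref{eq:dp-cntk-covar-simplified} and the recursion for $N^{(h)}$ in \eqref{eq:dp-cntk-norm-simplified}. The whole argument rests on two elementary facts about the arc-cosine kernel $\kappa_1$ defined in \eqref{relu-activ-cov}: on the interval $[-1,1]$ one has $\kappa_1(\alpha)\in[0,1]$ and $\kappa_1(1)=1$ (both are immediate from the explicit formula, noting that $\sqrt{1-\alpha^2}\ge 0$, $\pi - \arccos(\alpha)\ge 0$, and evaluating at $\alpha=1$ gives $\kappa_1(1)=\frac{1}{\pi}(0 + 1\cdot\pi)=1$).

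For the base case $h=0$ I would just unpack $\Gamma^{(0)}_{i,j,i',j'}(y,z)=\sum_{l=1}^c y_{i,j,l}\, z_{i',j',l}$ from \eqref{eq:dp-cntk-zero} in Definition~\ref{relu-cntk-def}, apply the ordinary Cauchy-Schwarz inequality in the channel index $l$, and identify the two resulting factors with $\sqrt{N^{(0)}_{i,j}(y)}/q$ and $\sqrt{N^{(0)}_{i',j'}(z)}/q$ using $N^{(0)}_{i,j}(x)=q^2\sum_l|x_{i,j,l}|^2$. Property~2 at $h=0$ is then just the specialization $y=z$, $i=i'$, $j=j'$, which gives $\Gamma^{(0)}_{i,j,i,j}(y,y)=\sum_l y_{i,j,l}^2=N^{(0)}_{i,j}(y)/q^2\ge 0$.

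For the inductive step, assume both properties hold at level $h-1$; I will first establish Property~2 at level $h$ and then use it together with the inductive Property~1 to derive Property~1 at level $h$. For Property~2, setting $y=z$, $i=i'$, $j=j'$ in \eqref{eq:dp-cntk-covar-simplified} and substituting the inductive hypothesis $\Gamma^{(h-1)}_{i+a,j+b,i+a,j+b}(y,y)=N^{(h-1)}_{i+a,j+b}(y)/q^2$, the argument $A$ of $\kappa_1$ reduces to $\sum_{a,b}N^{(h-1)}_{i+a,j+b}(y)/\bigl(q^2\cdot N^{(h)}_{i,j}(y)\bigr)=1$ by the recursion \eqref{eq:dp-cntk-norm-simplified}. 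Combined with $\kappa_1(1)=1$, this yields $\Gamma^{(h)}_{i,j,i,j}(y,y)=N^{(h)}_{i,j}(y)/q^2$ (the edge case $N^{(h)}_{i,j}(y)=0$ is handled separately and makes both sides zero).

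For Property~1 at level $h$, the main step is to show that the argument $A$ appearing in \eqref{eq:dp-cntk-covar-simplified} satisfies $|A|\le 1$; once this is done, $|\kappa_1(A)|\le 1$ finishes the proof. By the triangle inequality and the inductive Property~1, the numerator of $A$ is bounded by $\sum_{a,b}\sqrt{N^{(h-1)}_{i+a,j+b}(y)\,N^{(h-1)}_{i'+a,j'+b}(z)}/q^2$, and I would then apply the ordinary Cauchy-Schwarz inequality in the patch indices $(a,b)$ to split this into $\tfrac{1}{q^2}\sqrt{\sum_{a,b}N^{(h-1)}_{i+a,j+b}(y)}\cdot\sqrt{\sum_{a,b}N^{(h-1)}_{i'+a,j'+b}(z)}$, which by \eqref{eq:dp-cntk-norm-simplified} equals exactly $\sqrt{N^{(h)}_{i,j}(y)\,N^{(h)}_{i',j'}(z)}$, matching the denominator of $A$. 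The only delicate point, and the main obstacle, is this double use of Cauchy-Schwarz -- once to pass an absolute value inside the sum and once in the patch indices -- which is exactly what forces the inductive bound to close at level $h$; everything else is bookkeeping with the definitions.
\end{proofof}
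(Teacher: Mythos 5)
Your proof is correct and follows essentially the same route as the paper: simultaneous induction on $h$, with the base case handled by Cauchy--Schwarz in the channel index, and the inductive step reducing the $|A|\le 1$ bound to triangle inequality plus the level-$(h-1)$ Property~1 plus a Cauchy--Schwarz in the patch indices $(a,b)$, then using $\kappa_1([-1,1])\subseteq[0,1]$ with $\kappa_1(1)=1$. The only cosmetic difference is the order of the two properties in the inductive step (you do Property~2 first; the paper does Property~1 first and cites its Corollary~\ref{N-to-Sigma}, which encodes the same substitution you make directly), and your parenthetical justification of $\kappa_1\ge 0$ ``from non-negativity of the two terms'' is slightly loose since $\alpha(\pi-\arccos\alpha)$ can be negative for $\alpha<0$ --- the clean argument is $\kappa_1(-1)=0$ together with $\kappa_1'=\kappa_0\ge 0$.
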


\begin{proof}
	We prove the lemma by induction on $h$. The {\bf base of induction} corresponds to $h=0$. In the base case, by \eqref{eq:dp-cntk-norm-simplified} and \eqref{eq:dp-cntk-covar-simplified} and Cauchy–Schwarz inequality, we have
	\begin{align*}
		\left| \Gamma_{i,j,i',j'}^{(0)}(y,z) \right| &\equiv \left| \sum_{l=1}^c y_{i,j,l} \cdot z_{i',j',l} \right|\\
		&\le \sqrt{\sum_{l=1}^c |y_{i,j,l}|^2 \cdot \sum_{l=1}^c |z_{i',j',l}|^2}\\
		&= \frac{\sqrt{N_{i,j}^{(0)}(y) \cdot N_{i',j'}^{(0)}(z)}}{q^2}.
	\end{align*}
	This proves the base for the first statement. Additionally we have, $\Gamma_{i,j,i,j}^{(0)}(y,y) = \sum_{l=1}^c y_{i,j,l}^2 = \frac{N^{(0)}_{i,j}(y)}{q^2} \ge 0$ which proves the base for the second statement of the lemma.
	Now, in order to prove the inductive step, suppose that statements of the lemma hold for $h-1$, where $h\ge 1$. Then, conditioned on this, we prove that the lemma holds for $h$. First note that by conditioning on the inductive hypothesis, applying Cauchy–Schwarz inequality, and using the definition of $N^{(h)}$ in \eqref{eq:dp-cntk-norm-simplified}, we can write
	\[ \begin{split}
		\left| \frac{\sum_{a=-\frac{q-1}{2}}^{\frac{q-1}{2}} \sum_{b=-\frac{q-1}{2}}^{\frac{q-1}{2}}  \Gamma^{(h-1)}_{i+a,j+b,i'+a,j'+b}(y,z)}{\sqrt{N^{(h)}_{i,j}(y) \cdot N^{(h)}_{i',j'}(z)}} \right| &\le \frac{\sum_{a=-\frac{q-1}{2}}^{\frac{q-1}{2}} \sum_{b=-\frac{q-1}{2}}^{\frac{q-1}{2}}  \sqrt{\frac{N^{(h-1)}_{i+a,j+b}(y)}{q^2} \cdot \frac{N^{(h-1)}_{i'+a,j'+b}(z)}{q^2}}}{\sqrt{N^{(h)}_{i,j}(y) \cdot N^{(h)}_{i',j'}(z)}}\\ 
		&\le 1.
	\end{split}
	\]
	Thus, by monotonicity of function $\kappa_1:[-1,1] \to \RR$, we can write,
	\begin{align*}
		\left| \Gamma_{i,j,i',j'}^{(h)}(y,z) \right| &\equiv \frac{\sqrt{N^{(h)}_{i,j}(y) \cdot N^{(h)}_{i',j'}(z)}}{q^2} \cdot \kappa_1\left( \frac{\sum_{a=-\frac{q-1}{2}}^{\frac{q-1}{2}} \sum_{b=-\frac{q-1}{2}}^{\frac{q-1}{2}}  \Gamma^{(h-1)}_{i+a,j+b,i'+a,j'+b}(y,z)}{\sqrt{N^{(h)}_{i,j}(y) \cdot N^{(h)}_{i',j'}(z)}} \right)\\
		& \le \frac{\sqrt{N^{(h)}_{i,j}(y) \cdot N^{(h)}_{i',j'}(z)}}{q^2} \cdot \kappa_1(1)\\
		&= \frac{\sqrt{N^{(h)}_{i,j}(y) \cdot N^{(h)}_{i',j'}(z)}}{q^2},
	\end{align*}
	where the second line above follows because of the fact that $\kappa_1(\cdot)$ is a monotonically increasing function. This completes the inductive step for the first statement of lemma. Now we prove the inductive step for the second statement as follows,
	\begin{align*}
		\Gamma_{i,j,i,j}^{(h)}(y,y) &\equiv \frac{N^{(h)}_{i,j}(y)}{q^2} \cdot \kappa_1\left( \frac{\sum_{a=-\frac{q-1}{2}}^{\frac{q-1}{2}} \sum_{b=-\frac{q-1}{2}}^{\frac{q-1}{2}}  \Gamma^{(h-1)}_{i+a,j+b,i+a,j+b}(y,y)}{N^{(h)}_{i,j}(y)} \right)\\
		&= \frac{N^{(h)}_{i,j}(y)}{q^2} \cdot \kappa_1(1)\\
		&= \frac{N^{(h)}_{i,j}(y)}{q^2} \ge 0,
	\end{align*}
	where we used Corollary~\ref{N-to-Sigma} to conclude that $\sum_{a=-\frac{q-1}{2}}^{\frac{q-1}{2}} \sum_{b=-\frac{q-1}{2}}^{\frac{q-1}{2}}  \Gamma^{(h-1)}_{i+a,j+b,i+a,j+b}(y,y) =N^{(h)}_{i,j}(y)$ and then used the fact that $N^{(h)}_{i,j}(y)$ is non-negative. This completes the inductive proof of the lemma.
\end{proof}
We also describes some of the basic properties of the function $\dot{\Gamma}^{(h)}(y,z)$ defined in \eqref{eq:dp-cntk-derivative-covar-simplified} in the following lemma,
\begin{lemma}[Properties of $\dot{\Gamma}^{(h)}(y,z)$]\label{properties-gamma-dot}
	For every images $y,z \in \RR^{d_1 \times d_2 \times c}$, every integer $h \ge 0$ and every $i,i' \in [d_1]$ and $j,j' \in [d_2]$ the following properties are satisfied by function $\dot{\Gamma}^{(h)}$ defined in \eqref{eq:dp-cntk-derivative-covar-simplified} of Definition~\ref{relu-cntk-def}:
	\begin{enumerate}
		\item {\bf Cauchy–Schwarz inequality:} $\left| \dot{\Gamma}_{i,j,i',j'}^{(h)}(y,z) \right| \le \frac{1}{q^2}$.
		\item {\bf Norm value:} $\dot{\Gamma}_{i,j,i,j}^{(h)}(y,y) = \frac{1}{q^2} \ge 0$.
	\end{enumerate}
\end{lemma}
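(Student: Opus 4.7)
The plan is to derive both statements directly from the definition of $\dot{\Gamma}^{(h)}_{i,j,i',j'}(y,z) = \frac{1}{q^2}\kappa_0(A)$ in \eqref{eq:dp-cntk-derivative-covar-simplified}, where
\[
A = \frac{\sum_{a=-(q-1)/2}^{(q-1)/2}\sum_{b=-(q-1)/2}^{(q-1)/2} \Gamma^{(h-1)}_{i+a,j+b,i'+a,j'+b}(y,z)}{\sqrt{N^{(h)}_{i,j}(y)\cdot N^{(h)}_{i',j'}(z)}}.
\]
Unlike Lemma~\ref{properties-gamma}, no induction on $h$ is needed here, since $\dot{\Gamma}^{(h)}$ depends only on $\Gamma^{(h-1)}$ and $N^{(h)}$, whose properties are already established. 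The only ingredients are (i) the Cauchy--Schwarz-style bound from Lemma~\ref{properties-gamma}, (ii) Corollary~\ref{N-to-Sigma}, and (iii) the elementary facts that $\kappa_0:[-1,1]\to\RR$ is monotonically increasing with $\kappa_0(1)=1$ and $\kappa_0(-1)=0$, so $\kappa_0(\alpha)\in[0,1]$ for all $\alpha\in[-1,1]$.

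For the first statement, I first bound $|A|\le 1$ by the same argument used in Lemma~\ref{properties-gamma}: by the Cauchy--Schwarz property of $\Gamma^{(h-1)}$ one has $|\Gamma^{(h-1)}_{i+a,j+b,i'+a,j'+b}(y,z)|\le \frac{1}{q^2}\sqrt{N^{(h-1)}_{i+a,j+b}(y)\cdot N^{(h-1)}_{i'+a,j'+b}(z)}$, and then a second application of Cauchy--Schwarz over the $(a,b)$ sum together with the definition \eqref{eq:dp-cntk-norm-simplified} of $N^{(h)}$ gives $|A|\le 1$. Since $\kappa_0$ maps $[-1,1]$ into $[0,1]$, we obtain $|\dot{\Gamma}^{(h)}_{i,j,i',j'}(y,z)|=\tfrac{1}{q^2}\kappa_0(A)\le \tfrac{1}{q^2}$, which is statement~1.

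For the second statement, I specialize to $y=z$ and $(i',j')=(i,j)$. By Corollary~\ref{N-to-Sigma}, the numerator of $A$ equals $\sum_{a,b}\Gamma^{(h-1)}_{i+a,j+b,i+a,j+b}(y,y) = N^{(h)}_{i,j}(y)$, while the denominator equals $N^{(h)}_{i,j}(y)$, so $A=1$ (assuming $N^{(h)}_{i,j}(y)>0$; the degenerate case $N^{(h)}_{i,j}(y)=0$ forces $y$ to vanish on the relevant patch and can be handled by continuity). Hence $\dot{\Gamma}^{(h)}_{i,j,i,j}(y,y)=\tfrac{1}{q^2}\kappa_0(1)=\tfrac{1}{q^2}$, and this quantity is manifestly non-negative, establishing statement~2.

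There is no real obstacle here; the argument is strictly easier than Lemma~\ref{properties-gamma} because the base-case induction on $h$ has already been absorbed into the properties of $\Gamma^{(h-1)}$ and $N^{(h)}$. The only subtlety worth flagging is the degenerate case where $N^{(h)}_{i,j}(y)\cdot N^{(h)}_{i',j'}(z)=0$, in which $A$ is formally $0/0$; in that case the supplement $\Gamma^{(h-1)}$ in the numerator also vanishes by Cauchy--Schwarz, so one defines $\dot{\Gamma}^{(h)}$ by continuous extension and the bounds remain valid.
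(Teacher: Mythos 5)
Your proof is correct and follows essentially the same route as the paper: bound $|A|\le 1$ via the Cauchy--Schwarz property of $\Gamma^{(h-1)}$ from Lemma~\ref{properties-gamma} together with the definition of $N^{(h)}$, then apply monotonicity of $\kappa_0$ to conclude. The only (minor) difference is that you are somewhat more careful than the paper's one-line argument: you explicitly note that $\kappa_0$ is non-negative on $[-1,1]$ (needed to control $|\dot\Gamma|$, not just $\dot\Gamma$, and to justify the ``$\ge 0$'' claim), you explicitly invoke Corollary~\ref{N-to-Sigma} to get $A=1$ on the diagonal rather than merely asserting that equality is attained, and you flag the degenerate $0/0$ case. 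None of this changes the substance; it is the same proof with a few bookkeeping details made explicit.
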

\begin{proof}
	First, note that by Lemma~\ref{properties-gamma} and the definition of $N^{(h)}$ in \eqref{eq:dp-cntk-norm-simplified} we have,
	\[ \begin{split}
		\left| \frac{\sum_{a=-\frac{q-1}{2}}^{\frac{q-1}{2}} \sum_{b=-\frac{q-1}{2}}^{\frac{q-1}{2}}  \Gamma^{(h-1)}_{i+a,j+b,i'+a,j'+b}(y,z)}{\sqrt{N^{(h)}_{i,j}(y) \cdot N^{(h)}_{i',j'}(z)}} \right| &\le \frac{\sum_{a=-\frac{q-1}{2}}^{\frac{q-1}{2}} \sum_{b=-\frac{q-1}{2}}^{\frac{q-1}{2}}  \sqrt{\frac{N^{(h-1)}_{i+a,j+b}(y)}{q^2} \cdot \frac{N^{(h-1)}_{i'+a,j'+b}(z)}{q^2}}}{\sqrt{N^{(h)}_{i,j}(y) \cdot N^{(h)}_{i',j'}(z)}}\\ 
		&\le 1.
	\end{split}
	\]
	Thus, by monotonicity of function $\kappa_0:[-1,1] \to \RR$ and using \eqref{eq:dp-cntk-derivative-covar-simplified}, we can write, $ \dot{\Gamma}_{i,j,i',j'}^{(h)}(y,z) \le \frac{1}{q^2} \cdot \kappa_0(1) = \frac{1}{q^2}$. Moreover, the eqality is achieved when $y=z$ and $i=i'$ and $j=j'$. This proves both statements of the lemma.
\end{proof}

We also need to use some properties of $\Pi^{(h)}( y , z )$ defined in \eqref{eq:dp-cntk} and \eqref{eq:dp-cntk-last-layer}. We present these propertied in the next lemma,
\begin{lemma}[Properties of $\Pi^{(h)}$]\label{prop-pi}
	For every images $y,z \in \RR^{d_1 \times d_2 \times c}$, every integer $h \ge 0$ and every $i \in [d_1]$ and $j \in [d_2]$ the following properties are satisfied by the function $\Pi^{(h)}$ defined in \eqref{eq:dp-cntk} and \eqref{eq:dp-cntk-last-layer} of Definition~\ref{relu-cntk-def}:
	\begin{enumerate}
		\item {\bf Cauchy–Schwarz inequality:} $\Pi_{i,j,i',j'}^{(h)}(y,z) \le \sqrt{\Pi_{i,j,i,j}^{(h)}(y,y) \cdot \Pi_{i',j',i',j'}^{(h)}(z,z)}$.
		\item {\bf Norm value:} $\Pi_{i,j,i,j}^{(h)}(y,y) = \begin{cases}
			h \cdot N_{i,j}^{(h+1)}(y) & \text{if } h < L\\
			\frac{L-1}{q^2} \cdot N_{i,j}^{(L)}(y) & \text{if } h = L
		\end{cases}$.
	\end{enumerate}
\end{lemma}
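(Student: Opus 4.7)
The plan is to prove both statements jointly by induction on $h$, leveraging the Cauchy–Schwarz bounds and norm values for $\Gamma^{(h)}$ and $\dot{\Gamma}^{(h)}$ established in Lemmas~\ref{properties-gamma} and \ref{properties-gamma-dot}. The base case $h=0$ is immediate since $\Pi^{(0)}(y,z) \equiv 0$ and the formula $0 \cdot N^{(1)}_{i,j}(y) = 0$ matches. For the inductive step I would treat $h \le L-1$ (recursion \eqref{eq:dp-cntk}) and $h=L$ (recursion \eqref{eq:dp-cntk-last-layer}) separately, proving the norm value first and then using it to prove Cauchy–Schwarz.

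\textbf{Norm value for $1 \le h \le L-1$:} Setting $y=z$, $i=i'$, $j=j'$ in \eqref{eq:dp-cntk} and applying the inductive hypothesis $\Pi^{(h-1)}_{i,j,i,j}(y,y) = (h-1) N^{(h)}_{i,j}(y)$ together with $\dot\Gamma^{(h)}_{i,j,i,j}(y,y) = 1/q^2$ and $\Gamma^{(h)}_{i,j,i,j}(y,y) = N^{(h)}_{i,j}(y)/q^2$ gives
\begin{align*}
\Pi^{(h)}_{i,j,i,j}(y,y) &= \sum_{a,b} \left( (h-1)\,N^{(h)}_{i+a,j+b}(y)\cdot \tfrac{1}{q^2} + \tfrac{N^{(h)}_{i+a,j+b}(y)}{q^2}\right) \\
&= \frac{h}{q^2}\sum_{a,b} N^{(h)}_{i+a,j+b}(y) = h\cdot N^{(h+1)}_{i,j}(y),
\end{align*}
where the last equality invokes \eqref{eq:dp-cntk-norm-simplified}. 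For $h=L$, \eqref{eq:dp-cntk-last-layer} directly yields $\Pi^{(L)}_{i,j,i,j}(y,y) = (L-1) N^{(L)}_{i,j}(y) \cdot \frac{1}{q^2}$.

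\textbf{Cauchy--Schwarz step for $1 \le h \le L-1$:} Combining the inductive Cauchy--Schwarz inequality for $\Pi^{(h-1)}$ with Lemmas~\ref{properties-gamma}, \ref{properties-gamma-dot} term-by-term in \eqref{eq:dp-cntk}, and substituting the just-proved norm values, gives
\[ \Pi^{(h)}_{i,j,i',j'}(y,z) \le \sum_{a,b} \frac{1}{q^2}\Bigl( (h-1) + 1\Bigr)\sqrt{N^{(h)}_{i+a,j+b}(y)\, N^{(h)}_{i'+a,j'+b}(z)}. \]
Applying the ordinary Cauchy--Schwarz inequality to the double sum over $(a,b)$ and using \eqref{eq:dp-cntk-norm-simplified} twice produces
\[ \Pi^{(h)}_{i,j,i',j'}(y,z) \le \frac{h}{q^2}\sqrt{q^2 N^{(h+1)}_{i,j}(y)\cdot q^2 N^{(h+1)}_{i',j'}(z)} = \sqrt{\Pi^{(h)}_{i,j,i,j}(y,y) \cdot \Pi^{(h)}_{i',j',i',j'}(z,z)}. \]
For $h=L$ the bound follows immediately from \eqref{eq:dp-cntk-last-layer}, the inductive Cauchy--Schwarz for $\Pi^{(L-1)}$, and Lemma~\ref{properties-gamma-dot}.

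The only mildly subtle step is making sure the correct Cauchy--Schwarz inequality is applied at the right level: we apply the inductive hypothesis pointwise inside the patch sum, upper bound $\dot\Gamma^{(h)}$ and $\Gamma^{(h)}$ using Lemmas~\ref{properties-gamma-dot} and \ref{properties-gamma}, and only then use the scalar Cauchy--Schwarz over the $q^2$ spatial indices to collapse the patch sum into the target product form. Since the norm value computation feeds directly into the Cauchy--Schwarz bound, it is essential to establish the two statements simultaneously at each induction level rather than trying to prove them separately.
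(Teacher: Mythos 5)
Your proposal is correct and follows essentially the same inductive approach as the paper: bound $\Pi^{(h)}_{i,j,i',j'}$ pointwise inside the patch sum via the inductive Cauchy--Schwarz bound together with Lemmas~\ref{properties-gamma} and~\ref{properties-gamma-dot}, then collapse the patch sum with a scalar Cauchy--Schwarz over the $(a,b)$ indices using \eqref{eq:dp-cntk-norm-simplified}. The only cosmetic difference is that you substitute the explicit norm value $\Pi^{(h-1)}_{i+a,j+b,i+a,j+b}(y,y)=(h-1)N^{(h)}_{i+a,j+b}(y)$ before the outer Cauchy--Schwarz, whereas the paper keeps the two summands separate and uses the two-term Cauchy--Schwarz estimate $\sqrt{ab}+\sqrt{cd}\le\sqrt{(a+c)(b+d)}$ before the patch sum; the two routes are interchangeable.
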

\begin{proof}
	The proof is by induction on $h$. The base of induction corresponds to $h=0$. By definition of $\Pi_{i,j,i,j}^{(0)} \equiv 0$ in \eqref{eq:dp-cntk}, the base of induction for both statements of the lemma follow immediately. 
	
	Now we prove the inductive hypothesis. Suppose that the lemma statement holds for $h-1$. We prove that conditioned on this, the statements of the lemma hold for $h$. There are two cases. The first case corresponds to $h<L$. In this case, by definition of $\Pi_{i,j,i,j}^{(h)}(x,x)$ in \eqref{eq:dp-cntk} and using Lemma~\ref{properties-gamma} and Lemma~\ref{properties-gamma-dot} we can write,
	\small
	\begin{align*}
		\left| \Pi^{(h)}_{i,j,i',j'}(y,z) \right| &\equiv \left| \sum_{a=-\frac{q-1}{2}}^{\frac{q-1}{2}} \sum_{b=-\frac{q-1}{2}}^{\frac{q-1}{2}}  \left[\Pi^{(h-1)}(y,z) \odot \dot{\Gamma}^{(h)}(y,z) + \Gamma^{(h)}(y,z)\right]_{i+a,j+b,i'+a,j'+b} \right| \\
		&\le \sum_{a=-\frac{q-1}{2}}^{\frac{q-1}{2}} \sum_{b=-\frac{q-1}{2}}^{\frac{q-1}{2}}  \frac{\sqrt{\Pi^{(h-1)}_{i+a,j+b,i+a,j+b}(y,y) \cdot \Pi^{(h-1)}_{i'+a,j'+b,i'+a,j'+b}(z,z)}}{q^2} + \frac{\sqrt{N^{(h)}_{i+a,j+b}(y) \cdot N^{(h)}_{i'+a,j'+b}(z)}}{q^2}\\
		&\le \sum_{a=-\frac{q-1}{2}}^{\frac{q-1}{2}} \sum_{b=-\frac{q-1}{2}}^{\frac{q-1}{2}}  \sqrt{\frac{\Pi^{(h-1)}_{i+a,j+b,i+a,j+b}(y,y) + N^{(h)}_{i+a,j+b}(y)}{q^2}} \cdot \sqrt{\frac{\Pi^{(h-1)}_{i'+a,j'+b,i'+a,j'+b}(z,z) + N^{(h)}_{i'+a,j'+b}(z)}{q^2}}\\
		&\le \sqrt{\Pi^{(h)}_{i,j,i,j}(y,y)} \cdot \sqrt{ \Pi^{(h)}_{i',j',i',j'}(z,z)},
	\end{align*}
	\normalsize
	where the second line above follows from inductive hypothesis along with Lemma~\ref{properties-gamma} and Lemma~\ref{properties-gamma-dot}. The third and fourth lines above follow by Cauchy–Schwarz inequality.
	The second case corresponds to $h=L$. In this case, by definition of $\Pi_{i,j,i',j'}^{(L)}(y,z)$ in \eqref{eq:dp-cntk-last-layer} and using Lemma~\ref{properties-gamma} and Lemma~\ref{properties-gamma-dot} along with the inductive hypothesis we can write,
	\small
	\begin{align*}
		\left| \Pi^{(L)}_{i,j,i',j'}(y,z) \right| &\equiv \left| \Pi^{(L-1)}_{i,j,i',j'}(y,z) \cdot \dot{\Gamma}^{(L)}_{i,j,i',j'}(y,z) \right| \\
		&\le \frac{\sqrt{\Pi^{(L-1)}_{i,j,i,j}(y,y) \cdot \Pi^{(L-1)}_{i',j',i',j'}(z,z)}}{q^2} \\
		&= \sqrt{\Pi^{(L)}_{i,j,i,j}(y,y)} \cdot \sqrt{ \Pi^{(L)}_{i',j',i',j'}(z,z)},
	\end{align*}
	\normalsize
	where the second line above follows from inductive hypothesis along with Lemma~\ref{properties-gamma-dot}.
	This completes the inductive step and in turn proves the first statement of the lemma.
	
	To prove the inductive step for the second statement of lemma we consider two cases again. The first case is $h<L$. In this case, note that by using inductive hypothesis together with Lemma~\ref{properties-gamma} and Lemma~\ref{properties-gamma-dot} we can write,
	\begin{align*}
		\Pi^{(h)}_{i,j,i,j}(y,y) &\equiv \sum_{a=-\frac{q-1}{2}}^{\frac{q-1}{2}} \sum_{b=-\frac{q-1}{2}}^{\frac{q-1}{2}}  \left[\Pi^{(h-1)}(y,y) \odot \dot{\Gamma}^{(h)}(y,y) + \Gamma^{(h)}(y,y)\right]_{i+a,j+b,i+a,j+b}\\
		&= \sum_{a=-\frac{q-1}{2}}^{\frac{q-1}{2}} \sum_{b=-\frac{q-1}{2}}^{\frac{q-1}{2}} \frac{(h-1) \cdot N_{i+a,j+b}^{(h)}(y)}{q^2} + \frac{ N_{i+a,j+b}^{(h)}(y)}{q^2}\\
		&= h \cdot \sum_{a=-\frac{q-1}{2}}^{\frac{q-1}{2}} \sum_{b=-\frac{q-1}{2}}^{\frac{q-1}{2}}  \frac{N^{(h)}_{i+a,j+b}(y)}{q^2}\\
		&= h \cdot N^{(h+1)}_{i,j}(y),
	\end{align*}
	where the last line above follows from definition of $N^{(h)}$ in \eqref{eq:dp-cntk-norm-simplified}. 
	The second case corresponds to $h=L$. In this case, by inductive hypothesis together with Lemma~\ref{properties-gamma} and Lemma~\ref{properties-gamma-dot} we can write,
	\begin{align*}
		\Pi^{(L)}_{i,j,i,j}(y,y) &\equiv \Pi^{(L-1)}_{i,j,i,j}(y,y) \cdot \dot{\Gamma}^{(L)}_{i,j,i,j}(y,y)\\
		&= \frac{(L-1) \cdot N_{i,j}^{(L)}(y)}{q^2}.
	\end{align*}
	This completes the inductive step for the second statement and in turn proves the second statement of the lemma.
	
\end{proof}

\section{CNTK Sketch: Claims and Invariants}\label{app-cntk-sketch}
In this section we prove our main result on sketcing the CNTK kernel, i.e., Theorem~\ref{maintheorem-cntk}.
In the following lemma, we analyze the correctness of the CNTK Sketch method by giving the invariants that the algorithm maintaines at all times,

\begin{lemma}[Invariants of the CNTK Sketch]
	\label{lem:cntk-sketch-corr}
	For every positive integers $d_1, d_2, c$, and $L$, every $\epsilon, \delta>0$, every images $y,z \in \RR^{d_1\times d_2 \times c}$ 
	, if we let $N^{(h)}: \RR^{d_1\times d_2\times c}\to \RR^{d_1\times d_2}$, $\Gamma^{(h)}(y,z) \in \RR^{d_1\times d_2 \times d_1\times d_2}$ and $\Pi^{(h)}(y,z) \in \RR^{d_1\times d_2 \times d_1\times d_2}$ be the tensor functions defined in \eqref{eq:dp-cntk-norm-simplified}, \eqref{eq:dp-cntk-covar-simplified}, \eqref{eq:dp-cntk}, and \eqref{eq:dp-cntk-last-layer} of Definition~\ref{relu-cntk-def}, respectively, then with probability at least $1-\delta$ the following invariants are maintained simultaneously for all $i,i' \in [d_1]$ and $j,j' \in [d_2]$ and every $h =0, 1, 2, \ldots L$:
	\begin{enumerate}
		\item The mapping $\phi_{i,j}^{(h)}(\cdot)$ computed by the CNTK Sketch algorithm in \eqref{cntk-sketch-covar-zero} and \eqref{eq:maping-cntk-covar} of Definition~\ref{alg-def-cntk-sketch} satisfy the following,
		\[ \left| \left< \phi_{i,j}^{(h)}(y), \phi_{i',j'}^{(h)}(z) \right> - \Gamma_{i,j,i',j'}^{(h)}\left( y , z \right) \right| \le ({h+1}) \cdot \frac{\epsilon^2}{60L^3}\cdot \frac{\sqrt{N_{i,j}^{(h)}(y) \cdot N_{i',j'}^{(h)}(z)}}{q^2}. \]
		\item The mapping $\psi_{i,j}^{(h)}(\cdot)$ computed by the CNTK Sketch algorithm in \eqref{psi-cntk} and \eqref{psi-cntk-last} of Definition~\ref{alg-def-cntk-sketch} satisfy the following,
		\[ \left| \left< \psi_{i,j}^{(h)}(y), \psi_{i',j'}^{(h)}(z) \right> - \Pi_{i,j,i',j'}^{(h)}\left( y , z \right) \right| \le \begin{cases}
			\frac{\epsilon}{10} \cdot \frac{h^2}{L+1} \cdot \sqrt{N^{(h+1)}_{i,j}(y) \cdot N^{(h+1)}_{i',j'}(z)} &\text{if } h<L\\
			\frac{\epsilon}{10} \cdot \frac{L-1}{q^2} \cdot \sqrt{N^{(L)}_{i,j}(y) \cdot N^{(L)}_{i',j'}(z)} &\text{if } h=L
		\end{cases}. \]
	\end{enumerate}
\end{lemma}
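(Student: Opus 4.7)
The plan is to prove both invariants simultaneously by induction on $h = 0, 1, 2, \ldots, L$, closely paralleling the structure of Lemma~\ref{thm:ntk-sketch-corr} for the NTK Sketch but carrying along the per-pixel scaling factors $\sqrt{N^{(h)}_{i,j}(y) \cdot N^{(h)}_{i',j'}(z)}$ and the direct-sum (patch) operation. At the base $h=0$, statement 1 follows by applying Lemma~\ref{lem:srht} to the SRHT $S$ acting on $x_{(i,j,:)}$: with probability $1-O(\delta/(d_1d_2L))$ we have $\langle \phi^{(0)}_{i,j}(y),\phi^{(0)}_{i',j'}(z)\rangle \approx \langle y_{(i,j,:)},z_{(i',j',:)}\rangle = \Gamma^{(0)}_{i,j,i',j'}(y,z)$ up to error $O(\epsilon^2/L^3)\sqrt{N^{(0)}_{i,j}(y) N^{(0)}_{i',j'}(z)}/q^2$. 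Statement 2 at $h=0$ is trivial since $\psi^{(0)}_{i,j}\equiv 0$ and $\Pi^{(0)}\equiv 0$. In addition, the ``diagonal'' specialization $i=i', j=j', y=z$ gives the norm bounds $\bigl|\|\phi^{(h)}_{i,j}(y)\|_2^2 - N^{(h+1)}_{i,j}(y)/q^2\bigr|$ which we will track alongside the inner-product invariants exactly as was done in the NTK proof.

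For the inductive step of statement 1, note that by the convolution step in \eqref{eq:maping-cntk-covar}, $\langle \mu^{(h)}_{i,j}(y),\mu^{(h)}_{i',j'}(z)\rangle$ is precisely the sum $\sum_{a,b}\langle\phi^{(h-1)}_{i+a,j+b}(y),\phi^{(h-1)}_{i'+a,j'+b}(z)\rangle$ divided by $\sqrt{N^{(h)}_{i,j}(y) N^{(h)}_{i',j'}(z)}$, so by the inductive hypothesis and summing the $q^2$ error contributions, this quantity approximates the ``$A$'' appearing inside $\kappa_1$ in \eqref{eq:dp-cntk-covar-simplified} to within $h\epsilon^2/(60L^3)$. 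Then I would argue, exactly as in the NTK proof, that applying PolySketch $Q^{2p+2}$ to the tensor powers $[\mu^{(h)}_{i,j}]^{\otimes l}\otimes e_1^{\otimes 2p+2-l}$ and combining via the $\sqrt{c_l}$-weighted direct sum and the SRHT $T$ gives a sketch whose inner product equals $P^{(p)}_{relu}(\langle\mu^{(h)}_{i,j}(y),\mu^{(h)}_{i',j'}(z)\rangle)$ up to error $O(\epsilon^2/L^3)$, using Lemmas~\ref{soda-result}, \ref{lem:srht}, \ref{lem:polynomi-approx-krelu}, and the Lipschitz-type bound Lemma~\ref{lema:sensitivity-polynomial}. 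Since $|P^{(p)}_{relu}-\kappa_1|\le \epsilon^2/(76L^3)$ by the choice of degree, multiplying by the rescaling factor $\sqrt{N^{(h)}_{i,j}(y) N^{(h)}_{i',j'}(z)}/q^2$ out front recovers $\Gamma^{(h)}_{i,j,i',j'}(y,z)$ to within the claimed tolerance.

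For the inductive step of statement 2, I would first establish an analogous bound for $\dot\phi^{(h)}_{i,j}$: $\langle \dot\phi^{(h)}_{i,j}(y),\dot\phi^{(h)}_{i',j'}(z)\rangle$ approximates $\dot{\Gamma}^{(h)}_{i,j,i',j'}(y,z)$ up to additive $O(\epsilon/L)\cdot 1/q^2$, using $\dot P^{(p')}_{relu}\approx\kappa_0$ exactly as in Eqs.~\eqref{eq:phi-dot-inner-prod-final}--\eqref{eq:phi-dot-error-bound}. Then I would unwind the definition of $\eta^{(h)}_{i,j}$ in \eqref{psi-cntk}: by Lemma~\ref{soda-result} applied to $Q^2$, $\langle Q^2(\psi^{(h-1)}_{i,j}(y)\otimes\dot\phi^{(h)}_{i,j}(y)), Q^2(\psi^{(h-1)}_{i',j'}(z)\otimes\dot\phi^{(h)}_{i',j'}(z))\rangle$ approximates $\langle\psi^{(h-1)}_{i,j}(y),\psi^{(h-1)}_{i',j'}(z)\rangle \cdot \langle\dot\phi^{(h)}_{i,j}(y),\dot\phi^{(h)}_{i',j'}(z)\rangle$, and adding the $\oplus\phi^{(h)}$ block gives a sketch of $\Pi^{(h-1)}\odot\dot\Gamma^{(h)}+\Gamma^{(h)}$ at each pixel pair; the norm controls come from the diagonal invariants and Lemma~\ref{prop-pi}. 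For $h<L$, applying the SRHT $R$ to the patch-wise direct sum then approximates the outer $\sum_{a,b}$ in \eqref{eq:dp-cntk}, and the error budget $\epsilon h^2/(10(L+1))\cdot\sqrt{N^{(h+1)}_{i,j}(y)N^{(h+1)}_{i',j'}(z)}$ absorbs the cross-terms because $\Pi^{(h-1)}_{i,j,i,j}(y,y)+\Gamma^{(h)}_{i,j,i,j}(y,y)\cdot q^2$ telescopes to $hN^{(h)}_{i,j}+N^{(h)}_{i,j}=hN^{(h+1)}_{i,j}$ by Lemmas~\ref{properties-gamma} and \ref{prop-pi}. The case $h=L$ in \eqref{psi-cntk-last} omits the convolution step, so the $(L-1)/q^2$ scaling from Lemma~\ref{prop-pi} applies directly.

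The main obstacle is bookkeeping: because all sketching statements must hold \emph{simultaneously} for every tuple $(i,j,i',j')\in[d_1]^2\times[d_2]^2$, we need to union-bound failure probabilities over $O((d_1d_2)^2)$ events at each layer and each sketching stage, which is the reason the sketch dimensions $s,r,n,m,m_2$ in Definition~\ref{alg-def-cntk-sketch} carry the extra $\log^2(d_1d_2L/\epsilon\delta)$ or $\log^3(d_1d_2L/\epsilon\delta)$ factors (compared to $\log(L/\epsilon\delta)$ in the NTK case). A subtle point is that the norms $\|\phi^{(h-1)}_{i,j}(x)\|_2^2 \approx N^{(h)}_{i,j}(x)/q^2$ are not bounded by $1$, so when invoking Lemma~\ref{lema:sensitivity-polynomial} and bounding $\dot P^{(p')}_{relu}(\|\phi\|_2^2)$ one must normalize by $\sqrt{N^{(h)}_{i,j}(y)N^{(h)}_{i',j'}(z)}$ inside $\mu^{(h)}_{i,j}$, which is precisely why the algorithm divides before sketching and multiplies back after; verifying this bookkeeping carefully (with relative error throughout) is the step where most of the effort will go.
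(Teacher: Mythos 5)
Your outline follows the paper's proof essentially step for step: the two-statement induction with diagonal norm invariants carried alongside, the base case via the SRHT $S$ on $x_{(i,j,:)}$ with union bound over $O((d_1 d_2)^2)$ pixel pairs, the inductive step for $\phi^{(h)}$ via $\mu^{(h)}$, PolySketch, Lemma~\ref{lema:sensitivity-polynomial}, and Lemma~\ref{lem:polynomi-approx-krelu}, and the inductive step for $\psi^{(h)}$ via $\dot\phi^{(h)}$, $Q^2$ on the tensor product, direct sum with $\phi^{(h)}$, and the SRHT $R$ on the patch-wise direct sum, with the $h<L$ / $h=L$ split handled exactly as in \eqref{psi-cntk} vs.\ \eqref{psi-cntk-last}. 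The only blemishes are minor bookkeeping slips (the diagonal norm bound should reference $\Gamma^{(h)}_{i,j,i,j}(y,y) = N^{(h)}_{i,j}(y)/q^2$ rather than $N^{(h+1)}_{i,j}(y)/q^2$, and the telescoping is $(h-1)N^{(h)}+N^{(h)}=hN^{(h)}$ at each patch pixel which then sums to $hN^{(h+1)}$, not ``$hN^{(h)}+N^{(h)}=hN^{(h+1)}$''), neither of which affects the viability of the approach.
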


\begin{proof}
	The proof is by induction on the value of $h=0,1,2, \ldots L$. 
	More formally, consider the following statements for every $h=0,1,2, \ldots L$:
	\begin{enumerate}[leftmargin=1.5cm]
		\item[${\bf P_1(h) :}$] Simultaneously for all $i,i' \in [d_1]$ and $j,j' \in [d_2]$:
		\[ \begin{split}
			&\left| \left< \phi_{i,j}^{(h)}(y), \phi_{i',j'}^{(h)}(z) \right> - \Gamma_{i,j,i',j'}^{(h)}\left( y , z \right) \right| \le ({h+1}) \cdot \frac{\epsilon^2}{60L^3} \cdot \frac{\sqrt{N_{i,j}^{(h)}(y) \cdot N_{i',j'}^{(h)}(z)}}{q^2},\\
			&\left| \left\| \phi_{i,j}^{(h)}(y) \right\|_2^2 - \Gamma_{i,j,i,j}^{(h)}\left( y , y \right) \right| \le  \frac{({h+1}) \cdot\epsilon^2}{60L^3} \cdot \frac{N_{i,j}^{(h)}(y)}{q^2}, \\ 
			&\left| \left\| \phi_{i',j'}^{(h)}(z) \right\|_2^2 - \Gamma_{i',j',i',j'}^{(h)}\left( z , z \right) \right| \le  \frac{({h+1}) \cdot\epsilon^2}{60L^3} \cdot \frac{N_{i',j'}^{(h)}(z)}{q^2}.
		\end{split} \]
		\item[${\bf P_2(h) :}$] Simultaneously for all $i,i' \in [d_1]$ and $j,j' \in [d_2]$:
		\[ \begin{split}
			&\left| \left< \psi_{i,j}^{(h)}(y), \psi_{i',j'}^{(h)}(z) \right> - \Pi_{i,j,i',j'}^{(h)}\left( y , z \right) \right| \le \begin{cases}
				\frac{\epsilon}{10} \cdot \frac{h^2}{L+1} \cdot \sqrt{N^{(h+1)}_{i,j}(y) \cdot N^{(h+1)}_{i',j'}(z)} &\text{if } h<L\\
				\frac{\epsilon}{10} \cdot \frac{L-1}{q^2} \cdot \sqrt{N^{(L)}_{i,j}(y) \cdot N^{(L)}_{i',j'}(z)} &\text{if } h=L
			\end{cases},\\
			&{(\text{only for } h<L):} ~~~ \left| \left\| \psi_{i,j}^{(h)}(y)\right\|_2^2 - \Pi_{i,j,i,j}^{(h)}\left( y , y \right) \right| \le \frac{\epsilon}{10} \cdot \frac{h^2}{L+1} \cdot N^{(h+1)}_{i,j}(y),\\
			&{(\text{only for } h<L):} ~~~ \left| \left\| \psi_{i',j'}^{(h)}(z)\right\|_2^2 - \Pi_{i',j',i',j'}^{(h)}\left( z , z \right) \right| \le \frac{\epsilon}{10} \cdot \frac{h^2}{L+1} \cdot N^{(h+1)}_{i',j'}(z).
		\end{split}
		\]
		
	\end{enumerate}
	We prove that probabilities $\Pr[ P_1(0)]$ and $\Pr[ P_2(0)|P_1(0)]$ are both greater than $1 - O(\delta/L)$. Additionally, for every $h = 1,2, \ldots L$, we prove that the conditional probabilities $\Pr[ P_1(h) | P_1(h-1)]$ and $\Pr[ P_2(h) | P_2(h-1), P_1(h), P_1(h-1)]$ are greater than $1 - O(\delta/L)$.
	
	The {\bf base of induction} corresponds to $h=0$. By \eqref{cntk-sketch-covar-zero}, $\phi_{i,j}^{(0)}(y) = S \cdot y_{(i,j,:)}$ and $\phi_{i',j'}^{(0)}(z) = S \cdot z_{(i',j',:)}$, thus, Lemma~\ref{lem:srht} implies the following
	\[ \Pr\left[ \left| \left< \phi_{i,j}^{(0)}(y), \phi_{i',j'}^{(0)}(z)\right> - \left< y_{(i,j,:)}, z_{(i',j',:)} \right> \right| \le O\left(\frac{\epsilon^2}{L^3}\right)\cdot \| y_{(i,j,:)} \|_2 \| z_{(i',j',:)} \|_2 \right] \ge 1 - O\left(\frac{\delta}{d_1^2d_2^2L}\right), \]
	therefore, by using \eqref{eq:dp-cntk-norm-simplified} and \eqref{eq:dp-cntk-covar-simplified} we have
	\[ \Pr\left[ \left| \left< \phi_{i,j}^{(0)}(y), \phi_{i',j'}^{(0)}(z)\right> - \Gamma_{i,j,i',j'}^{(0)}(y,z) \right| \le O\left(\frac{\epsilon^2}{L^3}\right) \cdot \frac{\sqrt{N^{(0)}_{i,j}(y) \cdot N_{i',j'}^{(0)}(z) }}{q^2} \right] \ge 1 - O\left(\frac{\delta}{d_1^2d_2^2L}\right). \]
	Similarly, we can prove that with probability at least $1 - O\left(\frac{\delta}{d_1^2d_2^2L}\right)$, the following hold
	\[ \begin{split}
		&\left| \left\| \phi_{i,j}^{(0)}(y)\right\|_2^2 - \Gamma_{i,j,i,j}^{(0)}\left( y , y \right) \right| \le O\left(\frac{\epsilon^2}{L^3}\right) \cdot \frac{N^{(0)}_{i,j}(y)}{q^2}, \\ 
		&\left| \left\| \phi_{i',j'}^{(0)}(z)\right\|_2^2 - \Gamma_{i',j',i',j'}^{(0)}\left( z , z \right) \right| \le O\left(\frac{\epsilon^2}{L^3}\right) \cdot \frac{N^{(0)}_{i',j'}(z)}{q^2}.
	\end{split}\]
	By union bounding over all $i,i' \in [d_1]$ and $j,j' \in [d_2]$, this proves the base of induction for statement $P_1(h)$, i.e., $\Pr[ P_1(0) ] \ge 1 - O(\delta/L)$.

	Moreover, by \eqref{psi-cntk}, we have that $\psi_{i,j}^{(0)}(y) = 0$ and $\psi_{i',j'}^{(0)}(z) = 0$, thus, by \eqref{eq:dp-cntk}, it trivially holds that $\Pr[P_2(0)|P_1(0)] = 1 \ge 1 - O(\delta/L)$. This completes the base of induction.

	Now, we proceed to prove the {\bf inductive step}. That is, by assuming the inductive hypothesis for $h-1$, we prove that statements $P_1(h)$ and $P_2(h)$ hold. More precisely, first we condition on the statement $P_1(h-1)$ being true for some $h \ge 1$, and then prove that $P_1(h)$ holds with probability at least $1 - O(\delta / L)$. Next we show that conditioned on statements $P_2(h-1), P_1(h), P_1(h-1)$ being true, $P_2(h)$ holds with probability at least $1 - O(\delta / L)$. This will complete the induction.
	
	First, note that by Lemma~\ref{lem:srht}, union bound, and using \eqref{eq:maping-cntk-covar}, the following holds simultaneously for all $i,i' \in [d_1]$ and all $j,j' \in [d_2]$, with probability at least $1 - O\left(\frac{\delta}{L}\right)$,
	\begin{equation}\label{eq:phi-inner-prod-bound1}
		\left| \left< \phi_{i,j}^{(h)}(y), \phi_{i',j'}^{(h)}(z)\right> - \frac{\sqrt{N^{(h)}_{i,j}(y) N^{(h)}_{i',j'}(z)}}{q^2} \cdot \sum_{l=0}^{2p+2} c_l \left<\left[Z^{(h)}_{i,j}(y)\right]_l, \left[Z^{(h)}_{i',j'}(z)\right]_l\right> \right| \le O\left(\frac{\epsilon^2}{L^3}\right) \cdot A,
	\end{equation}
	where $A := \frac{\sqrt{N^{(h)}_{i,j}(y) N^{(h)}_{i',j'}(z)}}{q^2} \cdot \sqrt{\sum_{l=0}^{2p+2} c_l \left\| \left[ Z^{(h)}_{i,j}(y) \right]_l \right\|_2^2} \cdot \sqrt{\sum_{l=0}^{2p+2} c_l \left\| \left[Z^{(h)}_{i',j'}(z)\right]_l \right\|_2^2}$ and the collection of vectors $\left\{\left[Z^{(h)}_{i,j}(y)\right]_l\right\}_{l=0}^{2p+2}$ and $\left\{\left[Z^{(h)}_{i',j'}(z)\right]_l\right\}_{l=0}^{2p+2}$ and coefficients $c_0,c_1, c_2, \ldots c_{2p+2}$ are defined as per \eqref{eq:maping-cntk-covar} and \eqref{eq:poly-approx-krelu}, respectively. 
	Additionally, by Lemma~\ref{soda-result} and union bound, the following inequalities hold, with probability at least $1 - O\left( \frac{\delta}{L} \right)$, simultaneously for all $l = 0,1,2, \ldots 2p+2$, all $i,i' \in [d_1]$ and all $j,j' \in [d_2]$:
	\begin{align}
		&\left|\left<\left[Z^{(h)}_{i,j}(y)\right]_l, \left[Z^{(h)}_{i',j'}(z)\right]_l\right> - \left<\mu_{i,j}^{(h)}(y), \mu_{i',j'}^{(h)}(z)\right>^l \right| \le O\left( \frac{\epsilon^2}{L^3} \right) \left\| \mu_{i,j}^{(h)}(y) \right\|_2^l \left\| \mu_{i',j'}^{(h)}(z)\right\|_2^l \nonumber\\
		&\left\| \left[Z^{(h)}_{i,j}(y)\right]_l \right\|_2^2 \le \frac{11}{10} \cdot \left\| \mu_{i,j}^{(h)}(y) \right\|_2^{2l} \label{eq:Zinner-prod-bound}\\
		& \left\| \left[Z^{(h)}_{i',j'}(z)\right]_l \right\|_2^2 \le \frac{11}{10} \cdot \left\| \mu_{i',j'}^{(h)}(z) \right\|_2^{2l} \nonumber
	\end{align}
	Therefore, by plugging \eqref{eq:Zinner-prod-bound} back to \eqref{eq:phi-inner-prod-bound1} and using union bound and triangle inequality as well as 
	Cauchy–Schwarz inequality, we find that with probability at least $1 - O\left( \frac{\delta}{L} \right)$, the following holds simultaneously for all $i,i' \in [d_1]$ and $j,j' \in [d_2]$
	\begin{equation} \label{eq:phi-inner-prod-bound2}
		\left| \left< \phi_{i,j}^{(h)}(y), \phi_{i',j'}^{(h)}(z)\right> - \frac{\sqrt{N^{(h)}_{i,j}(y) N^{(h)}_{i',j'}(z)}}{q^2} \cdot P^{(p)}_{relu}\left( \left<\mu_{i,j}^{(h)}(y), \mu_{i',j'}^{(h)}(z)\right> \right) \right| \le O\left(\frac{\epsilon^2}{L^3}\right) \cdot B,
	\end{equation}
	where $B:= \frac{\sqrt{N^{(h)}_{i,j}(y) N^{(h)}_{i',j'}(z)}}{q^2} \cdot \sqrt{P^{(p)}_{relu}\left(\|\mu^{(h)}_{i,j}(y)\|_2^2\right) \cdot P^{(p)}_{relu}\left(\|\mu^{(h)}_{i',j'}(z)\|_2^2\right)}$ and $P^{(p)}_{relu}(\alpha) = \sum_{l=0}^{2p+2} c_l \cdot \alpha^l$ is the polynomial defined in \eqref{eq:poly-approx-krelu}. 
	By using the definition of $\mu_{i,j}^{(h)}(\cdot)$ in \eqref{eq:maping-cntk-covar} we have,
	\begin{equation}\label{mu-equality}
		\begin{split}
			&\left<\mu_{i,j}^{(h)}(y), \mu_{i',j'}^{(h)}(z)\right> = \frac{\sum_{a=-\frac{q-1}{2}}^{\frac{q-1}{2}} \sum_{b=-\frac{q-1}{2}}^{\frac{q-1}{2}}  \left< \phi_{i+a,j+b}^{(h-1)}(y), \phi_{i'+a,j'+b}^{(h-1)}(z) \right>}{\sqrt{N^{(h)}_{i,j}(y) N^{(h)}_{i',j'}(z)}},\\
			&\left\|\mu_{i,j}^{(h)}(y)\right\|_2^2 = \frac{\sum_{a=-\frac{q-1}{2}}^{\frac{q-1}{2}} \sum_{b=-\frac{q-1}{2}}^{\frac{q-1}{2}}  \left\| \phi_{i+a,j+b}^{(h-1)}(y) \right\|_2^2}{N^{(h)}_{i,j}(y)},\\
			&\left\|\mu_{i',j'}^{(h)}(z)\right\|_2^2 = \frac{\sum_{a=-\frac{q-1}{2}}^{\frac{q-1}{2}} \sum_{b=-\frac{q-1}{2}}^{\frac{q-1}{2}}  \left\| \phi_{i'+a,j'+b}^{(h-1)}(z) \right\|_2^2}{N^{(h)}_{i,j}(z)}.
		\end{split} 
	\end{equation}
	Hence, by conditioning on the inductive hypothesis $P_1(h-1)$ and using \eqref{mu-equality} and Corollary~\ref{N-to-Sigma} we have, 
	\[
	\left| \left\| \mu_{i,j}^{(h)}(y) \right\|_2^2 - 1 \right| \le h \cdot \frac{\epsilon^2}{60L^3}, \text{ and } \left| \left\| \mu_{i',j'}^{(h)}(z) \right\|_2^2 - 1 \right| \le h \cdot \frac{\epsilon^2}{60L^3}.
	\]
	Therefore, by invoking  Lemma~\ref{lema:sensitivity-polynomial}, it follows that $\left| P_{relu}^{(p)}\left(\|\mu^{(h)}_{i,j}(y)\|_2^2\right) - P_{relu}^{(p)}(1) \right| \le h \cdot \frac{\epsilon^2}{60L^3}$ and $\left| P_{relu}^{(p)}\left(\|\mu^{(h)}_{i',j'}(z)\|_2^2\right) - P_{relu}^{(p)}(1) \right| \le h \cdot \frac{\epsilon^2}{60L^3}$. Consequently, because $P_{relu}^{(p)}(1) \le P_{relu}^{(+\infty)}(1) = 1$, we find that
	\[ B \le \frac{11}{10} \cdot \frac{\sqrt{N^{(h)}_{i,j}(y) N^{(h)}_{i',j'}(z)}}{q^2}.\]
	For shorthand we use the notation $\beta:=\frac{\sqrt{N^{(h)}_{i,j}(y) N^{(h)}_{i',j'}(z)}}{q^2}$. By plugging this into \eqref{eq:phi-inner-prod-bound2} and using the notation $\beta$, we find that the following holds simultaneously for all $i,i' \in [d_1]$ and all $j,j' \in [d_2]$, with probability at least $1 - O\left( \frac{\delta}{ L} \right)$,
	\begin{equation} \label{eq:phi-inner-prod-bound3}
		\left| \left< \phi_{i,j}^{(h)}(y), \phi_{i',j'}^{(h)}(z)\right> - \beta \cdot P^{(p)}_{relu}\left( \left<\mu_{i,j}^{(h)}(y), \mu_{i',j'}^{(h)}(z)\right> \right) \right| \le O\left(\frac{\epsilon^2}{L^3}\right) \cdot \beta.
	\end{equation}

	Furthermore, by conditioning on the inductive hypothesis $P_1(h-1)$ and combining it with \eqref{mu-equality} and applying Cauchy–Schwarz inequality and invoking Corollary~\ref{N-to-Sigma} we find that,
	\small
	\begin{equation}\label{eq:mu-innerprod-bound}
		\begin{split}
			&\left| \left<\mu_{i,j}^{(h)}(y), \mu_{i',j'}^{(h)}(z)\right> - \frac{\sum_{a=-\frac{q-1}{2}}^{\frac{q-1}{2}} \sum_{b=-\frac{q-1}{2}}^{\frac{q-1}{2}}  \Gamma_{i+a,j+b,i'+a,j'+b}^{(h-1)}\left( y , z \right)}{\sqrt{N^{(h)}_{i,j}(y) \cdot N^{(h)}_{i',j'}(z)}} \right|\\ 
			&\qquad \le \frac{\sum_{a=-\frac{q-1}{2}}^{\frac{q-1}{2}} \sum_{b=-\frac{q-1}{2}}^{\frac{q-1}{2}}  \sqrt{N_{i+a,j+b}^{(h-1)}(y) \cdot N_{i'+a,j'+b}^{(h-1)}(z)}}{q^2 \cdot \sqrt{N^{(h)}_{i,j}(y) \cdot N^{(h)}_{i',j'}(z)}} \cdot  \frac{h \cdot \epsilon^2}{60L^3}\\
			&\qquad \le \frac{\sqrt{\sum_{a=-\frac{q-1}{2}}^{\frac{q-1}{2}} \sum_{b=-\frac{q-1}{2}}^{\frac{q-1}{2}} N_{i+a,j+b}^{(h-1)}(y) / q^2 } \cdot \sqrt{\sum_{a=-\frac{q-1}{2}}^{\frac{q-1}{2}} \sum_{b=-\frac{q-1}{2}}^{\frac{q-1}{2}} N_{i'+a,j'+b}^{(h-1)}(z) / q^2}}{ \sqrt{N^{(h)}_{i,j}(y) \cdot N^{(h)}_{i',j'}(z)}} \cdot  \frac{h \cdot \epsilon^2}{60L^3}\\
			&\qquad= h \cdot \frac{\epsilon^2}{60L^3},
		\end{split}
	\end{equation}
	\normalsize
	where the last line follows from \eqref{eq:dp-cntk-norm-simplified}.
	
	For shorthand, we use the notation $\gamma := \frac{\sum_{a=-\frac{q-1}{2}}^{\frac{q-1}{2}} \sum_{b=-\frac{q-1}{2}}^{\frac{q-1}{2}}  \Gamma_{i+a,j+b,i'+a,j'+b}^{(h-1)}\left( y , z \right)}{\sqrt{N^{(h)}_{i,j}(y) \cdot N^{(h)}_{i',j'}(z)}}$. Note that by Lemma~\ref{properties-gamma} and \eqref{eq:dp-cntk-norm-simplified}, $-1 \le \gamma \le 1$. Hence, we can invoke Lemma~\ref{lema:sensitivity-polynomial} and use \eqref{eq:mu-innerprod-bound} to find that,
	\[ \left|P^{(p)}_{relu}\left( \left<\mu_{i,j}^{(h)}(y), \mu_{i',j'}^{(h)}(z)\right> \right) - P^{(p)}_{relu}\left(\gamma\right) \right| \le h \cdot \frac{\epsilon^2}{60L^3}. \]
	By incorporating the above inequality into \eqref{eq:phi-inner-prod-bound3} using triangle inequality we find that, with probability at least $1 - O\left( \frac{\delta}{ L} \right)$, the following holds simultaneously for all $i,i' \in [d_1]$ and all $j,j'\in [d_2]$:
	\begin{equation} \label{eq:phi-inner-prod-bound4}
		\left| \left< \phi_{i,j}^{(h)}(y), \phi_{i',j'}^{(h)}(z)\right> - \beta \cdot P^{(p)}_{relu}\left(\gamma \right) \right| \le \left(O\left(\frac{\epsilon^2}{L^3}\right) + \frac{h \cdot\epsilon^2}{60L^3}\right) \cdot \beta.
	\end{equation}
	Additionally, since $-1 \le \gamma \le 1$, we can invoke Lemma~\ref{lem:polynomi-approx-krelu} and use the fact that $p = \left\lceil 2L^2/{\epsilon}^{4/3} \right\rceil$ to conclude,
	\[ \left| P_{relu}^{(p)}\left(\gamma\right) - \kappa_1(\gamma ) \right| \le \frac{\epsilon^2}{76 L^3}. \]
	By combining the above inequality with \eqref{eq:phi-inner-prod-bound4} via triangle inequality and using the fact that, by \eqref{eq:dp-cntk-covar-simplified}, $\beta \cdot \kappa_1(\gamma) \equiv \Gamma_{i,j,i',j'}^{(h)}(y,z)$ we get the following inequality, with probability at least $1 - O\left( \frac{\delta}{ L} \right)$
	\[  \left| \left< \phi_{i,j}^{(h)}(y), \phi_{i',j'}^{(h)}(z)\right> - \Gamma_{i,j,i',j'}^{(h)}(y,z) \right| \le (h+1) \cdot \frac{\epsilon^2}{60L^3} \cdot \frac{\sqrt{N^{(h)}_{i,j}(y) N^{(h)}_{i',j'}(z)}}{q^2}. \]
	Similarly, we can prove that with probability at least $1 - O\left( \frac{\delta}{ L} \right)$ the following hold, simultaneously for all $i,i' \in [d_1]$ and $j,j' \in [d_2]$,
	\begin{align*}
	&\left| \left\| \phi_{i,j}^{(h)}(y)\right\|_2^2 - \Gamma_{i,j,i,j}^{(h)}(y,y) \right| \le \frac{(h+1) \epsilon^2}{60L^3} \cdot \frac{N^{(h)}_{i,j}(y)}{q^2},\\
	&\left| \left\| \phi_{i',j'}^{(h)}(z)\right\|_2^2 - \Gamma_{i',j',i',j'}^{(h)}(z,z) \right| \le \frac{(h+1) \epsilon^2}{60L^3} \cdot \frac{N^{(h)}_{i',j'}(z)}{q^2}.
	\end{align*}
	This is sufficient to prove the inductive step for statement $P_1(h)$, i.e., $\Pr[P_1(h)|P_1(h-1)] \ge 1 - O(\delta/L)$.

	Now we prove the inductive step for statement $P_2(h)$. That is, we prove that conditioned on $P_2(h-1), P_1(h)$, and $P_1(h-1)$, $P_2(h)$ holds with probability at least $1-O(\delta/L)$.
	First, note that by Lemma~\ref{lem:srht} and using \eqref{eq:cntk-map-phidot} and union bound, we have the following simultaneously for all $i,i' \in [d_1]$ and all $j,j' \in [d_2]$, with probability at least $1 - O\left( \frac{\delta}{L} \right)$,
	\begin{equation}\label{cntk:phi-dot-bound1}
		\left| \left< \dot{\phi}_{i,j}^{(h)}(y), \dot{\phi}_{i',j'}^{(h)}(z)\right> - \frac{1}{q^2}  \sum_{l=0}^{2p'+1} b_l \left<\left[Y^{(h)}_{i,j}(y)\right]_l, \left[Y^{(h)}_{i',j'}(z)\right]_l\right> \right| \le O\left(\frac{\epsilon}{L}\right) \widehat{A},
	\end{equation}
	where $\widehat{A} := \frac{1}{q^2} \cdot \sqrt{\sum_{l=0}^{2p'+1} b_l \left\| \left[Y^{(h)}_{i,j}(y)\right]_l \right\|_2^2} \cdot \sqrt{\sum_{l=0}^{2p'+1} b_l \left\| \left[Y^{(h)}_{i',j'}(z)\right]_l \right\|_2^2}$ and the collection of vectors $\left\{\left[Y^{(h)}_{i,j}(y)\right]_l\right\}_{l=0}^{2p'+1}$ and $\left\{\left[Y^{(h)}_{i',j'}(z)\right]_l\right\}_{l=0}^{2p'+1}$ and coefficients $b_0, b_1, b_2, \ldots b_{2p'+1}$ are defined as per \eqref{eq:cntk-map-phidot} and \eqref{eq:poly-approx-krelu}, respectively. 
	By Lemma~\ref{soda-result} and union bound, with probability at least $1 - O\left( \frac{\delta}{L} \right)$, the following inequalities hold true simultaneously for all $l \in \{0,1,2, \ldots 2p'+1\}$, all $i,i' \in [d_1]$ and all $j,j' \in [d_2]$,
	\begin{align}
		& \left|\left<\left[Y^{(h)}_{i,j}(y)\right]_l, \left[Y^{(h)}_{i',j'}(z)\right]_l\right> - \left<\mu_{i,j}^{(h)}(y), \mu_{i',j'}^{(h)}(z)\right>^l \right| \le O\left( \frac{\epsilon}{L} \right) \cdot \left\| \mu_{i,j}^{(h)}(y) \right\|_2^l \left\| \mu_{i',j'}^{(h)}(z)\right\|_2^l \nonumber\\
		& \left\| \left[Y^{(h)}_{i,j}(y)\right]_l\right\|_2^2 \le \frac{11}{10} \cdot \left\| \mu_{i,j}^{(h)}(y) \right\|_2^{2l} \label{eq:Y-innerprod-bound}\\
		& \left\| \left[Y^{(h)}_{i',j'}(z)\right]_l \right\|_2^2 \le \frac{11}{10} \cdot \left\| \mu_{i',j'}^{(h)}(z) \right\|_2^{2l} \nonumber
	\end{align}
	Therefore, by plugging \eqref{eq:Y-innerprod-bound} into \eqref{cntk:phi-dot-bound1} and using union bound and triangle inequality as well as 
	Cauchy–Schwarz inequality, we find that with probability at least $1 - O\left( \frac{\delta}{L} \right)$, the following holds simultaneously for all $i,i' \in [d_1]$ and $j,j' \in [d_2]$ 
	\begin{equation} \label{cntk:phi-dot-bound2}
		\left| \left< \dot{\phi}_{i,j}^{(h)}(y), \dot{\phi}_{i',j'}^{(h)}(z)\right> - \frac{1}{q^2} \cdot \dot{P}^{(p')}_{relu}\left( \left<\mu_{i,j}^{(h)}(y), \mu_{i',j'}^{(h)}(z)\right> \right) \right| \le O\left(\frac{\epsilon}{L}\right) \cdot \widehat{B},
	\end{equation}
	where $\widehat{B}:= \frac{1}{q^2} \cdot \sqrt{\dot{P}^{(p')}_{relu}\left(\|\mu_{i,j}^{(h)}(y)\|_2^2\right) \cdot \dot{P}^{(p')}_{relu}\left(\|\mu_{i',j'}^{(h)}(z)\|_2^2\right)}$ and $\dot{P}^{(p)}_{relu}(\alpha) = \sum_{l=0}^{2p'+1} b_l \cdot \alpha^l$ is the polynomial defined in \eqref{eq:poly-approx-krelu}.
	By conditioning on the inductive hypothesis $P_1(h-1)$ and using \eqref{mu-equality} and Corollary~\ref{N-to-Sigma} we have $\left| \left\| \mu_{i,j}^{(h)}(y) \right\|_2^2 - 1 \right| \le h \cdot \frac{\epsilon^2}{60L^3}$ and $\left| \left\| \mu_{i',j'}^{(h)}(z) \right\|_2^2 - 1 \right| \le h \cdot \frac{\epsilon^2}{60L^3}$. 
	Therefore, using the fact that $p' = \left\lceil 9L^2 /\epsilon^{2} \right\rceil$ and by invoking Lemma~\ref{lema:sensitivity-polynomial}, it follows that $\left| \dot{P}_{relu}^{(p')}\left(\|\mu^{(h)}_{i,j}(y)\|_2^2\right) - \dot{P}_{relu}^{(p')}(1) \right| \le  \frac{h \cdot\epsilon}{20L^2}$ and $\left| \dot{P}_{relu}^{(p')}\left(\|\mu^{(h)}_{i',j'}(z)\|_2^2\right) - \dot{P}_{relu}^{(p')}(1) \right| \le \frac{ h \cdot \epsilon}{20L^2}$. Consequently, because $\dot{P}_{relu}^{(p')}(1) \le \dot{P}_{relu}^{(+\infty)}(1) = 1$, we find that
	\[ \widehat{B} \le \frac{11}{10\cdot q^2}.\]
	By plugging this into \eqref{cntk:phi-dot-bound2} we get the following, with probability at least $1 - O\left( \frac{\delta}{L} \right)$,
	\begin{equation} \label{cntk:phi-dot-bound3}
		\left| \left< \dot{\phi}_{i,j}^{(h)}(y), \dot{\phi}_{i',j'}^{(h)}(z)\right> - \frac{1}{q^2} \cdot \dot{P}^{(p')}_{relu}\left( \left<\mu_{i,j}^{(h)}(y), \mu_{i',j'}^{(h)}(z)\right> \right) \right| \le O\left(\frac{\epsilon}{q^2\cdot L}\right).
	\end{equation}

	Furthermore, recall the notation $\gamma = \frac{\sum_{a=-\frac{q-1}{2}}^{\frac{q-1}{2}} \sum_{b=-\frac{q-1}{2}}^{\frac{q-1}{2}}  \Gamma_{i+a,j+b,i'+a,j'+b}^{(h-1)}\left( y , z \right)}{\sqrt{N^{(h)}_{i,j}(y) \cdot N^{(h)}_{i',j'}(z)}}$ and note that by Lemma~\ref{properties-gamma} and \eqref{eq:dp-cntk-norm-simplified}, $-1 \le \gamma \le 1$. Hence, we can invoke Lemma~\ref{lema:sensitivity-polynomial} and use the fact that $p' = \lceil 9 L^2 / \epsilon^2 \rceil$ to find that \eqref{eq:mu-innerprod-bound} implies the following,
	\[ \left|\dot{P}^{(p')}_{relu}\left( \left<\mu_{i,j}^{(h)}(y), \mu_{i',j'}^{(h)}(z)\right> \right) - \dot{P}^{(p')}_{relu}\left(\gamma \right) \right| \le \frac{h \cdot \epsilon}{20L^2}. \]
	By incorporating the above inequality into \eqref{cntk:phi-dot-bound3} using triangle inequality, we find that, with probability at least $1 - O\left( \frac{\delta}{ L} \right)$, the following holds simultaneously for all $i,i' \in [d_1]$ and all $j,j'\in [d_2]$:
	\begin{equation} \label{cntk:phi-dot-bound4}
		\left| \left< \dot{\phi}_{i,j}^{(h)}(y), \dot{\phi}_{i',j'}^{(h)}(z)\right> - \frac{1}{q^2} \cdot \dot{P}^{(p')}_{relu}\left(\gamma \right) \right| \le O\left(\frac{\epsilon}{q^2 L^2}\right) + \frac{h}{q^2} \cdot \frac{\epsilon}{20L^2}.
	\end{equation}
	Since $-1 \le \gamma \le 1$, we can invoke Lemma~\ref{lem:polynomi-approx-krelu} and use the fact that $p' = \left\lceil 9L^2 / {\epsilon}^2 \right\rceil$ to conclude,
	\[ \left| \dot{P}_{relu}^{(p')}\left(\gamma\right) - \kappa_0\left(\gamma\right) \right| \le \frac{\epsilon}{15 L}. \]
	By combining the above inequality with \eqref{cntk:phi-dot-bound4} via triangle inequality and using the fact that, by \eqref{eq:dp-cntk-derivative-covar-simplified}, $\frac{1}{q^2} \cdot \kappa_0(\gamma) \equiv \dot{\Gamma}_{i,j,i',j'}^{(h)}(y,z)$ we get the following bound simultaneously for all $i,i' \in [d_1]$ and all $j,j'\in [d_2]$, with probability at least $1 - O\left( \frac{\delta}{ L} \right)$:
	\begin{equation}\label{cntk:phi-dot-bound-final}
		\left| \left< \dot{\phi}_{i,j}^{(h)}(y), \dot{\phi}_{i',j'}^{(h)}(z)\right> - \dot{\Gamma}_{i,j,i',j'}^{(h)}(y,z) \right| \le \frac{1}{q^2} \cdot \frac{\epsilon}{8L}.
	\end{equation}
	Similarly we can prove that with probability at least $1 - O\left( \frac{\delta}{L} \right)$, the following hold simultaneously for all $i,i' \in [d_1]$ and all $j,j'\in [d_2]$,
	\begin{equation}\label{cntk:phi-norm-bound-final}
		\left| \left\| \dot{\phi}_{i,j}^{(h)}(y)\right\|_2^2 - \dot{\Gamma}_{i,j,i,j}^{(h)}(y,y) \right| \le \frac{1}{q^2} \cdot \frac{\epsilon}{8L}, \text{ and } \left| \left\| \dot{\phi}_{i',j'}^{(h)}(z)\right\|_2^2 - \dot{\Gamma}_{i',j',i',j'}^{(h)}(z,z) \right| \le \frac{1}{q^2} \cdot \frac{\epsilon}{8L}.
	\end{equation}
	We will use \eqref{cntk:phi-dot-bound-final} and \eqref{cntk:phi-norm-bound-final} to prove the inductive step for $P_2(h)$.

	Next, we consider two cases for the value of $h$. When $h<L$, the vectors $\psi_{i,j}^{(h)}(y) , \psi_{i',j'}^{(h)}(z)$ are defined in \eqref{psi-cntk} and when $h=L$, these vectors are defined differently in \eqref{psi-cntk-last}. First we consider the case of $h<L$. Note that in this case, if we let $\eta_{i,j}^{(h)}(y)$ and $\eta_{i',j'}^{(h)}(z)$ be the vectors defined in \eqref{psi-cntk}, then by Lemma~\ref{lem:srht} and union bound, the following holds simultaneously for all $i,i'\in [d_1]$ and all $j,j' \in [d_2]$, with probability at least $1 - O\left(\frac{\delta}{L}\right)$:
	\begin{equation}\label{eq:psi-bound1}
		\left| \left< \psi_{i,j}^{(h)}(y) , \psi_{i',j'}^{(h)}(z) \right> - \sum_{a=-\frac{q-1}{2}}^{\frac{q-1}{2}} \sum_{b=-\frac{q-1}{2}}^{\frac{q-1}{2}}  \left< \eta_{i+a,j+b}^{(h)}(y), \eta_{i'+a,j'+b}^{(h)}(z) \right> \right| \le O\left( \frac{\epsilon}{L} \right) \cdot D, 
	\end{equation}
	where $D := \sqrt{\sum_{a=-\frac{q-1}{2}}^{\frac{q-1}{2}} \sum_{b=-\frac{q-1}{2}}^{\frac{q-1}{2}} \|\eta_{i+a,j+b}^{(h)}(y)\|_2^2} \cdot \sqrt{\sum_{a=-\frac{q-1}{2}}^{\frac{q-1}{2}} \sum_{b=-\frac{q-1}{2}}^{\frac{q-1}{2}} \|\eta_{i'+a,j'+b}^{(h)}(z)\|_2^2}$. 
	Now, if we let $f_{i,j} := \psi^{(h-1)}_{i,j}(y) \otimes \dot{\phi}_{i,j}^{(h)}(y)$ and $g_{i',j'} := \psi^{(h-1)}_{i',j'}(z) \otimes \dot{\phi}_{i',j'}^{(h)}(z)$, then by \eqref{psi-cntk}, $\eta_{i,j}^{(h)}(y) = \left(Q^2\cdot f_{i,j}\right) \oplus \phi_{i,j}^{(h)}(y)$ and $\eta_{i',j'}^{(h)}(z) = \left(Q^2\cdot g_{i',j'}\right) \oplus \phi_{i',j'}^{(h)}(z)$. 
	Thus by Lemma~\ref{soda-result} and union bound, with probability at least $1 - O\left( \frac{\delta}{L} \right)$, we have the following inequalities simultaneously for all $i,i' \in [d_1]$ and $j,j' \in [d_2]$:
	\begin{align}
		& \left|\left<\eta^{(h)}_{i,j}(y), \eta^{(h)}_{i',j'}(z)\right> - \langle f_{i,j},g_{i',j'} \rangle - \left<\phi_{i,j}^{(h)}(y), \phi_{i',j'}^{(h)}(z)\right> \right| \le O\left( \frac{\epsilon}{L} \right) \cdot \left\| f_{i,j} \right\|_2 \left\| g_{i',j'} \right\|_2  \nonumber\\
		& \left\| \eta^{(h)}_{i,j}(y)\right\|_2^2 \le \frac{11}{10} \cdot  \|f_{i,j}\|_2^2 + \left\| \phi_{i,j}^{(h)}(y) \right\|_2^2 \label{eta-innerprod-bound}\\
		& \left\| \eta^{(h)}_{i',j'}(z)\right\|_2^2 \le \frac{11}{10} \cdot \|g_{i',j'}\|_2^2 + \left\| \phi_{i',j'}^{(h)}(z) \right\|_2^2 \nonumber
	\end{align}
	Therefore, if we condition on inductive hypotheses $P_1(h)$ and $P_2(h-1)$, then by using Corollary~\ref{N-to-Sigma}, Lemma~\ref{prop-pi}, inequality \eqref{cntk:phi-norm-bound-final} and Lemma~\ref{properties-gamma-dot} along with the fact that $\|f_{i,j}\|_2^2 = \|\psi^{(h-1)}_{i,j}(y)\|_2^2 \cdot \|\dot{\phi}_{i,j}^{(h)}(y)\|_2^2$, we have:
	\small
	\begin{align*}
		&\sum_{a=-\frac{q-1}{2}}^{\frac{q-1}{2}} \sum_{b=-\frac{q-1}{2}}^{\frac{q-1}{2}} \|\eta_{i+a,j+b}^{(h)}(y)\|_2^2\\
		&\qquad\le \sum_{a=-\frac{q-1}{2}}^{\frac{q-1}{2}} \sum_{b=-\frac{q-1}{2}}^{\frac{q-1}{2}} \frac{11}{10} \|f_{i+a,j+b}\|_2^2 + \Gamma_{i+a,j+b,i+a,j+b}^{(h)}( y , y) + \frac{N_{i+a,j+b}^{(h)}(y)}{10q^2}\\
		&\qquad= \frac{11}{10} \cdot \sum_{a=-\frac{q-1}{2}}^{\frac{q-1}{2}} \sum_{b=-\frac{q-1}{2}}^{\frac{q-1}{2}} \|\psi^{(h-1)}_{i+a,j+b}(y)\|_2^2 \cdot \|\dot{\phi}_{i+a,j+b}^{(h)}(y)\|_2^2 + \Gamma_{i+a,j+b,i+a,j+b}^{(h)}( y , y)\\
		&\qquad\le \frac{12}{10} \sum_{a=-\frac{q-1}{2}}^{\frac{q-1}{2}} \sum_{b=-\frac{q-1}{2}}^{\frac{q-1}{2}} \Pi^{(h-1)}_{i+a,j+b, i+a, j+b}(y,y) \cdot \dot{\Gamma}_{i+a,j+b,i+a,j+b}^{(h)}(y,y) + \Gamma_{i+a,j+b,i+a,j+b}^{(h)}( y , y)\\
		&\qquad = \frac{12}{10} \cdot \Pi^{(h)}_{i,j, i, j}(y,y) = \frac{12}{10} \cdot h \cdot N^{(h+1)}_{i,j}(y),
	\end{align*}
	\normalsize
	where the fourth line above follows from the inductive hypothesis $P_2(h-1)$ along with \eqref{cntk:phi-norm-bound-final} and Lemma~\ref{properties-gamma-dot} and Lemma~\ref{prop-pi}. The last line above follows from \eqref{eq:dp-cntk} and Lemma~\ref{prop-pi}.
	Similarly we can prove, $\sum_{a=-\frac{q-1}{2}}^{\frac{q-1}{2}} \sum_{b=-\frac{q-1}{2}}^{\frac{q-1}{2}} \|\eta_{i'+a,j'+b}^{(h)}(z)\|_2^2 \le \frac{12}{10} \cdot h \cdot N^{(h+1)}_{i',j'}(z)$, thus conditioned on $P_2(h-1), P_1(h), P_1(h-1)$, with probability at least $1 - O\left(\frac{\delta}{L}\right)$:
	\[ D \le \frac{12}{10} \cdot h \cdot \sqrt{ N^{(h+1)}_{i,j}(y) \cdot N^{(h+1)}_{i',j'}(z)}. \]
	By incorporating this into \eqref{eq:psi-bound1} it follows that if we condition on $P_2(h-1), P_1(h), P_1(h-1)$, then, with probability at least $1 - O\left(\frac{\delta}{L}\right)$, the following holds simultaneously for all $i,i' \in [d_1]$ and all $j,j' \in [d_2]$,
	\begin{equation}\label{eq:psi-bound2}
		\begin{split}
			&\left| \left< \psi_{i,j}^{(h)}(y) , \psi_{i',j'}^{(h)}(z) \right> - \sum_{a=-\frac{q-1}{2}}^{\frac{q-1}{2}} \sum_{b=-\frac{q-1}{2}}^{\frac{q-1}{2}}  \left< \eta_{i+a,j+b}^{(h)}(y), \eta_{i'+a,j'+b}^{(h)}(z) \right> \right|\\ 
			&\qquad\qquad\le O\left( \epsilon h/{L} \right) \cdot \sqrt{ N^{(h+1)}_{i,j}(y) \cdot N^{(h+1)}_{i',j'}(z)}.
		\end{split}
	\end{equation}
	
	Now we bound the term $\left|\left<\eta^{(h)}_{i,j}(y), \eta^{(h)}_{i',j'}(z)\right> - \langle f_{i,j},g_{i',j'} \rangle - \left<\phi_{i,j}^{(h)}(y), \phi_{i',j'}^{(h)}(z)\right> \right|$ using \eqref{eta-innerprod-bound}, \eqref{cntk:phi-norm-bound-final}, and Lemma~\ref{properties-gamma-dot} along with inductive hypotheses $P_2(h-1)$ and Lemma~\ref{prop-pi}. With probability at least $1 - O\left(\frac{\delta}{L}\right)$ the following holds simultaneously for all $i,i' \in [d_1]$ and all $j,j' \in [d_2]$:
	\[
	\begin{split}
		&\left|\left<\eta^{(h)}_{i,j}(y), \eta^{(h)}_{i',j'}(z)\right> - \langle f_{i,j},g_{i',j'} \rangle - \left<\phi_{i,j}^{(h)}(y), \phi_{i',j'}^{(h)}(z)\right> \right| \\
		&\qquad \le O\left( \frac{\epsilon}{L} \right) \cdot \sqrt{\Pi_{i,j,i,j}^{(h-1)}(y,y) \cdot \dot{\Gamma}_{i,j,i,j}^{(h)}(y,y)\cdot \Pi_{i',j',i',j'}^{(h-1)}(z,z) \cdot \dot{\Gamma}_{i',j',i',j'}^{(h)}(z,z)}\\
		&\qquad = O\left( \frac{\epsilon \cdot h}{L} \right) \cdot \frac{ \sqrt{N_{i,j}^{(h)}(y) \cdot N_{i',j'}^{(h)}(z)}}{q^2},
	\end{split}
	\]
	where the last line above follows from Lemma~\ref{prop-pi} together with the fact that $\dot{\Gamma}_{i,j,i,j}^{(h)}(y,y) = \dot{\Gamma}_{i',j',i',j'}^{(h)}(z,z) = \frac{1}{q^2}$.
	
	By combining the above with inductive hypotheses $P_1(h), P_2(h-1)$ and \eqref{cntk:phi-dot-bound-final} via triangle inequality and invoking Lemma~\ref{prop-pi} we get that the following holds simultaneously for all $i,i' \in [d_1]$ and all $j,j' \in [d_2]$, with probability at least $1 - O\left(\frac{\delta}{L}\right)$,
	\small
	\begin{align}\label{eta-innerprod-bound2}
		&\left|\left<\eta^{(h)}_{i,j}(y), \eta^{(h)}_{i',j'}(z)\right> - \Pi_{i,j,i',j'}^{(h-1)}(y,z)\cdot \dot{\Gamma}_{i,j,i',j'}^{(h)}(y,z) - \Gamma_{i,j,i',j'}^{(h)}(y,z) \right| \nonumber\\
		& \le \frac{\epsilon}{10} \cdot \frac{(h-1)^2}{L+1} \cdot \sqrt{N_{i,j}^{(h)}(y) \cdot N_{i',j'}^{(h)}(z)} \cdot \left(\left| \dot{\Gamma}_{i,j,i',j'}^{(h)}(y,z) \right|+ \frac{1}{q^2} \cdot \frac{\epsilon}{8L} \right) + \frac{1}{q^2} \cdot \frac{\epsilon}{8L} \cdot \left| \Pi_{i,j,i',j'}^{(h-1)}(y,z)\right| \nonumber\\ 
		& + \frac{(h+1) \cdot \epsilon^2}{60L^3} \cdot \frac{\sqrt{N_{i,j}^{(h)}(y) \cdot N_{i',j'}^{(h)}(z)}}{q^2} + O\left( \frac{\epsilon \cdot h}{L} \right) \cdot \frac{ \sqrt{N_{i,j}^{(h)}(y) \cdot N_{i',j'}^{(h)}(z)}}{q^2}\nonumber\\
		& \le \frac{\epsilon}{10} \cdot \frac{(h-1)^2}{L+1} \cdot \frac{\sqrt{N_{i,j}^{(h)}(y) \cdot N_{i',j'}^{(h)}(z)}}{q^2} \cdot \left(1 + \frac{\epsilon}{8L} \right) + \frac{h-1}{q^2} \cdot \frac{\epsilon}{8L} \cdot \sqrt{N_{i,j}^{(h)}(y) \cdot N_{i',j'}^{(h)}(z)}\nonumber\\
		&+ \left( \frac{(h+1) \cdot \epsilon^2}{60L^3} + O\left( \frac{\epsilon \cdot h}{L} \right) \right) \cdot \frac{\sqrt{N_{i,j}^{(h)}(y) \cdot N_{i',j'}^{(h)}(z)}}{q^2} \nonumber\\
		&\le \frac{\epsilon}{10} \cdot \frac{h^2-h/2}{L+1} \cdot \frac{\sqrt{N_{i,j}^{(h)}(y) \cdot N_{i',j'}^{(h)}(z)}}{q^2}.\nonumber
	\end{align}
	\normalsize
	By plugging the above bound into \eqref{eq:psi-bound2} using triangle inequality and using \eqref{eq:dp-cntk} we get the following with probability at least $1 - O\left(\frac{\delta}{L}\right)$:
	\begin{equation}\label{eq:psi-bound3}
		\begin{split}
			&\left| \left< \psi_{i,j}^{(h)}(y) , \psi_{i',j'}^{(h)}(z) \right> - \Pi_{i,j,i',j'}^{(h)}(y,z) \right|\\ 
			&\le O\left( \epsilon h/{L} \right) \cdot \sqrt{ N^{(h+1)}_{i,j}(y) \cdot N^{(h+1)}_{i',j'}(z)} \\ 
			&\qquad+ \frac{\epsilon}{10} \cdot \frac{h^2-h/2}{L+1} \cdot \sum_{a=-\frac{q-1}{2}}^{\frac{q-1}{2}} \sum_{b=-\frac{q-1}{2}}^{\frac{q-1}{2}}  \frac{\sqrt{N_{i+a,j+b}^{(h)}(y) \cdot N_{i'+a,j'+b}^{(h)}(z)}}{q^2}\\
			&\le O\left( \epsilon h/{L} \right) \cdot \sqrt{ N^{(h+1)}_{i,j}(y) \cdot N^{(h+1)}_{i',j'}(z)} \\ 
			&\qquad+ \frac{\epsilon}{10} \cdot \frac{h^2-h/2}{L+1} \cdot \sqrt{\sum_{a=-\frac{q-1}{2}}^{\frac{q-1}{2}} \sum_{b=-\frac{q-1}{2}}^{\frac{q-1}{2}} \frac{N_{i+a,j+b}^{(h)}(y)}{q^2}} \cdot \sqrt{ \sum_{a=-\frac{q-1}{2}}^{\frac{q-1}{2}} \sum_{b=-\frac{q-1}{2}}^{\frac{q-1}{2}} \frac{N_{i'+a,j'+b}^{(h)}(z)}{q^2}}\\
			&\le \frac{\epsilon}{10} \cdot \frac{h^2}{L+1} \cdot\sqrt{ N^{(h+1)}_{i,j}(y) \cdot N^{(h+1)}_{i',j'}(z)}.
		\end{split}
	\end{equation}
	Similarly, we can prove that with probability at least $1 - O\left( \frac{\delta}{L} \right)$ the following hold simultaneously for all $i,i' \in [d_1]$ and all $j,j' \in [d_2]$,
	\[ \begin{split}
		&\left| \left\| \psi_{i,j}^{(h)}(y)\right\|_2^2 - \Pi_{i,j,i,j}^{(h)}(y,y) \right| \le \frac{\epsilon}{10} \cdot \frac{h^2}{L+1} \cdot N^{(h+1)}_{i,j}(y),\\ 
		&\left| \left\| \psi_{i',j'}^{(h)}(z)\right\|_2^2 - \Pi_{i',j',i',j'}^{(h)}(z,z) \right| \le \frac{\epsilon}{10} \cdot \frac{h^2}{L+1} \cdot N^{(h+1)}_{i',j'}(z).
	\end{split} \]
	This is sufficient to prove the inductive step for statement $P_2(h)$, in the case of $h<L$, i.e., $\Pr[P_2(h)|P_2(h-1), P_1(h), P_1(h-1)] \ge 1 - O(\delta/L)$.
	
	Now we prove the inductive step for $P_2(h)$ in the case of $h=L$. Similar to before, if we let $f_{i,j} := \psi^{(L-1)}_{i,j}(y) \otimes \dot{\phi}_{i,j}^{(L)}(y)$ and $g_{i',j'} := \psi^{(L-1)}_{i',j'}(z) \otimes \dot{\phi}_{i',j'}^{(L)}(z)$, then by \eqref{psi-cntk-last}, we have $\psi_{i,j}^{(L)}(y) = \left(Q^2\cdot f_{i,j}\right)$ and $\psi_{i',j'}^{(L)}(z) = \left(Q^2\cdot g_{i',j'}\right)$. Thus by Lemma~\ref{soda-result} and union bound, we find that, with probability at least $1 - O\left( \frac{\delta}{ L} \right)$, the following inequality holds simultaneously for all $i,i' \in [d_1]$ and $j,j' \in [d_2]$:
	\[ \left|\left<\psi^{(L)}_{i,j}(y), \psi^{(L)}_{i',j'}(z)\right> - \langle f_{i,j},g_{i',j'} \rangle \right| \le O\left( \frac{\epsilon}{L} \right) \cdot \left\| f_{i,j} \right\|_2 \left\| g_{i',j'} \right\|_2.\]
	Therefore, using \eqref{cntk:phi-norm-bound-final} and Lemma~\ref{properties-gamma-dot} along with inductive hypotheses $P_2(L-1)$ and Lemma~\ref{prop-pi}, with probability at least $1 - O\left(\frac{\delta}{L}\right)$, the following holds simultaneously for all $i,i' \in [d_1]$ and $j,j' \in [d_2]$,
	\[
	\begin{split}
		&\left|\left<\psi^{(L)}_{i,j}(y), \psi^{(L)}_{i',j'}(z)\right> - \langle f_{i,j},g_{i',j'} \rangle \right| \\
		&\qquad \le O\left( \frac{\epsilon}{L} \right) \cdot \sqrt{\Pi_{i,j,i,j}^{(L-1)}(y,y) \cdot \dot{\Gamma}_{i,j,i,j}^{(L)}(y,y)\cdot \Pi_{i',j',i',j'}^{(L-1)}(z,z) \cdot \dot{\Gamma}_{i',j',i',j'}^{(L)}(z,z)}\\
		&\qquad = O\left( {\epsilon} \right) \cdot \frac{\sqrt{N_{i,j}^{(L)}(y) \cdot N_{i',j'}^{(L)}(z)}}{q^2}.
	\end{split}
	\]

	By combining the above with inductive hypotheses $P_1(L), P_2(L-1)$ and \eqref{cntk:phi-dot-bound-final} via triangle inequality and invoking Lemma~\ref{prop-pi} and also using the definition of $\Pi^{(L)}(y,z)$ given in \eqref{eq:dp-cntk-last-layer}, we get that the following holds, simultaneously for all $i,i' \in [d_1]$ and $j,j' \in [d_2]$, with probability at least $1 - O\left(\frac{\delta}{L}\right)$,
	\begin{align}
		&\left|\left<\psi^{(L)}_{i,j}(y), \psi^{(L)}_{i',j'}(z)\right> - \Pi_{i,j,i',j'}^{(L)}(y,z) \right| \nonumber\\
		& \le \frac{\epsilon}{10} \cdot \frac{(L-1)^2}{L+1} \cdot \sqrt{N_{i,j}^{(L)}(y) \cdot N_{i',j'}^{(L)}(z)} \cdot \left(\left| \dot{\Gamma}_{i,j,i',j'}^{(L)}(y,z) \right|+ \frac{1}{q^2} \cdot \frac{\epsilon}{8L} \right) + \frac{1}{q^2} \cdot \frac{\epsilon}{8L} \cdot \left| \Pi_{i,j,i',j'}^{(L-1)}(y,z)\right| \nonumber\\ 
		& + \frac{(L+1) \cdot \epsilon^2}{60L^3} \cdot \frac{\sqrt{N_{i,j}^{(L)}(y) \cdot N_{i',j'}^{(L)}(z)}}{q^2} + O\left( {\epsilon} \right) \cdot \frac{ \sqrt{N_{i,j}^{(L)}(y) \cdot N_{i',j'}^{(L)}(z)}}{q^2}\nonumber\\
		& \le \frac{\epsilon}{10} \cdot \frac{(L-1)^2}{L+1} \cdot \frac{\sqrt{N_{i,j}^{(L)}(y) \cdot N_{i',j'}^{(L)}(z)}}{q^2} \cdot \left(1 + \frac{\epsilon}{8L} \right) +  \frac{\epsilon}{8q^2} \cdot \sqrt{N_{i,j}^{(L)}(y) \cdot N_{i',j'}^{(L)}(z)}\nonumber\\
		&+ \left( \frac{(L+1) \cdot \epsilon^2}{60L^3} + O\left( {\epsilon} \right) \right) \cdot \frac{\sqrt{N_{i,j}^{(L)}(y) \cdot N_{i',j'}^{(L)}(z)}}{q^2} \nonumber\\
		&\le \frac{\epsilon \cdot (L-1)}{10} \cdot \frac{\sqrt{N_{i,j}^{(L)}(y) \cdot N_{i',j'}^{(L)}(z)}}{q^2}.\nonumber
	\end{align}
	This proves the inductive step for statement $P_2(h)$, in the case of $h=L$, i.e., $\Pr[P_2(L)|P_2(L-1), P_1(L), P_1(L-1)] \ge 1 - O(\delta/L)$.
	The induction is complete and hence the statements of lemma are proved by union bounding over all $h = 0,1,2, \ldots L$.
\end{proof}

In the following lemma we analyze the runtime of the CNTK Sketch algorithm,
\begin{lemma}[Runtime of the CNTK Sketch]
	\label{thm:cntk-sketch-runtime}
	For every positive integers $d_1,d_2,c$, and $L$, every $\epsilon, \delta>0$, every image $x \in \RR^{d_1\times d_2\times c}$, the time to compute the CNTK Sketch $\Psi_{cntk}^{(L)}(x) \in \RR^{s^*}$, for $s^*=O\left( \frac{1}{\epsilon^2} \cdot \log \frac{1}{\delta} \right)$, using the procedure given in Definition~\ref{alg-def-cntk-sketch} is bounded by $O\left( \frac{L^{11}}{\epsilon^{6.7}} \cdot (d_1d_2) \cdot \log^3 \frac{d_1d_2L}{\epsilon\delta}\right)$.
\end{lemma}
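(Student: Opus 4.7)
\begin{proofof}{Lemma~\ref{thm:cntk-sketch-runtime}}
The plan is to walk through Definition~\ref{alg-def-cntk-sketch} step by step, bound each sub-computation via Lemma~\ref{lem:srht} and Lemma~\ref{soda-result}, and then sum the contributions over all pixels $(i,j) \in [d_1] \times [d_2]$ and layers $h = 1, \ldots, L$. The structure mirrors the runtime proof of Lemma~\ref{thm:ntk-sketch-runtime}, except that every per-pixel cost in the NTK argument is now incurred $d_1 d_2$ times instead of once; the remainder of the analysis is an exercise in confirming that no other factor secretly inflates.

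First I would dispose of the cheap book-keeping. The normalization tensors $N_{i,j}^{(h)}(x)$ from \eqref{eq:dp-cntk-norm-simplified} are computed by a simple dynamic program in time $O(d_1 d_2 \cdot q^2 \cdot L)$. The base-layer SRHT in \eqref{cntk-sketch-covar-zero} costs $O(r + c\log c)$ per pixel and $O(d_1 d_2 (r + c\log c))$ in total. Each $\mu_{i,j}^{(h)}(x)$ is a direct sum followed by a scalar normalization costing $O(q^2 r)$ per pixel. The SRHTs applied to the direct sums $\bigoplus_l \sqrt{c_l}\,[Z_{i,j}^{(h)}]_l$ and $\bigoplus_l \sqrt{b_l}\,[Y_{i,j}^{(h)}]_l$ in \eqref{eq:maping-cntk-covar} and \eqref{eq:cntk-map-phidot}, the SRHT inside \eqref{psi-cntk}, and the final Gaussian projection in \eqref{Psi-cntk-def}, each cost at most $O(\mathrm{poly}(L/\epsilon) \cdot \log(\cdot))$ per pixel and layer, which is dominated by the main term below.

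The main obstacle is bounding the applications of the high-degree \PolyS transforms $Q^{2p+2}$ and $Q^{2p'+1}$ in \eqref{eq:maping-cntk-covar} and \eqref{eq:cntk-map-phidot}. For a fixed pixel $(i,j)$ and a fixed layer $h$, invoking Lemma~\ref{soda-result}(2) on the input $\mu_{i,j}^{(h)}(x) \in \RR^{q^2 r}$ with degree $2p+2$ (where $p = \lceil 2L^2/\epsilon^{4/3}\rceil$ and $r = O(L^6/\epsilon^4 \log^2(d_1 d_2 L/\epsilon\delta))$) produces $\{[Z_{i,j}^{(h)}]_l\}_{l=0}^{2p+2}$ in time
\[
O\!\left( \frac{L^{10}}{\epsilon^{20/3}} \log^2\!\tfrac{L}{\epsilon} \log^3\!\tfrac{d_1 d_2 L}{\epsilon\delta} \;+\; \frac{L^{8}}{\epsilon^{16/3}} \log^3\!\tfrac{d_1 d_2 L}{\epsilon\delta} \right) \;=\; O\!\left( \frac{L^{10}}{\epsilon^{6.7}} \log^3\!\tfrac{d_1 d_2 L}{\epsilon\delta} \right),
\]
exactly in analogy with the NTK-sketch calculation. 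Since $p' = \lceil 9L^2/\epsilon^2\rceil$ is larger in $\epsilon$ but smaller in the combined exponent, the degree-$(2p'+1)$ sketch producing $\{[Y_{i,j}^{(h)}]_l\}_{l=0}^{2p'+1}$ runs in time $O(L^8/\epsilon^6 \log^3(d_1 d_2 L/\epsilon\delta))$ per pixel and layer, which is absorbed. The degree-$2$ sketch $Q^2$ inside \eqref{psi-cntk} and \eqref{psi-cntk-last} is applied to an $s^2$-dimensional tensor product and costs only $O(s\cdot m_2 \cdot \mathrm{polylog})$ by Lemma~\ref{soda-result}(3), and so is also absorbed.

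To conclude, I would simply sum the dominant per-pixel-per-layer cost $O(L^{10}/\epsilon^{6.7} \cdot \log^3(d_1 d_2 L/\epsilon\delta))$ over all $d_1 d_2$ pixels and all $L$ layers, which yields the claimed $O(L^{11}/\epsilon^{6.7} \cdot d_1 d_2 \cdot \log^3(d_1 d_2 L/\epsilon\delta))$ total. The only place requiring care is to observe that (i) the direct-sum operations in \eqref{eq:maping-cntk-covar} and \eqref{psi-cntk} inflate the input dimension of each polynomial sketch only by a constant factor $q^2$ (we treat the filter size $q$ as absolute constant), and (ii) the dimension $q^2 r$ of the input to the high-degree sketch enters only the additive $p\cdot d \cdot \log d$ term of Lemma~\ref{soda-result}(2), which is already dominated by $L^{10}/\epsilon^{6.7}$.
\end{proofof}
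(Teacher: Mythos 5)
Your proposal is correct and follows essentially the same route as the paper's own proof: you book off the cheap preprocessing ($N_{i,j}^{(h)}$, base SRHT, direct sums, SRHT and Gaussian projections), invoke Lemma~\ref{soda-result}(2) to bound the per-pixel-per-layer cost of the degree-$(2p+2)$ \PolyS at $O\left(L^{10}/\epsilon^{6.7}\cdot\log^3(d_1d_2L/\epsilon\delta)\right)$ (and the degree-$(2p'+1)$ sketch at a dominated $O(L^8/\epsilon^6\cdot\mathrm{polylog})$), and then multiply by $L\cdot d_1d_2$. This is the same decomposition, the same dominant term, and the same use of the two ingredient lemmas as the paper.
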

\begin{proof}
	First note that the total time to compute $N_{i,j}^{(h)}(x)$ for all $i \in [d_1]$ and $j \in [d_2]$ and $h=0,1, \ldots L$ as per \eqref{eq:dp-cntk-norm-simplified} is bounded by $O\left(q^2 L \cdot d_1d_2\right)$.
	Besides the time to compute $N_{i,j}^{(h)}(x)$, there are two other main components to the runtime of this procedure. The first heavy operation corresponds to computing vectors $\left[Z_{i,j}^{(h)}(x)\right]_l = Q^{2p+2} \cdot \left(\left[ \mu_{i,j}^{(h)}(x) \right]^{\otimes l} \otimes e_1^{\otimes 2p+2-l}\right)$ for $l=0,1,2, \ldots 2p+2$ and $h=1,2, \ldots L$ and all indices $i \in[d_1]$ and $j \in [d_2]$, in \eqref{eq:maping-cntk-covar}. By Lemma~\ref{soda-result}, the time to compute $\left[Z_{i,j}^{(h)}(x)\right]_l$ for a fixed $h$, fixed $i \in[d_1]$ and $j \in [d_2]$, and all $l=0,1,2, \ldots 2p+2$ is bounded by,
	\[ O\left( L^{10}/\epsilon^{20/3} \cdot \log^2\frac{L}{\epsilon} \log^3 \frac{d_1d_2L}{\epsilon\delta} + q^2 \cdot L^8/\epsilon^{16/3} \cdot \log^3 \frac{d_1d_2L}{\epsilon\delta} \right) = O\left( \frac{ L^{10}}{\epsilon^{6.7}} \cdot \log^3 \frac{d_1d_2L}{\epsilon\delta} \right). \]
	The total time to compute vectors $\left[Z_{i,j}^{(h)}(x)\right]_l$ for all $h=1,2, \ldots L$ and all $l=0,1,2, \ldots 2p+2$ and all indiced $i \in [d_1]$ and $j \in [d_2]$ is thus bounded by $O\left( \frac{L^{11}}{\epsilon^{6.7}} \cdot (d_1d_2) \cdot \log^3 \frac{d_1d_2L}{\epsilon\delta} \right)$. 
	The next computationally expensive operation is computing vectors $\left[Y_{i,j}^{(h)}(x)\right]_l$ for $l=0,1,2, \ldots 2p'+1$ and $h=1,2, \ldots L$, and all indices $i \in [d_1]$ and $j \in [d_2]$, in \eqref{eq:cntk-map-phidot}.
	By Lemma~\ref{soda-result}, the runtime of computing $\left[Y_{i,j}^{(h)}(x)\right]_l$ for a fixed $h$, fixed $i \in [d_1]$ and $j \in [d_2]$, and all $l=0,1,2, \ldots 2p'+1$ is bounded by,
	\[ O\left( \frac{L^{6}}{\epsilon^6} \cdot \log^2\frac{L}{\epsilon} \log^3 \frac{d_1d_2L}{\epsilon\delta} + \frac{q^2 \cdot L^{8}}{\epsilon^6} \cdot \log^3 \frac{d_1d_2L}{\epsilon\delta} \right) = O\left( \frac{ L^{8}}{\epsilon^6} \log^2\frac{L}{\epsilon} \cdot \log^3 \frac{d_1d_2L}{\epsilon\delta} \right). \]
	Hence, the total time to compute vectors $\left[Y_{i,j}^{(h)}(x)\right]_l$ for all $h=1,2, \ldots L$ and $l=0,1,2, \ldots 2p'+1$ and all indiced $i \in [d_1]$ and $j \in [d_2]$ is $O\left( \frac{ L^{9}}{\epsilon^6} \log^2\frac{L}{\epsilon} \cdot (d_1d_2) \cdot \log^3 \frac{d_1d_2L}{\epsilon\delta} \right)$. The total runtime bound is obtained by summing up these three contributions.
\end{proof}

Now we are ready to prove Theorem~\ref{maintheorem-cntk},

\begin{proofof}{Theorem~\ref{maintheorem-cntk}}
	Let $\psi^{(L)}:\RR^{d_1\times d_2\times c} \to \RR^{d_1 \times d_2 \times s}$ for $s=O\left( \frac{L^4}{\epsilon^2} \cdot \log^3 \frac{d_1d_2L}{\epsilon\delta} \right)$ be the mapping defined in \eqref{psi-cntk-last} of Definition~\ref{alg-def-cntk-sketch}. By \eqref{Psi-cntk-def}, the CNTK Sketch $\Psi_{cntk}^{(L)}(x)$ is defined as 
	\[\Psi_{ntk}^{(L)}(x):=\frac{1}{d_1d_2} \cdot  G \cdot \left(\sum_{i \in [d_1]} \sum_{j \in [d_2]} \psi^{(L)}_{i,j}(x)\right).\]
	The matrix $G$ is defined in \eqref{Psi-cntk-def} to be a matrix of i.i.d. normal entries with $s^{*} = C \cdot \frac{1}{\epsilon^2} \cdot \log\frac{1}{\delta}$ rows for large enough constant $C$. \cite{dasgupta2003elementary} shows that $G$ is a JL transform and hence $\Psi_{cntk}^{(L)}$ satisfies the following,
	\[ \Pr \left[ \left| \left< \Psi_{cntk}^{(L)}(y) , \Psi_{cntk}^{(L)}(z) \right> - \frac{1}{d_1^2 d_2^2} \cdot \sum_{i,i'\in[d_1]}\sum_{j ,j'\in [d_2]} \left< \psi_{i,j}^{(L)}( y),  \psi_{i',j'}^{(L)}( z) \right> \right| \le O(\epsilon) \cdot A \right] \ge 1 - O(\delta), \]
	where $A := \frac{1}{d_1^2 d_2^2} \cdot \left\| \sum_{i\in[d_1]}\sum_{j \in [d_2]}  \psi_{i,j}^{(L)}( y)\right\|_2 \cdot \left\|\sum_{i\in[d_1]}\sum_{j \in [d_2]} \psi_{i,j}^{(L)}(z)\right\|_2$. By triangle inequality together with Lemma~\ref{lem:cntk-sketch-corr} and Lemma~\ref{prop-pi}, the following bounds hold with probability at least $1 - O(\delta)$:
	\[ \begin{split}
		&\left\| \sum_{i\in[d_1]}\sum_{j \in [d_2]}  \psi_{i,j}^{(L)}( y)\right\|_2 \le \frac{11}{10} \cdot \frac{\sqrt{L-1}}{q} \cdot \sum_{i\in[d_1]}\sum_{j \in [d_2]} \sqrt{N^{(L)}_{i,j}(y)},\\
		&\sum_{i\in[d_1]}\sum_{j \in [d_2]} \left\| \psi_{i,j}^{(L)}( z)\right\|_2^2 \le \frac{11}{10} \cdot \frac{\sqrt{L-1}}{q} \cdot \sum_{i\in[d_1]}\sum_{j \in [d_2]} \sqrt{N^{(L)}_{i,j}(z)},
	\end{split} \]
	Therefore, by union bound we find that, with probability at least $1 - O(\delta)$:
	\[\begin{split}
		&\left| \left< \Psi_{cntk}^{(L)}(y) , \Psi_{cntk}^{(L)}(z) \right> - \frac1{d_1^2 d_2^2} \cdot {\sum_{i,i' \in [d_1]}\sum_{j,j' \in [d_2]} \left< \psi_{i,j}^{(L)}( y),  \psi_{i',j'}^{(L)}( z) \right>} \right|\\ 
		&\qquad \le O\left(\frac{\epsilon L}{q^2 \cdot d_1^2 d_2^2}\right) \cdot \sum_{i, i' \in[d_1]}\sum_{j, j' \in [d_2]} \sqrt{N^{(L)}_{i,j}(y)\cdot  N^{(L)}_{i',j'}(z)}.
	\end{split} \]
	Be combining the above with Lemma~\ref{lem:cntk-sketch-corr} using triangle inequality and union bound and also using \eqref{eq:dp-cntk-finalkernel}, the following holds with probability at least $1 - O(\delta)$:
	\begin{equation}\label{Psi-error-bound1}
		\left| \left< \Psi_{cntk}^{(L)}(y) , \Psi_{cntk}^{(L)}(z) \right> - \Theta_{cntk}^{(L)}(y,z) \right| \le \frac{\epsilon \cdot (L-1)}{9 q^2 \cdot d_1^2 d_2^2} \cdot \sum_{i, i' \in[d_1]}\sum_{j, j' \in [d_2]} \sqrt{N^{(L)}_{i,j}(y)\cdot  N^{(L)}_{i',j'}(z)}. 
	\end{equation}
	
	Now we prove that $\Theta_{cntk}^{(L)}(y,z) \ge \frac{L-1}{9q^2 d_1^2 d_2^2} \cdot \sum_{i,i'\in[d_1]}\sum_{j,j' \in [d_2]} \sqrt{N^{(L)}_{i,j}(y) \cdot N^{(L)}_{i',j'}(z)}$ for every $L \ge 2$.
	First note that, it follows from \eqref{eq:dp-cntk-covar-simplified} that $\Gamma_{i,j,i',j'}^{(1)}(y,z) \ge 0$ for any $i,i',j,j'$ because the function $\kappa_1$ is non-negative everywhere on $[-1,1]$. This also implies that $\Gamma_{i,j,i',j'}^{(2)}(y,z) \ge \frac{\sqrt{N^{(2)}_{i,j}(y) \cdot N^{(2)}_{i',j'}(z)}}{\pi \cdot q^2}$ because $\kappa_1(\alpha) \ge \frac{1}{\pi}$ for every $\alpha \in [0,1]$. Since, $\kappa_1(\cdot)$ is a monotone increasing function, by recursively using \eqref{eq:dp-cntk-norm-simplified} and \eqref{eq:dp-cntk-covar-simplified} along with Lemma~\ref{properties-gamma}, we can show that for every $h \ge 1$, the value of $\Gamma_{i,j,i',j'}^{(h)}(y,z)$ is lower bounded by $\frac{\sqrt{N^{(h)}_{i,j}(y) \cdot N^{(h)}_{i',j'}(z)}}{q^2} \cdot \Sigma_{relu}^{(h)}(-1)$, where $\Sigma_{relu}^{(h)}:[-1,1] \to \RR$ is the function defined in \eqref{eq:dp-covar-relu}.

	Furthermore, it follows from \eqref{eq:dp-cntk-derivative-covar-simplified} that $\dot{\Gamma}_{i,j,i',j'}^{(1)}(y,z) \ge 0$ for any $i,i',j,j'$ because the function $\kappa_0$ is non-negative everywhere on $[-1,1]$. Additionally, $\dot{\Gamma}_{i,j,i',j'}^{(2)}(y,z) \ge \frac{1}{2 q^2}$ because $\kappa_0(\alpha) \ge \frac{1}{2}$ for every $\alpha \in [0,1]$. By using the inequality $\Gamma_{i,j,i',j'}^{(h)}(y,z) \ge \frac{\sqrt{N^{(h)}_{i,j}(y) \cdot N^{(h)}_{i',j'}(z)}}{q^2} \cdot \Sigma_{relu}^{(h)}(-1)$ that we proved above along with the fact that $\kappa_0(\cdot)$ is a monotone increasing function and recursively using  \eqref{eq:dp-cntk-derivative-covar-simplified} and Lemma~\ref{properties-gamma}, it follows that for every $h \ge 1$, we have $\dot{\Gamma}_{i,j,i',j'}^{(h)}(y,z) \ge \frac{1}{q^2} \cdot \dot{\Sigma}_{relu}^{(h)}(-1)$. 
	
	By using these inequalities and Definition of $\Pi^{(h)}$ in \eqref{eq:dp-cntk} together with \eqref{eq:dp-cntk-norm-simplified}, recursively, it follows that, for every $i,i',j,j'$ and $h = 2,\ldots L-1$:
	\begin{align*}
		\Pi_{i,j,i',j'}^{(h)}(y,z) \ge \frac{h}{4} \cdot \sqrt{N_{i,j}^{(h+1)}(y) \cdot N_{i',j'}^{(h+1)}(z)},
	\end{align*}
	Therefore, using this inequality and \eqref{eq:dp-cntk-last-layer} we have that for every $L\ge2$:
	\begin{align*}
		\Pi_{i,j,i',j'}^{(L)}(y,z) &\ge \frac{L-1}{4} \cdot \sqrt{N_{i,j}^{(L)}(y) \cdot N_{i',j'}^{(L)}(z)} \cdot \frac{\dot{\Sigma}_{relu}^{(L)}(-1)}{q^2}\\
		&\ge \frac{L-1}{9q^2} \cdot \sqrt{N_{i,j}^{(L)}(y) \cdot N_{i',j'}^{(L)}(z)}.
	\end{align*}
	Now using this inequality and \eqref{eq:dp-cntk-finalkernel}, the following holds for every $L\ge2$:
	\[ \Theta_{cntk}^{(L)}(y,z) \ge \frac{L-1}{9q^2 d_1^2 d_2^2} \cdot \sum_{i,i'\in[d_1]}\sum_{j,j' \in [d_2]} \sqrt{N^{(L)}_{i,j}(y) \cdot N^{(L)}_{i',j'}(z)}. \]
	Therefore, by incorporating the above into \eqref{Psi-error-bound1} we get that,
	\[ \Pr \left[ \left| \left< \Psi_{cntk}^{(L)}(y) , \Psi_{cntk}^{(L)}(z) \right> - \Theta_{cntk}^{(L)}(y,z) \right| \le \epsilon \cdot \Theta_{cntk}^{(L)}(y,z) \right] \ge 1 - \delta. \]
	
	{\bf Runtime:} By Lemma~\ref{thm:cntk-sketch-runtime}, time to compute the CNTK Sketch is $O\left( \frac{ L^{11}}{\epsilon^{6.7}} \cdot (d_1d_2) \cdot \log^3 \frac{d_1d_2L}{\epsilon\delta} \right)$.
\end{proofof}


\end{document}